
\documentclass{article}

\usepackage{microtype}
\usepackage{graphicx}
\usepackage{subfigure}
\usepackage{booktabs} 
\usepackage{amssymb,amsmath, amsthm}
\usepackage{epstopdf}
\usepackage{hyperref}
\usepackage{algorithm}
\usepackage{algorithmic}

\numberwithin{equation}{section}

\usepackage[dvipsnames]{xcolor}

\newcommand{\ip}[1] {\langle #1 \rangle }
\newcommand{\norm}[1] {\left \| #1 \right \|}
\newcommand{\inclu}[0] {\ar@{^{(}->}}

\newcommand{\RR}{\mathbb{R}}

\newcommand{\R}{{\bf R}}
\newcommand{\EE}{{\mathbb E}}

\newcommand{\cV}{\mathcal{V}}

\newcommand{\cL}{\mathcal{L}}

\newcommand{\cO}{\mathcal{O}}

\newcommand{\cS}{\mathcal{S}}
\newcommand{\cA}{\mathcal{A}}
\newcommand{\cP}{\mathcal{P}}

\newcommand{\argmin}{\operatornamewithlimits{argmin}}


\providecommand{\Ex}[1]{\mathbb{E}\left[#1\right]}



\newtheorem{theorem}{Theorem}[section]
\newtheorem{proposition}[theorem]{Proposition}
\newtheorem{lemma}[theorem]{Lemma}

\newtheorem{corollary}[theorem]{Corollary}
\newtheorem{assumption}[theorem]{Assumption}

\newtheorem{example}{Example}[section]

\usepackage{mathtools}

\usepackage{jmlr2e} 


\begin{document}

\jmlrheading{1}{2020}{pp}{mm/dd}{mm/dd}{Junyu Zhang, Amrit Singh Bedi, Alec Koppel, and Mengdi Wang}


\ShortHeadings{Distributional Risk in the Dual Domain}{Zhang, Bedi, Koppel, and Wang}
\firstpageno{1}

\title{Cautious Reinforcement Learning \\ via Distributional Risk in the Dual Domain}

\author{\name Junyu Zhang\thanks{Denotes equal contribution.}  \email zhan4393@umn.edu\\
\addr Department of Industrial and Systems Engineering\\
 University of Minnesota\\
  Minneapolis, Minnesota, 55455
       \AND
       \name Amrit Singh Bedi\footnotemark[1] \email amrit0714@gmail.com\\
       \addr Computational and Information Sciences Directorate\\
        US Army Research Laboratory \\
         Adelphi, MD, USA 20783
              \AND
        \name Mengdi Wang \email mengdiw@princeton.edu  \\
       \addr Department of Electrical Engineering \\
       Center for Statistics and Machine Learning\\
       Princeton University\\
       Princeton, NJ 08544
       \AND
       \name Alec Koppel \email alec.e.koppel.civ@mail.mil\\
       \addr Computational and Information Sciences Directorate\\
        US Army Research Laboratory \\
         Adelphi, MD 20783 }


\editor{}

\maketitle

\begin{abstract} We study the estimation of risk-sensitive policies in reinforcement learning problems defined by a Markov Decision Process (MDPs) whose state and action spaces are countably finite. Prior efforts are predominately afflicted by computational challenges associated with the fact that risk-sensitive MDPs are time-inconsistent. To ameliorate this issue, we propose a new definition of risk, which we call \emph{caution}, as a penalty function added to the dual objective of the linear programming (LP) formulation of reinforcement learning.  {The caution measures the distributional risk of a policy, which is a function of the policy's long-term state occupancy distribution.} To solve this problem in an online model-free manner, we propose a stochastic variant of primal-dual method that uses Kullback-Lieber (KL) divergence as its proximal term. We establish that the number of iterations/samples required to attain approximately optimal solutions of this scheme matches tight dependencies on the cardinality of the state and action spaces, but differs in its dependence on the infinity norm of the gradient of the risk measure. Experiments demonstrate the merits of this approach for improving the reliability of reward accumulation without additional computational burdens.

\end{abstract}

\section{Introduction}\label{Intro}
In reinforcement learning (RL) \citep{sutton2018reinforcement}, an autonomous agent in a given state selects an action and then transitions to a new state randomly depending on its current state and action, at which point the environment reveals a reward. This framework for sequential decision making has gained traction in recent years due to its ability to effectively describe problems where the long-term merit of decisions does not have an analytical form and is instead observed only in increments, as in recommender systems \citep{karatzoglou2013learning}, videogames \citep{mnih2013playing,vinyals2019alphastar}, control amidst complicated physics \citep{schulman2015high}, and management applications \citep{peidro2009quantitative}. 

The canonical performance metric for RL is the expected value of long-term accumulation of rewards. Unfortunately, restricting focus to expected returns fails to encapsulate many well-documented aspects of reasoning under uncertainty such as anticipation \citep{roca2011identifying}, inattention \citep{sims2003implications}, and risk-aversion \citep{tom2007neural}. In this work, we focus on risk beyond expected rewards, both due to its inherent value in behavioral science and in pursuit of improving the reliability of RL in safety-critical applications \citep{achiam2017constrained}.

Risk-awareness broadens the focus of decision making from expected outcomes to other quantifiers of uncertainty. Risk, originally quantified using the variance in portfolio management  \citep{markowitz1952portfolio}, has broaden to higher-order moments or quantiles \citep{rockafellar2002conditional}, and gave rise to a rich theory of coherent risk \citep{artzner1999coherent}, which has gained attention in RL in recent years \citep{chow2017risk,jiang2018risk} as a frequentist way to define uncertainty-aware decision-making. 

Incorporating risk gives rise to computational challenges in RL. In particular, if one replaces the expectation in the value function by a risk measure, the MDP becomes \emph{time-inconsistent} \citep{bjork2010general}, that is, Bellman's principle of optimality does not hold. This issue has necessitated modified Bellman equations \citep{ruszczynski2010risk}, multi-stage schemes \citep{jiang2018risk}, or policy search \citep{tamar2015policy}, all of which do not attain near-optimal solutions in polynomial time, even for finite MDPs.
Alternatively, one may impose risk as a probabilistic constraint \citep{krishnamurthy2003implementation,prashanth2014policy,chow2017risk,paternain2019learning,yu2019convergent}
 in the spirit of chance-constrained programming \citep{nemirovski2007convex} 
 common in model predictive control.  

 An additional approach is Bayesian \citep{ghavamzadeh2015bayesian} and distributional RL \citep{bellemare2017distributional}, which seeks to track a full posterior over returns. These approaches benefit from the fact that with access to a full distribution, one may define risk specifically, with, e.g., conditional value at risk (CVaR)\citep{keramati2019}.
  One limitation is that succinctly parameterizing the value distribution intersects with approximate Bayesian computation, an active area of research \citep{yang2019fully}. 

 In this paper, we seek to define risk in sequential decision making that (1) provides a tunable tradeoff between the mean return and uncertainty of a decision; (2) captures long-term behaviors of policies that cannot be modeled using cumulative functions; (3) can be solved efficiently in polynomial time, depending on the choice of risk.
 To do so, we formulate a class of distributional risk-averse policy optimization problems to address risks involving the long-term behaviors that permit the derivation of efficient algorithms. More specifically, we:
  \begin{itemize}
  \item  propose a new definition of the risk of a policy, which we call \emph{caution}, as a function of the policy's long-term state-action occupancy distribution. We formulate a caution-sensitive policy optimization problem by adding the caution risk as a penalty function  to the dual objective of the linear programming (LP) formulation of RL. The caution-sensitive optimization problem is often convex, allowing us to directly design the policy's long-term occupancy distribution (Sec. \ref{sec:risk}).
  \item derive an online model-free algorithm based on a stochastic variant of primal-dual policy gradient method that uses Kullback-Lieber (KL) divergence as its proximal term, and extend the method to nonconvex caution risks by using a block coordinate ascent (BCA) scheme (Sec. \ref{sec:algorithm}). 
  \item establish that the number of sample transitions required to attain approximately optimal solutions of this scheme matches tight dependencies on the cardinality of the state and action spaces, as compared to the typical risk-neutral setting (Sec. \ref{sec:convergence}). 
  \end{itemize}
Further, we demonstrate the experimental merits of this approach for improving the reliability of reward accumulation without additional computational burdens (Sec. \ref{sec:experiments})

\section{Preliminaries}

\subsection{Discounted Markov Decision Process}
We consider the problem of reinforcement learning (RL) with finitely many states and actions as mathematically described by a Markov Decision Process (MDP) $ (\cS,\cA,\cP, r, \gamma)$. For each state $i\in \cS$, a transition to state $j\in \cS$ occurs when selecting action $a\in\cA$ according to a conditional probability distribution $j\sim \cP(\cdot | a, i )$, for which we define the short-hand notation $P_{a}(i,j)$. Moreover, a reward $\hat r:\cS \times \cS \times \cA \mapsto \RR$ is revealed and is denoted as $\hat r_{ija}$. Without loss of generality, we assume $\hat r_{ija}\in[0,1]$ with probability 1 for  $\forall i,j\in\cS$ and $\forall a\in\cA$ throughout the paper. For future reference, we denote the expected reward with respect to transition dynamics as $r_{ia}:=\EE\left[\hat r_{ija}|i,a\right] = \sum\limits_{j\in\cS}P_{a}(i,j)\cdot \hat r_{ija}$ and the vector of rewards for each action $a$ as $r_a = [r_{1a},\cdots,r_{|\cS|a}]^T\in\RR^{|\cS|}$.

 In standard (risk-neutral) RL, the goal is to find the action sequence which yields the most long-term reward, or value:
\begin{equation}
\label{eq:value}
v^*(s):=\max_{\{a_t \in\mathcal{A}\} }\mathbb{E}\left[\sum_{t=0}^\infty \gamma^t \hat r_{i_t i_{t+1} a_t}~\bigg|~ i_0 = s\right], \quad\forall s\in\cS.
\end{equation}

\subsection{Bellman Equation and Duality}
The optimal value function $v^*$  \eqref{eq:value} satisfies Bellman's optimality principle \cite{bertsekas2004stochastic}:
\begin{align}\label{main}
\!\!v^*(i)\!=\!\max_{a\in\cA}\Big\{\gamma\!\sum\limits_{j\in\cS}\!P_{a}(i,j)v^*(i)\!+\!\sum\limits_{j\in\cS}P_{a}(i,j)\hat r_{ija}\!\Big\} 
\end{align}
%
 for all $i \in\cS$. Then, due to \cite{de2003linear}, the Bellman optimality equation \eqref{main} may be reformulated as a linear program (LP)
\begin{eqnarray}\label{primal}
&\min_{v\geq 0} &  \langle \xi,v\rangle\\
&\text{s.t.} 
&(I-\gamma P_a)v-r_a \geq 0, \ \ \forall a\in\cA\nonumber
\end{eqnarray}
where $\xi$ is an arbitrary positive vector. The dual of \eqref{primal} is given as
\begin{eqnarray}\label{dual}
&\max_{\lambda\geq 0} & \sum\limits_{a\in\cA}\langle\lambda_a,r_a\rangle 
\\
&\text{s.t.} &\sum\limits_{a\in\cA}(I-\gamma P_a^\top)\lambda_a=\xi, \ \ \,\,\forall a\in\cA\nonumber
\end{eqnarray}
where $\lambda_a = [\lambda_{1a},\cdots,\lambda_{|\cS|a}]^\top\in\RR^{|\cA|}$ is the $a$-th column of $\lambda$. Essential to the subsequent development is the fact that $\lambda$ is an unnormalized state-action occupancy measure and 
$$\sum_{a\in\cA}\langle\lambda_a,r_a\rangle = \mathbb{E}\left[\sum_{t=0}^\infty \gamma^t r_{i_t i_{t+1} a_t}~\bigg|~ i_0 \sim \xi, a_t\sim \pi(\cdot|i_t)\right]$$ when $\xi$ belongs to the probability simplex. Moreover, one can recover the policy parameters through normalization of the dual variable as 
$\pi(a|s) = {\lambda_{sa}}/{\sum_{a'\in\cA}\lambda_{sa'}} $ for all $ a\in\cA$ and $ s\in\cS$, as detailed in Proposition \ref{proposition:dual-meaning}.

\section{Caution-Sensitive Policy Optimization}\label{sec:risk}


In this work, we prioritize definitions of risk in MDPs that capture long-term behavior of the policy and permit the derivation of computationally efficient algorithms. We focus on optimizing the policy's long-run behaviors that cannot be described by any cumulative sum of rewards, for examples the peak risk and variance. 

\subsection{Problem Formulation}
We focus on directly designing the long-term state-action occupancy distribution, whose unnormalized version is the dual variable $\lambda:=\{\lambda_a\}_{a\in\cA}$. Rather than only maximizing the expected cumulative return, i.e., the typical objective in risk-neutral MDP (e.g., \eqref{dual}), we seek policies that incorporate risk functions concerning the full distribution $\lambda$.  

We propose a non-standard notion of risk: in standard definitions, such as those previously mentioned, they are typically risk measures of the cumulative rewards; by contrast, here we augment the risk to be defined over the \emph{long-term state-action occupancy distributions}, which we dub \emph{caution} measures. Specifically, denote as $\rho(\lambda)$ a caution function that takes as input dual variables $\lambda$ (unnormalized state-action distributions) feasible to \eqref{dual} and maps to the reals $\RR$. The caution risk measures the fitness of the entire state path, rather than just a cumulative sum over the path. 

In pursuit of computationally efficient solutions, we hone in on properties of the dual LP formulation of RL. The caution-sensitive variant of  \eqref{dual} then takes the form:
\begin{eqnarray}
\label{prob:dual}
&\max_{\lambda\geq 0} & \ip{\lambda,r}-c\rho(\lambda)\nonumber 
\\
&\text{s.t.} & \sum\limits_{a\in\cA}(I-\gamma P_a^\top) \lambda_a=\xi, \ \ \ \ \ \  \ \ \ \\
& & \|\lambda\|_1 = (1-\gamma)^{-1},\nonumber
\end{eqnarray}
where $c$ is a positive penalty parameter and we take $\xi $ to be the vector of uniform distribution without loss of generality, i.e., $\xi = \frac{1}{|\cS||\cA|}\cdot\mathbf{1}$; and $\|\lambda\|_1:=\sum_{s,a}|\lambda_{sa}|$. The constraints require that $\lambda$ be the unnormalized state-action distribution corresponding to {\it some} policy. The last constraint is implied by $\sum\limits_{a\in\cA}(I-\gamma P_a^\top) \lambda_a=\xi$, but we include it for clarity. When $\rho$ is convex, problem \eqref{prob:dual} is a convex optimization problem that facilitates computationally efficient solutions. 

Denote by $\lambda^*$ the optimal solution to the cautious policy optimization problem \eqref{prob:dual}. This $\lambda^*$ gives the optimal long-term state-action occupancy distribution under the caution risk. 
Let $\pi^*$ be the mixed policy given by 
$$\pi^*(a|s) = \frac{\lambda^*(s,a)}{\sum_{a'} \lambda^*(s,a')} .$$ 
We call this $\pi^*$ the {\it optimal caution-sensitive policy.}  
We remark that with the introduction of the risk measure into the dual form \eqref{prob:dual}, the corresponding primal is no longer the LP problem \eqref{primal} but changes to one that incorporates risk. The optimal caution-sensitive policy $\pi^*$ differs from the optimal policy in the typical risk-neutral setting. Since the LP structure is lost, the optimal risk-sensitive policy $\pi^*$ is not guaranteed to be deterministic.
Moreover, the Lagrangian multipliers, denoted by $v^*$, for the risk-sensitive problem \eqref{prob:dual} is no longer the risk-neutral value vector, meaning that we are solving a \emph{different problem} than \eqref{eq:value}. Indeed, by defining caution in this way, we incorporate long-term distributional risk into the dual domain of Bellman equation, while sidestepping the computational challenges of time-inconsistency.

\subsection{Examples of Caution Risk}
Next, we discuss several examples of the caution risk $\rho$ to clarify the problem setting \eqref{prob:dual}. 

\begin{example}[{\bf Barrier risks}]\label{eg:barrier}\rm
Caution risk can take the form of barriers to guarantee that a policy's long-term behavior meets certain expectations. Two examples follow:

$\bullet$ {\it Staying in safety set.}
Suppose we want to keep the state trajectory within a safety set $\bar S \subset\mathcal{S}$ for more than $1-\delta$ fraction of all time. In light of the typical barrier risk used in constrained optimization, we define 
$$\rho(\lambda) = - \log \left( \lambda(\bar S) - (1-\delta) \right),$$
where $\lambda(\bar S) = (1-\gamma)\sum_{s,a} \lambda(s,a) \mathbf{1}_{s\in \bar S}.$ Since $\lambda(\bar S)$ is linear in $\lambda$, we can verify that the log barrier risk $\rho$ is convex.

$\bullet$ {\it Meeting multiple job requirements.}
Further, suppose there are multiple tasks with strict requirements on their expected returns $\langle \lambda, r_j\rangle \geq b_j$, $j=1,\ldots,m$. One can transform these return constraints into a log barrier given by
$$\rho(\lambda) = - \sum_{j=1}^m \log \left( \langle \lambda, r_j\rangle - b_j  \right).$$
In this way, the optimal caution-sensitive policy will meet all the job requirements for large enough penalty $c$.
\end{example}

\begin{example}[{\bf Peak risk}]\rm\label{eg:peak}
Let $f_1,\ldots,f_m$ be risk functions of $\lambda$. Consider the peak risk defined as
$$\rho(\lambda) = \sup_{j\in[m]} f_j(\lambda).$$ 

$\bullet$ {\it Worst-case exposure to danger areas.}
For example, let $S_1,\ldots S_m$ be known ``danger" sets. If we let $f_j (\lambda) = \lambda(S_j)$ quantify the long-term exposure to $S_j$, the peak risk $\rho$ measures the long-run exposure to the most acute danger. 

$\bullet$ {\it Worst-case multitask performance.}
For another example,  suppose there are $m$ different tasks defined in the same environment with reward functions $r_1,\ldots, r_m$. Let $f_j(\lambda) = -\langle\lambda, r_j\rangle$ be the negative cumulative return for task $j$, and an agent has to do well in all the tasks and be evaluated based on her worst performance. Then the objective is a peak risk:
$$-\rho(\lambda) = \sup_{j\in[m]} -\langle\lambda, r_j\rangle .$$ 
In the preceding examples, $f_j$'s are linear, therefore $\rho$ is always convex. 
\end{example}

\begin{example}[{\bf Variance risk}]\normalfont\label{eg:variance}
In finance applications, one canonical risk concern is the variance of return. To formulate risk as variance, we first note that $\lambda$ is an unnormalized distribution, whose normalized counterpart is denoted as $\hat \lambda := (1-\gamma)\lambda$. Then it holds that $\ip{\hat \lambda, r}$ is the expected reward accumulation. Then, the variance of return per timestep takes the form
	\begin{align}
	\!\!\!\!\!\rho(\lambda)= Var(\hat r_{ss'a}|\lambda)=\EE^{\hat\lambda}\left[  \big(\EE^{\hat\lambda}\left[ \hat r_{ss'a}\right]-\hat r_{ss'a}\big)^2\right]
	\end{align} 
	where $\EE^{\hat\lambda}:=\EE_{(s,a,s')\sim\hat\lambda\times\cP(\cdot|a,s)}$. For ease of notation, denote $R\in\RR^{|\cS|\times|\cA|}$ with $R (s,a) = \EE_{s'\sim\cP(\cdot|a,s)}[\hat r_{ss'a}^2]$. Substituting in these definitions, we may write
	\begin{eqnarray}
	\label{vari_risk}
	\rho(\lambda) = \langle \hat\lambda,R\rangle - \langle\hat\lambda,r\rangle^2,
	\end{eqnarray}
	which is a quadratic function of the variable $\lambda$. Note that the variance risk $\rho(\lambda)$ is non-convex with respect to $\lambda$. 
	\end{example}

\begin{example}[{\bf Divergence for incorporating priors}]\label{eg:KL}\normalfont
Often in applications, we have access to demonstrations, which can be used to learn a prior on the policy for ensuring baseline performance. Let $\bar\lambda$ be a prior state-action distribution learned from demonstrations. Maintaining baseline performance with respect to this prior, or demonstration distribution, then can be encoded as the Kullback-Liebler (KL) divergence between the normalized distribution $\hat\lambda = (1-\gamma)\lambda$ and the prior $\bar\lambda $ stated as 
	\begin{align}\label{risk_not_r}
	\rho(\lambda) =  \hbox{KL}\left((1-\gamma){\lambda} || \bar\lambda \right)
	\end{align}
	which is substituted into \eqref{prob:dual} to obtain a framework for efficiently incorporating a baseline policy. In some scenarios, existing demonstrations are only state trajectories without revealing the actions taken. Then one may estimate the long-term state-only distribution $\mu$ and define the risk as 
$$		\rho(\lambda) =  \hbox{KL}\left( (1-\gamma) \sum_a {\lambda}_a || \mu \right),$$
which measures the divergence between the marginalized state occupancy distribution and the prior.
In addition to KL, one can also use other convex distances such as Wasserstein, total variation, or even a simple quadratic. 
\end{example}

\section{Stochastic Primal-Dual Policy Gradient}\label{sec:algorithm}
We shift focus to developing an algorithmic solution to the caution-sensitive policy optimization problem \eqref{prob:dual}. While the problem upon first glance appears deterministic, the transition matrices $P_a$ are a priori unknown and we assume the presence of a generative model. Such a generative model is fairly common in control/RL applications where a system simulator is available. For a given state action pair $(s,a)$, the generative model provides the next state $s'$ and the stochastic reward $\hat r_{ss'a}$ according to the unknown transition dynamics.  

Thus, we propose methodologies based on Lagrangian duality together with stochastic approximation. Given the convexity of $\rho$, by virtue of duality, \eqref{prob:dual} admits an equivalent formulation as a saddle point problem:
\begin{equation}
\label{lagrangian_risk1}
\max_{\lambda \in \mathcal{L}} \min_{v\in\mathcal{V}}  L\!\left(\!v,\!\lambda\right)\!=\!\ip{\lambda,\!r}-c\rho(\lambda)+\ip{\xi,\!v}+\!\sum\limits_{a\in\cA}\!\lambda_a^\top\!(\!\gamma P_a-\!I)v,
\end{equation}
where $\mathcal{V}$ should be $\mathbb{R}^{|\cS|}$ in principle. However,  we can later on find a large enough compact set to replace the whole space without loss of optimality. By choosing $\xi$ to satisfy $\xi\geq0$ and $\|\xi\|_1 = 1$, we define the dual feasible set $\mathcal{L}$ as
\begin{equation}
\label{defn:lambda-feasible-region}
\cL:=\{\lambda: \lambda\geq0, \|\lambda\|_1 = (1-\gamma)^{-1}\}.
\end{equation} 
Given distribution $\zeta$ over $\cS\times\cA$, define the stochastic approximation of the risk-neutral component of the Lagrangian:
\begin{equation}
	\label{defn:rand-obj}
	L_{(s,a,s'),\bar{s}}^\zeta (v,\lambda): =
	v_{\bar{s}} + \mathbf{1}_{\{\zeta_{sa}>0\}}\cdot\frac{\lambda_{sa}(\hat r_{ss'a}+\gamma v_{s'} - v_s)}{\zeta_{sa}}，
\end{equation}
where $\bar{s}\sim\cP(\xi)$ is a sample from the discrete distribution defined by probability vector $\xi$. Then by direct computation, when the support of $\zeta$ contains that of $\lambda$, i.e., $\text{supp}(\lambda)\subset\text{supp}(\zeta)$, we may write
\begin{align}\label{eq:lagrangian_expectation}
L(v,\lambda) \!=\! \EE_{(s,a,s')\sim\zeta\times \cP(\cdot|a,s),\bar{s}\sim\xi}\!\!\left[\!L_{(s,a,s'),\bar{s}}^\zeta (v,\lambda)\!\right]\!\!-\!c\rho(\lambda).
\end{align} 
Thus, we view \eqref{lagrangian_risk1} as a stochastic saddle point problem.

\begin{algorithm}[t] 
	\caption{Stochastic Risk-Averse (Cautious) RL}
	\label{algo_one}
	{\bf Input}: Sample size $T$. Parameter $\xi \!=\! \frac{1}{|\cS|}\!\cdot\!\mathbf{1}.$ Stepsizes $\alpha,\beta\!\!>\!\!0$.\! Discount $\gamma\!\in\!(0,1\!)$. Constants $M_1,\!M_2>\!0$, $\delta\!\in\!(0,1\!)$.\\
	{\bf Initialize}: Arbitrary $v^1\in\cV$ and $\lambda^1:=\frac{1}{|\cS||\cA|(1-\gamma)}\cdot\mathbf{1}\in\cL.$\\
	 \textbf{for} {$t=1,2,\cdots,T$}{\\
		     Set $\zeta^t: = (1-\delta)(1-\gamma)\lambda^t + \frac{\delta}{|\cS||\cA|}\mathbf{1}$.\\
		    Sample $(s_t,a_t)\sim\zeta^t$ and  $\bar{s}_t\sim\xi$.\\
		    Generate $s'_t\sim \cP(\cdot|a_t,s_t)$ \& $\hat{r}_{s_ts_t'a_t}$ from generative model.\\
		    Construct  $\hat{\nabla }_v L(\!v^t\!\!,\lambda^t)$ [cf. \eqref{defn:sto-grad-v}] and $\hat{\partial }_\lambda L(\!v^t\!\!,\lambda^t)$ [cf. \eqref{defn:sto-grad-lam}] . \\
			Update $v$ and $\lambda$ as   
			\begin{equation}
			\label{defn:v-update}
			v^{t+1} = \Pi_\cV(v^t - \alpha \hat{\nabla }_v L(v^t,\lambda^t))
			\end{equation}
			and 
			\begin{align}
			\label{defn:lam-update-1}
			\!\!\!\!\!\!\lambda^{t+\!\frac{1}{2}} \!=&\underset{\lambda}{\operatorname{argmax}}\,\, \! \!\langle\hat{\partial}_\lambda L(v^t\!\!,\!\lambda^t),\!\lambda\!-\!\lambda^t\rangle\!
			\\
			&\quad -\! \frac{1}{(1\!-\!\gamma)\beta}KL\big(\!(1-\gamma)\lambda\,\!||\!\,(1-\gamma)\lambda^t\big).\nonumber\\
		   \lambda^{t+1} =& \frac{\lambda^{t+\frac{1}{2}}}{(1-\gamma)\|\lambda^{t+\frac{1}{2}}\|_1}\label{defn:lam-update-2}.
		   \end{align}		   	
		}\\
		{\bf Output:} $\bar \lambda: = \frac{1}{T}\sum_{t=1}^{T}\lambda^t$ and $\bar v: = \frac{1}{T}\sum_{t=1}^{T}v^t$.
\end{algorithm}
We propose variants of stochastic primal-dual method applied to \eqref{lagrangian_risk1}. To obtain the primal descent direction, we note that if $\zeta$ is chosen such that $\text{supp}(\lambda)\subset\text{supp}(\zeta)$, an unbiased estimator of the gradient of $L$ w.r.t. $v\in\mathcal{V}$ is
\begin{eqnarray}
\label{defn:sto-grad-v} 
\hat{\nabla }_v L(v,\lambda) & := & \nabla_v L_{(s,a,s'),\bar{s}}^\zeta (v,\lambda)\\
& = &  \mathbf{e}_{\bar{s}} + \mathbf{1}_{\{\zeta_{sa}>0\}}\cdot\frac{\lambda_{sa}}{\zeta_{sa}}(\gamma \mathbf{e}_{s'} - \mathbf{e}_s),\nonumber
\end{eqnarray}
where $\mathbf{e}_{s}\in\mathbb{R}^{|\cS|}$ is a column vector with only the $s$-th entry equaling to 1 and all other entries being 0. Moreover, a dual subgradient of the instantaneous Lagrangian is given as
\begin{eqnarray}
\label{defn:sto-grad-lam} 
\hat{\partial}_\lambda L(v,\lambda) & := & \!\!\!   \mathbf{1}_{\{\zeta_{sa}>0\}}\cdot\frac{\hat r_{ss'a}+\gamma v_{s'} - v_s-M_1}{\zeta_{sa}}\cdot\mathbf{E}_{s,a}\nonumber
\\& \quad & - c\hat{\partial}\rho(\lambda) - M_2\cdot\mathbf{1},
\end{eqnarray}
where $\mathbf{E}_{s,a}\in\mathbb{R}^{|\cS|\times|\cA|}$ is a matrix with $(s,a)$-th entry equal to 1 and all other entries equal to 0.  $\hat{\partial}\rho(\lambda)$ is an unbiased subgradient estimate of the convex but possibly non-smooth function $\rho$, i.e. $\EE[\hat{\partial}\rho(\lambda)]\in\partial \rho(\lambda)$.  In \eqref{defn:sto-grad-lam}, $M_1$ and $M_2$ are the ``shift" parameters specified in Theorem \ref{theorem:convergence} by the convergence analysis in Section \ref{sec:convergence}.
%
 Note that since the function $\rho$ is often known in practice, a full subgradient $u\in\partial\rho(\lambda)$ may be used instead of an instantaneous approximate $\hat{\partial}\rho(\lambda)$. With appropriately defined shift parameters $M_1,M_2$ in the subgradient estimator, if $\zeta>0$, then the dual subgradient is biased with a constant shift:
$$\EE[\hat{\partial}_\lambda L(v,\lambda)]  \in \partial_\lambda L(v,\lambda) - (M_1 + M_2)\cdot\mathbf{1}.$$

With these estimates for the primal gradient and dual subgradient of the Lagrangian \eqref{eq:lagrangian_expectation}, we propose executing primal-dual stochastic subgradient iteration \citep{chen2016stochastic,chen2018scalable} with the KL divergence in the dual domain. 
The detailed steps are summarized in Algorithm \ref{algo_one}. Employing KL divergence in defining the dual update permits us to leverage the structure of $\lambda$ as a distribution to derive tighter convergence rates, as detailed in Section \ref{sec:convergence}. 


Algorithm \ref{algo_one} provides a model-free method for learning cautious-optimal policies from transition samples. Each primal and dual update can be computed easily based on a single observation. Although Algorithm \ref{algo_one}  is given in the tabular form, its spirit of primal-dual stochastic approximation can be generalized to work with function approximations in the primal and dual spaces as the subject of future work.


\section{Convergence Analysis} \label{sec:convergence}
In this section, we establish the convergence of Algorithm \ref{algo_one} when the caution (risk) $\rho$ in \eqref{prob:dual} is convex in $\lambda$, after which we present extensions of Algorithm \ref{algo_one} to address the non-convex variance risk, with its associated convergence presented thereafter. We provide sample complexity results for finding near-optimal solutions whose dependence on the size of the state and action spaces is tight. 

 Before delving into these details, we state a technical condition on the caution function $\rho$ required for the subsequent analysis, which is that we have access to a first-order oracle providing noisy samples of its subgradient, and that the infinity norm of these samples is bounded. 

%
\begin{assumption}
	\label{assumption:SFO-rho} 
	The caution function $\rho(\lambda)$ is convex but possibly non-smooth, and it has bounded subgradients as
	\begin{align}\sup_{\lambda\in\cL}\sup_{u\in\partial \rho(\lambda)}\|u\|_\infty\leq 	\sigma <\infty. 
	\end{align}
	Further, samples $\hat{\partial}\rho(\lambda)$ of its subgradients  are unbiased and have finite infinity norm:
	\begin{align} 
	\EE[\hat{\partial}\rho(\lambda)] \in \partial \rho(\lambda) \; ,\quad 	\sup_{\lambda\in\cL}\|\hat{\partial}\rho(\lambda)\|_\infty\leq \sigma.
	\end{align}
\end{assumption} 
In our subsequent analysis, we treat $\sigma$ as a known constant. In all of Examples \ref{eg:barrier}-\ref{eg:KL}, the caution function $\rho$ is explicitly known, which yields $\hat{\partial}\rho(\lambda)\in\partial\rho(\lambda)$.  
For an instance, in Example \ref{eg:peak}, if we let $\rho(\lambda) = \sup_{j\in[m]} \langle c_j, \lambda\rangle $, then any subgradient is bounded by $|\hat \partial \rho(\lambda) | \leq \sup_j \|c_j\|_{\infty} = \mathcal{O}(1)$.
For another instance, in Example \ref{eg:KL}, $\rho(\lambda) = KL(\hat\lambda\,||\,\mu)$ for some fixed $\mu$ [cf. \eqref{risk_not_r}], the gradient takes the form
\begin{align}
|\nabla_{\lambda_{sa}}\rho(\lambda)|  &=  \left|(1-\gamma)\left(1+\log\left(\hat\lambda_{sa}/\mu_{sa}\right)\right)\right|\nonumber 
\end{align}
for any $s\in\cS$ and $a\in\cA$. Then, we can ensure Assumption \ref{assumption:SFO-rho} by imposing an elementwise lower bound $\delta_0$ on $\mu$ and $\lambda$ s.t. $\mu\geq\delta_0\cdot\mathbf{1}$ and $\lambda\geq\delta_0\cdot\mathbf{1}$. The constant $\delta_0$ may be chosen extremely small, for instance, $\delta_{0} = \min\{10^{-15}\!\!,|\cS|^{-1}|\cA|^{-1}\!\}\!$. Consequently, we have
$$\sigma \leq \cO\left((1-\gamma)\left(1+\log\left(\delta_0^{-1}\right)\right)\right) = \cO(1).$$
%


\subsection{The Case of Convex Caution Risk}\label{subsec:convex}

In this subsection, we characterize the performance of Algorithm \ref{algo_one} when the caution $\rho$ is convex. 
We begin by noting that the saddle point problem \eqref{lagrangian_risk1} does not specify the feasible region $\cV$ for the variable $v$. However, the convergence necessitates $\cV$ to be a compact set rather than the entire $\mathbb{R}^{|\cS|}$. To disambiguate the domain of $v$, next we derive a bounded region that contains the primal optimizer $v^*$.
\begin{lemma}
	\label{lemma:bound-V}
	If $\xi>0$, then the primal optimizer $v^*$ satisfies 
	\begin{align}
	\|v^*\|_\infty\leq (1-\gamma)^{-1}(1 + c\sigma).
	\end{align}
Therefore, we can define the feasible region $\cV$ to be the compact set
	\begin{align}
	\cV: = \left\{v\in\RR^{|\cS|}: \|v\|_\infty\leq2 \frac{1+c\sigma}{1-\gamma}\right\}.
	\end{align}
\end{lemma}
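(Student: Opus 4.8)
The plan is to bound $\|v^*\|_\infty$ by using the Lagrangian optimality conditions (the saddle point / KKT system) for problem \eqref{prob:dual}, together with the fact that $\lambda^*$ lies in the simplex-scaled set $\cL$ and $\rho$ has subgradients bounded by $\sigma$ in infinity norm. The key observation is that $v^*$ is the Lagrange multiplier for the equality constraint $\sum_a (I-\gamma P_a^\top)\lambda_a = \xi$, so it satisfies a stationarity condition in $\lambda$. Since $\cL$ only constrains $\lambda \geq 0$ and $\|\lambda\|_1 = (1-\gamma)^{-1}$, the stationarity condition coordinatewise reads: for each $(s,a)$ with $\lambda^*_{sa}>0$, the $(s,a)$-component of $r - c\,\partial\rho(\lambda^*) + (\gamma P_a - I)v^*$ equals a common constant (the multiplier for the $\ell_1$ normalization), and for $\lambda^*_{sa}=0$ it is $\leq$ that constant. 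Write this component out: $r_{sa} - c(\partial\rho)_{sa} + \gamma (P_a v^*)_s - v^*_s = \nu$ for active coordinates.

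First I would make this precise: since $\rho$ is convex and the only hard constraints at the $\lambda$-level are a linear equality and nonnegativity, and since by Lemma \ref{lemma:bound-V}'s hypothesis $\xi>0$ forces (by the dual feasibility $\sum_a(I-\gamma P_a^\top)\lambda^*_a = \xi$) that for every state $s$ there is at least one action $a$ with $\lambda^*_{sa}>0$ — actually more carefully, the row sums give $\sum_a \lambda^*_{sa} - \gamma\sum_a(P_a^\top\lambda^*_a)_s = \xi_s > 0$, so $\sum_a\lambda^*_{sa} > 0$ for each $s$. Hence for each $s$ there is an active action $a(s)$. Then from the stationarity identity at $(s,a(s))$ we get $v^*_s = r_{s,a(s)} - c(\partial\rho)_{s,a(s)} + \gamma (P_{a(s)} v^*)_s - \nu$. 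Taking absolute values and using $r_{sa}\in[0,1]$, $|(\partial\rho)_{sa}|\leq\sigma$, $|(P_{a}v^*)_s|\leq\|v^*\|_\infty$ (since $P_a$ is row-stochastic), we obtain $|v^*_s| \leq 1 + c\sigma + |\nu| + \gamma\|v^*\|_\infty$. The remaining issue is to control $|\nu|$; I would either argue $\nu$ can be absorbed (e.g., by noting the equality constraint for the original LP \eqref{dual} uses $v$ without the $\ell_1$ constraint having its own multiplier, or that $\nu$ can be shown to be $0$ or $\mathcal O(1)$), or — cleaner — drop the redundant $\|\lambda\|_1$ constraint entirely since the excerpt itself says it "is implied by $\sum_a(I-\gamma P_a^\top)\lambda_a=\xi$." With that constraint dropped there is no $\nu$, and stationarity gives directly $v^*_s = r_{s,a(s)} - c(\partial\rho)_{s,a(s)} + \gamma(P_{a(s)}v^*)_s$, whence $\|v^*\|_\infty \leq 1 + c\sigma + \gamma\|v^*\|_\infty$, i.e., $\|v^*\|_\infty \leq (1-\gamma)^{-1}(1+c\sigma)$.

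The second conclusion is then immediate: $2\frac{1+c\sigma}{1-\gamma} \geq (1-\gamma)^{-1}(1+c\sigma) \geq \|v^*\|_\infty$, so the stated $\cV$ contains $v^*$, and it is compact as an infinity-norm ball in $\RR^{|\cS|}$; replacing $\RR^{|\cS|}$ by $\cV$ in \eqref{lagrangian_risk1} does not change the optimal value or the min over $v$ since the unconstrained minimizer $v^*$ already lies in the interior of $\cV$.

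The main obstacle I anticipate is handling the multiplier $\nu$ of the redundant normalization constraint rigorously — making sure that either it genuinely doesn't appear (by working with the nonredundant formulation) or that it is bounded by an $\mathcal O(1)$ quantity. A related technical point needing care is the existence and characterization of the saddle point / the validity of strong duality and KKT for \eqref{prob:dual}: this needs $\rho$ convex (Assumption/hypothesis), a feasible point of the constraints (which exists, e.g. the uniform occupancy measure, giving a Slater-type condition for the affine constraints), and for barrier-type $\rho$ one must also ensure the problem value is finite and the subdifferential is nonempty at $\lambda^*$ — but since the statement presupposes that $v^*$ exists as "the primal optimizer," I would simply invoke that and the corresponding optimality conditions, keeping the argument short.
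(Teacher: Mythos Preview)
Your proposal is correct and follows essentially the same route as the paper: both drop the redundant $\|\lambda\|_1$ constraint, derive the KKT/saddle-point conditions for $\max_{\lambda\geq 0}\min_v L(v,\lambda)$, use $\xi>0$ together with the equality constraint to guarantee that every state $s$ has an active action $a(s)$ with $\lambda^*_{s,a(s)}>0$, and then read off from the stationarity equation at active coordinates the Bellman-type identity $v^*_s = r_{s,a(s)} - c\,u^*_{s,a(s)} + \gamma(P_{a(s)}v^*)_s$, from which the bound follows by row-stochasticity. The only cosmetic difference is that the paper collects these $|\cS|$ scalar equations into a matrix identity $(I-\gamma\tilde P^\top)v^* = \tilde r - c\tilde u$ and bounds via the triangle inequality on $\|\cdot\|_\infty$, whereas you take absolute values coordinatewise; your direct verification of $\sum_a\lambda^*_{sa}\geq\xi_s>0$ from the constraint is in fact slightly more elementary than the paper's appeal to the occupancy-measure interpretation.
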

The proof of Lemma \ref{lemma:bound-V} is provided in Appendix \ref{proof_lemma}.  We note that the factor of 2 is incorporated to simplify the analysis. 
%
%

Subsequently, we analyze the primal-dual convergence of  Algorithm \ref{algo_one} for solving \eqref{lagrangian_risk1} (and the equivalently \eqref{prob:dual}). Before providing the main theorem, we introduce a technical result which defines convergence in terms of a form of duality gap. The duality gap measures the distance of the Lagrangian evaluations to a saddle point as defined by \eqref{lagrangian_risk1}.
\begin{theorem}[{\bf Convergence of duality gap}]
	\label{theorem:convergence}
	For Algorithm \ref{algo_one}, select shift parameters $M_1 = \frac{4(1 + c\sigma)}{1-\gamma}$ and $M_2 = c\sigma$, $\delta \in (0,\frac{1}{2})$, $\beta=\frac{1-\gamma}{1+c \sigma} \sqrt{\frac{\log (|\mathcal{S}||\mathcal{A}|)}{T|\mathcal{S}||\mathcal{A}|}}$, and $\alpha=\sqrt{\frac{|\mathcal{S}|}{T}}\left(1+c \sigma\right)$. Let $\bar \lambda$ and $\bar v$ be the output of Algorithm \ref{algo_one} and let $\lambda^*$ be the optimum. Then for the output of Algorithm \ref{algo_one}, 
	%
	we have 
	\begin{align}\label{eq:sample_complexity}
	 \EE[ L(\bar v,& \lambda^*) - \min_{v\in\cV}L(v, \bar \lambda)]
	 \\
	&\leq  \cO\left(\sqrt{\frac{|\cS||\cA|\log(|\cS||\cA|)}{T}}\cdot\frac{1+2c\sigma}{(1-\gamma)^2}\right).\nonumber
	\end{align}
	As a result, to guarantee $\EE[ L(\bar v,\lambda^*) - \min_{v\in\cV}L(v, \bar \lambda)]\leq \epsilon$, the amount of samples needed is 
	\begin{equation}\label{eq:sample_complexity_convex}
	T = \Theta\left(\frac{|\cS||\cA|\log(|\cS||\cA|)(1+2c\sigma)^2}{(1-\gamma)^4\epsilon^2}\right).
	\end{equation}
\end{theorem}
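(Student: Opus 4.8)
The plan is to reduce the analysis to a standard stochastic mirror-descent / projected-subgradient telescoping argument on the saddle-point problem \eqref{lagrangian_risk1}, carefully tracking how the geometry of the two blocks contributes to the bound. First I would record, for every iteration $t$, the standard one-step inequalities for the two updates. For the dual $\lambda$-block, the update \eqref{defn:lam-update-1}--\eqref{defn:lam-update-2} is a mirror-ascent step with the (negative) entropy mirror map on the scaled simplex $\{(1-\gamma)\lambda : \lambda \in \cL\}$, so the three-point / generalized-Pythagoras identity for the KL Bregman divergence gives, for any fixed competitor $\lambda$ (in particular $\lambda = \lambda^*$),
\begin{align}\label{eq:dual-step}
\la \hat\partial_\lambda L(v^t,\lambda^t), \lambda - \lambda^t\ra \le \frac{1}{(1-\gamma)\beta}\Big(KL((1-\gamma)\lambda\,||\,(1-\gamma)\lambda^t) - KL((1-\gamma)\lambda\,||\,(1-\gamma)\lambda^{t+1})\Big) + \frac{\beta}{2(1-\gamma)}\|\hat\partial_\lambda L(v^t,\lambda^t)\|_\infty^2.
\end{align}
For the primal $v$-block, \eqref{defn:v-update} is Euclidean projected stochastic subgradient descent on the compact set $\cV$ from Lemma \ref{lemma:bound-V}, so the usual expansion of $\|v^{t+1}-v^*\|_2^2$ yields
\begin{align}\label{eq:primal-step}
\la \hat\nabla_v L(v^t,\lambda^t), v^t - v^*\ra \le \frac{1}{2\alpha}\Big(\|v^t - v^*\|_2^2 - \|v^{t+1}-v^*\|_2^2\Big) + \frac{\alpha}{2}\|\hat\nabla_v L(v^t,\lambda^t)\|_2^2.
\end{align}

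Next I would bound the stochastic (sub)gradient magnitudes entering \eqref{eq:dual-step} and \eqref{eq:primal-step}. For $\hat\nabla_v L$ from \eqref{defn:sto-grad-v}, the key observation is that with $\zeta^t = (1-\delta)(1-\gamma)\lambda^t + \frac{\delta}{|\cS||\cA|}\mathbf 1$, the importance-weight ratio $\lambda^t_{sa}/\zeta^t_{sa}$ is deterministically $O(1/(1-\gamma))$ on the support of $\zeta^t$ (since $(1-\gamma)\lambda^t$ is a probability vector and $\zeta^t$ mixes it with the uniform at weight $1-\delta$), so $\|\hat\nabla_v L\|_2^2 = O(1/(1-\gamma)^2)$; taking expectations and summing over $t$ contributes an $\alpha T/(1-\gamma)^2$ term, while the telescoped initial distance over $\cV$ contributes $\mathrm{diam}(\cV)^2/\alpha = O((1+c\sigma)^2/((1-\gamma)^2\alpha))$. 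Balancing with the stated $\alpha = \sqrt{|\cS|/T}(1+c\sigma)$ gives the $\sqrt{|\cS|/T}\cdot(1+c\sigma)/(1-\gamma)$-type contribution — here the factor $|\cS|$ enters because $\|v^t - v^*\|_2^2$ can be as large as $|\cS|\,\|v^*\|_\infty^2$. For $\hat\partial_\lambda L$ from \eqref{defn:sto-grad-lam}, I would use that $|\hat r_{ss'a} + \gamma v_{s'} - v_s - M_1|$ is $O((1+c\sigma)/(1-\gamma))$ by the choice of $M_1$ and $v \in \cV$, divided by $\zeta^t_{sa} \ge \delta/(|\cS||\cA|)$; but the crucial point (and the reason the KL geometry helps) is that this vector has only one nonzero coordinate from the reward/Bellman part plus the bounded $-c\hat\partial\rho(\lambda) - M_2\mathbf 1$ part, so $\|\hat\partial_\lambda L\|_\infty$ — not its $\ell_2$ norm — is what appears, and it is $O((1+c\sigma)/((1-\gamma)\delta) + c\sigma)$; meanwhile the initial Bregman distance $KL((1-\gamma)\lambda^*||(1-\gamma)\lambda^1)$ is at most $\log(|\cS||\cA|)$ since $\lambda^1$ is uniform. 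Balancing $\beta$ against these two quantities produces the $\sqrt{|\cS||\cA|\log(|\cS||\cA|)/T}$ scaling.

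The remaining step is to assemble these into the duality-gap bound. Adding \eqref{eq:dual-step} (with $\lambda = \lambda^*$) and \eqref{eq:primal-step}, taking conditional expectations so that $\EE[\hat\nabla_v L(v^t,\lambda^t)] = \nabla_v L(v^t,\lambda^t)$ and $\EE[\hat\partial_\lambda L(v^t,\lambda^t)] \in \partial_\lambda L(v^t,\lambda^t) - (M_1+M_2)\mathbf 1$, and using that the constant shift $(M_1+M_2)\mathbf 1$ pairs against $\lambda^* - \lambda^t$ to give $0$ because $\|\lambda^*\|_1 = \|\lambda^t\|_1 = (1-\gamma)^{-1}$ are equal — this is exactly why the shifts are harmless and is the one genuinely problem-specific trick — convexity of $L(\cdot,\lambda)$ in $v$ and concavity of $L(v,\cdot)$ in $\lambda$ (using convexity of $\rho$) give $L(v^t,\lambda^*) - L(v^*,\lambda^t) \le \la \partial_\lambda L(v^t,\lambda^t),\lambda^*-\lambda^t\ra + \la \nabla_v L(v^t,\lambda^t), v^t - v^*\ra$. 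Averaging over $t=1,\dots,T$, applying Jensen to pull $\bar v,\bar\lambda$ inside, telescoping the Bregman and Euclidean distances, and plugging in the balanced $\alpha,\beta$ yields \eqref{eq:sample_complexity}; setting the right-hand side equal to $\epsilon$ and solving for $T$ gives \eqref{eq:sample_complexity_convex}. I expect the main obstacle to be the careful bookkeeping in the previous paragraph — showing that the importance-sampling ratios are bounded uniformly (not just in expectation) thanks to the specific mixing construction of $\zeta^t$, and that the $\ell_\infty$ rather than $\ell_2$ norm of the dual subgradient is the right quantity, since mishandling either of these is exactly what would degrade the $|\cS||\cA|$ dependence away from the claimed tight one.
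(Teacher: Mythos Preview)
Your high-level decomposition is the right one, but two concrete gaps would prevent the argument from yielding the stated rate.

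First, the dual-step penalty you record as $\tfrac{\beta}{2(1-\gamma)}\|\hat\partial_\lambda L(v^t,\lambda^t)\|_\infty^2$ is too coarse. At the sampled index $(s_t,a_t)$ the magnitude of $\hat\partial_\lambda L$ is $O\!\big((1+c\sigma)/(1-\gamma)\big)\,/\,\zeta^t_{s_ta_t}$, and since $\zeta^t_{sa}$ can be as small as $\delta/(|\cS||\cA|)$ your $\ell_\infty$ bound should carry an extra $|\cS||\cA|$ factor that you have dropped; even after taking expectations this route gives an $(|\cS||\cA|)^2$-type contribution, not the linear one claimed. The paper instead derives the sharper \emph{local-norm} penalty $\tfrac{\beta}{2}\sum_{s,a}\lambda_{sa}^t(\Delta_{sa}^t)^2$ directly from the exponential-weights identity, using $e^x\le 1+x+\tfrac{x^2}{2}$ for $x\le 0$. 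This is where the second---and more important---role of the shifts $M_1,M_2$ enters: they force every coordinate $\Delta_{sa}^t\le 0$, not merely to cancel against $\langle\mathbf 1,\lambda^t-\lambda^*\rangle=0$ as you note. Taking expectation of the local-norm term over $(s_t,a_t)\sim\zeta^t$ then produces $\sum_{s,a}\lambda_{sa}^t(\text{numerator})^2/\zeta_{sa}^t$, and the lower bound $\zeta_{sa}^t\ge(1-\delta)(1-\gamma)\lambda_{sa}^t$ cancels the weight $\lambda_{sa}^t$, leaving a sum of $|\cS||\cA|$ uniformly bounded terms. This cancellation is exactly what delivers the linear $|\cS||\cA|$ dependence, and it is unavailable from the generic $\ell_\infty$ mirror-descent inequality you invoke.

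Second, the theorem bounds $\EE[L(\bar v,\lambda^*)-\min_{v\in\cV}L(v,\bar\lambda)]$, so the primal comparator must be $\bar v^*:=\argmin_{v\in\cV}L(v,\bar\lambda)$, which depends on the entire trajectory; with your fixed saddle-point $v^*$ you would only control the smaller quantity $\EE[L(\bar v,\lambda^*)-L(v^*,\bar\lambda)]$, since $\min_{v}L(v,\bar\lambda)\le L(v^*,\bar\lambda)$. Once the comparator is random, the cross term $\langle\nabla_v L(v^t,\lambda^t)-\hat\nabla_v L(v^t,\lambda^t),\,v^t-\bar v^*\rangle$ is no longer a martingale difference and cannot be dispatched by conditioning; the paper handles it via an auxiliary martingale bound (a proposition due to Bach) applied to the sequence of stochastic-gradient errors. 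Without this step the assembly in your final paragraph does not close.
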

%
The proof of this Theorem is provided in Appendix \ref{proof_convergence_bound}. 

 We may then use the convergence of duality gap to characterize the sub-optimality and constraint violation attained by the output of Algorithm \ref{algo_one} for the problem \eqref{prob:dual}.
 %

\begin{theorem}[{\bf Convergence to optimal caution-sensitive policies}]
	\label{theorem:duality-gap-meaning}
	Let the parameters $M_1$, $M_2$, $\delta$, $\beta$, and $\alpha$, as defined in Theorem \ref{theorem:convergence},  if $\bar \lambda$ is the output of Algorithm \ref{algo_one} after $T$ iterations, then the constraint violation of the original problem \eqref{prob:dual} satisfies
	\begin{eqnarray}
	\label{thm:duality-gap-meaning-0}
	\begin{cases}
	\bar{\lambda}\geq0, \quad
	\left\|\bar\lambda\right\|_1 = (1-\gamma)^{-1}\vspace{0.4cm}\\
	\left\|\sum_{a\in\cA}(I-\gamma P_a^\top)\bar\lambda_a - \xi\right\|_1\leq \frac{(1-\gamma)\epsilon}{1 + c\sigma}\leq {(1-\gamma)\epsilon}.
	\end{cases} 
	\end{eqnarray}
	Moreover, the sub-optimality of \eqref{prob:dual} is given as
	\begin{eqnarray}
	\label{thm:duality-gap-meaning-0.5}
	\EE[(\langle\lambda^*,r\rangle - c\rho(\lambda^*)) - (\langle\bar\lambda,r\rangle - c\rho(\bar\lambda))]\leq \epsilon
	\end{eqnarray}
\end{theorem}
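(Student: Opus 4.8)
The plan is to convert the duality-gap bound of Theorem~\ref{theorem:convergence} into the two claimed guarantees by exploiting the affine structure of $L$ in $v$. First I would rewrite the Lagrangian as
$$L(v,\lambda) = \langle \lambda, r\rangle - c\rho(\lambda) + \big\langle v,\ \xi - \sum_{a\in\cA}(I-\gamma P_a^\top)\lambda_a\big\rangle =: \langle\lambda,r\rangle - c\rho(\lambda) + \langle v, b(\lambda)\rangle,$$
where $b(\lambda) := \xi - \sum_{a\in\cA}(I-\gamma P_a^\top)\lambda_a$ is the equality-constraint residual of \eqref{prob:dual}. Since $\lambda^*$ is feasible, $b(\lambda^*)=0$, so $L(\bar v,\lambda^*) = \langle\lambda^*,r\rangle - c\rho(\lambda^*)$ independent of $\bar v$. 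On the other side, because Lemma~\ref{lemma:bound-V} makes $\cV$ exactly the $\ell_\infty$-ball of radius $2(1+c\sigma)/(1-\gamma)$, minimizing the linear term $\langle v, b(\bar\lambda)\rangle$ over $v\in\cV$ gives $-\tfrac{2(1+c\sigma)}{1-\gamma}\|b(\bar\lambda)\|_1$, hence $\min_{v\in\cV}L(v,\bar\lambda) = \langle\bar\lambda,r\rangle - c\rho(\bar\lambda) - \tfrac{2(1+c\sigma)}{1-\gamma}\|b(\bar\lambda)\|_1$. Subtracting these two evaluations and invoking Theorem~\ref{theorem:convergence} (with $T$ chosen so its bound is $\le\epsilon$) yields
$$\EE\Big[\big(\langle\lambda^*,r\rangle - c\rho(\lambda^*)\big) - \big(\langle\bar\lambda,r\rangle - c\rho(\bar\lambda)\big) + \tfrac{2(1+c\sigma)}{1-\gamma}\|b(\bar\lambda)\|_1\Big] \le \epsilon.$$

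From here the sub-optimality bound \eqref{thm:duality-gap-meaning-0.5} is immediate, since $\|b(\bar\lambda)\|_1\ge 0$ lets me drop the last term. For the constraint-violation bound I would pair the displayed inequality with a lower bound on the sub-optimality gap coming from the optimal Lagrange multiplier $v^*$: by strong duality for the convex program \eqref{prob:dual}, $(v^*,\lambda^*)$ is a saddle point of $L$ over $\cL\times\RR^{|\cS|}$, so $\lambda^*\in\argmax_{\lambda\in\cL}L(v^*,\lambda)$ and in particular $L(v^*,\lambda^*)\ge L(v^*,\bar\lambda)$ because $\bar\lambda\in\cL$. Using $b(\lambda^*)=0$ once more and Hölder's inequality, this rearranges to
$$\big(\langle\lambda^*,r\rangle - c\rho(\lambda^*)\big) - \big(\langle\bar\lambda,r\rangle - c\rho(\bar\lambda)\big) \ \ge\ \langle v^*, b(\bar\lambda)\rangle \ \ge\ -\|v^*\|_\infty\|b(\bar\lambda)\|_1 \ \ge\ -\tfrac{1+c\sigma}{1-\gamma}\|b(\bar\lambda)\|_1,$$
the last step by Lemma~\ref{lemma:bound-V}. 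Substituting this into the expectation bound leaves $\tfrac{1+c\sigma}{1-\gamma}\EE[\|b(\bar\lambda)\|_1]\le\epsilon$, i.e.\ $\EE[\|b(\bar\lambda)\|_1]\le\tfrac{(1-\gamma)\epsilon}{1+c\sigma}\le(1-\gamma)\epsilon$, which is the second line of \eqref{thm:duality-gap-meaning-0}. The remaining requirements $\bar\lambda\ge0$ and $\|\bar\lambda\|_1=(1-\gamma)^{-1}$ need no work: the normalization step \eqref{defn:lam-update-2} forces each iterate $\lambda^t$ into the convex set $\cL$, and $\bar\lambda$ is their average.

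The $\min$/$\max$ linear evaluations and the Hölder estimate are routine; the one step requiring care is the saddle-point identity $\lambda^*\in\argmax_{\lambda\in\cL}L(v^*,\lambda)$. This is where strong duality for \eqref{prob:dual} is used, and one must check the relevant constraint qualification: the equality constraint is affine and $\cL$ is compact, so no duality gap exists for a convex $\rho$; when $\rho$ has an open effective domain (as in the barrier examples of Example~\ref{eg:barrier}) one additionally needs a feasible point in the relative interior of $\dom\rho$, which is implicit in our standing assumption that the optimizer $\lambda^*$ exists. Apart from that, the argument is pure bookkeeping on top of Theorem~\ref{theorem:convergence}.
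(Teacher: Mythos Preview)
Your proposal is correct and follows essentially the same route as the paper's proof: both evaluate $L(\bar v,\lambda^*)$ using feasibility of $\lambda^*$, compute $\min_{v\in\cV}L(v,\bar\lambda)$ via the $\ell_\infty$-ball structure of $\cV$, read off the sub-optimality bound by dropping the nonnegative $\|b(\bar\lambda)\|_1$ term, and then combine with a lower bound on the objective gap to isolate the constraint violation. The only cosmetic difference is in how that lower bound is obtained: the paper unpacks the componentwise KKT condition \eqref{KKT} together with convexity of $\rho$ to show $\sum_a\langle (I-\gamma P_a)v^*-r_a+cu^*_a,\bar\lambda_a-\lambda^*_a\rangle\ge 0$, whereas you invoke the equivalent saddle-point inequality $L(v^*,\lambda^*)\ge L(v^*,\bar\lambda)$ directly; both yield the same estimate $-\|v^*\|_\infty\|b(\bar\lambda)\|_1$ after H\"older and Lemma~\ref{lemma:bound-V}.
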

Eqs.\ \eqref{thm:duality-gap-meaning-0} and \eqref{thm:duality-gap-meaning-0.5} showed the output solution is $\epsilon$-feasible and $\epsilon$-optimal.
Note that $\epsilon$ determines the number of samples $T$ as given in \eqref{eq:sample_complexity_convex}. The proof is provided in Appendix \ref{proof:theorem:duality-gap-meaning}. 

Theorem \ref{theorem:duality-gap-meaning} suggests that to get $\epsilon$-optimal policy and its corresponding state-action distribution, the sample complexity has near-linear dependence (up to logarithmic factors) on the sizes of $\mathcal{S}$ and $\mathcal{A}$. This matches the optimal dependence in the risk-neutral case, see e.g. \citep{chen2018scalable,wang2017primal,wang2017randomized} which proves that Algorithm \ref{algo_one} is sample-efficient.

Further, consider the case where $\rho$ is a KL divergence as in Example \ref{eg:KL}. This $\rho$ acts as a regularization term to keep $\lambda$ close to a prior long-term behavior. In this case, we can show that the primal-dual algorithm enjoys better convergence rates. In particular, we show a tighter KL divergence bound between the estimated $\bar \lambda$ and the optimal state-action distribution.

%
\begin{corollary}[{\bf The case when $\rho$ is a KL divergence}]\label{proposition:breg_distance}
	If $\bar \lambda$ is the output of Algorithm \ref{algo_one}, with parameters $M_1$, $M_2$, $\delta$, $\beta$, $\alpha$ and $T$ chosen as in Theorem \ref{theorem:convergence}, with $\rho(\lambda): = KL\big((1-\gamma)\lambda\,||\,\mu\big)$ is the KL divergence from  given prior $\mu$, we have
$\EE\left[KL\left((1-\gamma)\bar{\lambda}\,||\,(1-\gamma)\lambda^*\right)\right]\leq \frac{\epsilon}{c}$.
\end{corollary}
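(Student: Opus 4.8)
The plan is to show that, when $\rho(\lambda)=KL\big((1-\gamma)\lambda\,\|\,\mu\big)$, the objective of \eqref{prob:dual} is ``$c$-strongly concave relative to $\rho$'' in a sense that makes the Bregman divergence of $\rho$ extractable from the duality gap already controlled by Theorem \ref{theorem:convergence}, and that on the feasible set $\cL$ this Bregman divergence is \emph{exactly} $KL\big((1-\gamma)\bar\lambda\,\|\,(1-\gamma)\lambda^*\big)$. First I would record the key identity. Writing the Bregman divergence $D_\rho(x,y):=\rho(x)-\rho(y)-\langle\nabla\rho(y),x-y\rangle$ and using $\nabla_{\lambda_{sa}}\rho(\lambda)=(1-\gamma)\big(1+\log((1-\gamma)\lambda_{sa}/\mu_{sa})\big)$, a direct expansion gives $D_\rho(x,y)=\sum_{s,a}(1-\gamma)x_{sa}\log(x_{sa}/y_{sa})-(1-\gamma)(\|x\|_1-\|y\|_1)$; the $\mu$-dependence cancels, and since both $\bar\lambda$ and $\lambda^*$ lie in $\cL$ (so $\|\bar\lambda\|_1=\|\lambda^*\|_1=(1-\gamma)^{-1}$ by Theorem \ref{theorem:duality-gap-meaning} and feasibility of $\lambda^*$) the linear term vanishes too, leaving $D_\rho(\bar\lambda,\lambda^*)=KL\big((1-\gamma)\bar\lambda\,\|\,(1-\gamma)\lambda^*\big)$.

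Next I would pass through the saddle-point Lagrangian \eqref{lagrangian_risk1}, which is necessary because $\bar\lambda$ is only $\epsilon$-feasible for the affine constraint of \eqref{prob:dual}, although it does lie exactly in $\cL$. Let $v^*$ be the Lagrange multiplier associated with \eqref{prob:dual}; by Lemma \ref{lemma:bound-V} we have $v^*\in\cV$, and by strong duality (convex objective, linear constraints) $(\lambda^*,v^*)$ is a saddle point of $L$ over $\cL\times\cV$, so $\lambda^*$ maximizes $L(v^*,\cdot)$ over $\cL$. For fixed $v$, the map $\lambda\mapsto L(v,\lambda)$ is affine $-\,c\rho(\lambda)$, hence its Bregman divergence is $-c\,D_\rho$; expanding around $\lambda^*$ and using first-order optimality $\langle\nabla_\lambda L(v^*,\lambda^*),\bar\lambda-\lambda^*\rangle\le 0$ for $\bar\lambda\in\cL$ yields $L(v^*,\lambda^*)-L(v^*,\bar\lambda)\ge c\,D_\rho(\bar\lambda,\lambda^*)$.

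Finally I would bound the left-hand side by the duality gap: since $v^*$ minimizes $L(\cdot,\lambda^*)$ we have $L(v^*,\lambda^*)\le L(\bar v,\lambda^*)$, and since $v^*\in\cV$ we have $L(v^*,\bar\lambda)\ge\min_{v\in\cV}L(v,\bar\lambda)$, so $c\,D_\rho(\bar\lambda,\lambda^*)\le L(\bar v,\lambda^*)-\min_{v\in\cV}L(v,\bar\lambda)$. Taking expectations and applying Theorem \ref{theorem:convergence} with the parameters and sample size $T$ of Theorem \ref{theorem:duality-gap-meaning} (so the right-hand side is at most $\epsilon$ in expectation), combined with the identity of the first paragraph, gives $\EE\big[KL((1-\gamma)\bar\lambda\,\|\,(1-\gamma)\lambda^*)\big]\le\epsilon/c$. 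The main obstacle is conceptual rather than computational: recognizing that the KL caution endows the problem with exact ``relative strong concavity'' with modulus $c$ whose associated Bregman divergence is the target KL, and then being careful with the fact that $\bar\lambda$ violates the equality constraint — which forces the argument through the Lagrangian and requires verifying $v^*\in\cV$ (Lemma \ref{lemma:bound-V}) and differentiability of $\rho$ at $\lambda^*$ (guaranteed by the safeguard $\lambda\ge\delta_0\mathbf{1}$ noted in Example \ref{eg:KL}).
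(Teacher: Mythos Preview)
Your proposal is correct and follows essentially the same route as the paper: both extract the Bregman divergence $D_\rho(\bar\lambda,\lambda^*)$ from $L(v^*,\lambda^*)-L(v^*,\bar\lambda)$ via first-order optimality of $\lambda^*$ (the paper invokes the KKT condition \eqref{KKT}, you use the equivalent variational inequality over $\cL$), bound this by the duality gap of Theorem~\ref{theorem:convergence} using $v^*\in\cV$, and then verify by direct computation that $D_\rho(\bar\lambda,\lambda^*)=KL\big((1-\gamma)\bar\lambda\,\|\,(1-\gamma)\lambda^*\big)$ on $\cL$. Your presentation is slightly more streamlined---packaging the affine-plus-$c\rho$ structure as ``relative strong concavity'' rather than expanding $T_1+T_2$ explicitly---but the substance is identical.
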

Corollary \ref{proposition:breg_distance} explicitly gives a dependence of the risk in terms of penalty parameter $c$, which may be made  small as the penalty parameter grows large. Next, we discuss the case where the caution $\rho$ is not necessarily convex.
\begin{algorithm}[t] 
	\caption{A Block Coordinate Ascent (PCA) framework for Policy Optimization with Nonconvex Caution}\label{algo_BCD}
	\textbf{Initialize:} $\lambda^0,\mu^0$.\\
	\textbf{for}{ $k=0,1,...,K-1$}{\\
	Update $\mu^{k+1}$ by solving 
	\begin{equation}
	\label{prob:BCD-mu}
	\!\!\mu^{k+1} \!\!= \arg\max \Phi(\!\lambda^k,\mu)\quad\text{s.t.}\quad \mu\!\geq \!0,\,\,\|\mu\|_1 = 1
	\end{equation}
	with a known closed form solution. \\
	Update $\lambda^{k+1}$ by solving the following to $\epsilon$-sub-optimality
\begin{equation}
\label{prob:BCD-lambda} 
\!\!\max_{\lambda} \,\,\Phi(\!\lambda,\mu^{k-1}) \,\, \text{s.t.} \ \sum\limits_{a\in\cA}\!(I-\gamma P_a^\top) \lambda_a=\xi, \  \lambda\geq 0
\end{equation}
using Algorithm \ref{algo_one}.\\
}
\textbf{Output:} Select $(\lambda^{k^*},\mu^{k^*})$ randomly from $(\lambda^1,\mu^1),...,(\lambda^{K},\mu^{K})$.
\end{algorithm}
\subsection{Extension to Nonconvex Variance Risk} \label{subsec:nonconvex}
In this subsection, we specify the caution as variance as in Example \ref{eg:variance}, which is nonconvex unlike other examples. For this instance, our strategy of addressing the constraints of problem \eqref{prob:dual} via Lagrangian relaxation fails here due to the nonconvexity of the variance in terms of $\lambda$. Therefore, we propose to approximately solve the nonconvex saddle point problem by solving a blockwise convex surrogate problem. 
Consider the following surrogate problem for some $M>0$:
\begin{align} 
\label{eq:alternative_dual_LP}
\!\!\!\!\!\!\max_{\lambda\in\Lambda}\max_{\mu\in U}  \,\,\, \!\!\Phi(\lambda,\mu)\!:=\! \!\ip{\lambda,r}\!-\!c\rho(\lambda,\mu) \!-\! \frac{M}{2}\|\mu\!-\!\hat \lambda\|^2
\end{align}
where $\Lambda:=\big\{\lambda:\sum\limits_{a\in\cA}(I-\gamma P_a^\top) \lambda_a=\xi,  \lambda_a\geq 0, a\in\cA\big\}$,
\begin{align*} 
 \qquad \ \  U:&= \big\{\mu:\mu\geq 0, \|\mu\|_1 = 1\big\}\; ,  \  \hat \lambda = (1-\gamma)\lambda,  \\
  \rho(\lambda,\mu) & =  \langle\hat\lambda,r\rangle^2 - 2\langle \mu,r\rangle\langle\hat{\lambda},r\rangle + \langle\mu,R\rangle.
\end{align*} 
Note that the surrogate problem \eqref{eq:alternative_dual_LP} is not equivalent to the original problem \eqref{prob:dual} when risk $\rho$ is chosen to be variance. However, in this alternative formulation  a quadratic penalty is applied to push distribution $\mu$ towards $\hat{\lambda}$. Observe that when $\mu = \hat \lambda$, we have 
$\rho(\lambda,\mu) = \langle \hat\lambda,R\rangle - \langle\hat\lambda,r\rangle^2$, which equals exactly to the variance function. Therefore, the problem \eqref{eq:alternative_dual_LP} will be  close to the original problem \eqref{prob:dual} when the penalty parameter $M$ is reasonably large. In what follows we propose algorithmic solution to the surrogate problem, assuming that a sufficiently large $M$ is chosen.

The surrogate objective $\Phi$ is strongly concave in $\lambda$ for any fixed $\mu$ and is strongly concave in $\mu$ for any fixed $\lambda$. But $\Phi$ is not jointly concave in $\lambda$ and $\mu$. Therefore, we can employ block-coordinate ascent (BCA) to solve problem \eqref{eq:alternative_dual_LP}. The BCA method alternates between the two steps: First we fix $\lambda$ and optimize the problem over $\mu$ - this subproblem is a projection onto a simplex and has a closed form solution (see \citep{wang2013projection}); Second we fix $\mu$ and optimize over $\lambda$, which is a convex problem and can be solved by using Algorithm \ref{algo_one}. The full scheme is presented in Algorithm \ref{algo_BCD}.
%
%
\begin{figure}[t]
 	\centering
	\hspace{-4mm}
 	\subfigure[!][Reward dist.]{\includegraphics[width=0.34\linewidth,height=4.25cm,]{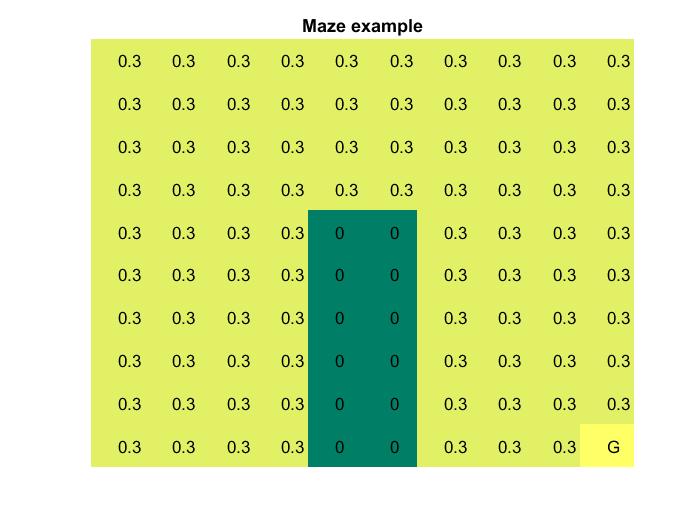}\label{subfig:env010}}\hspace{-3mm}
 	\subfigure[Risk neutral]{\includegraphics[width=0.34\linewidth,height=4.25cm]{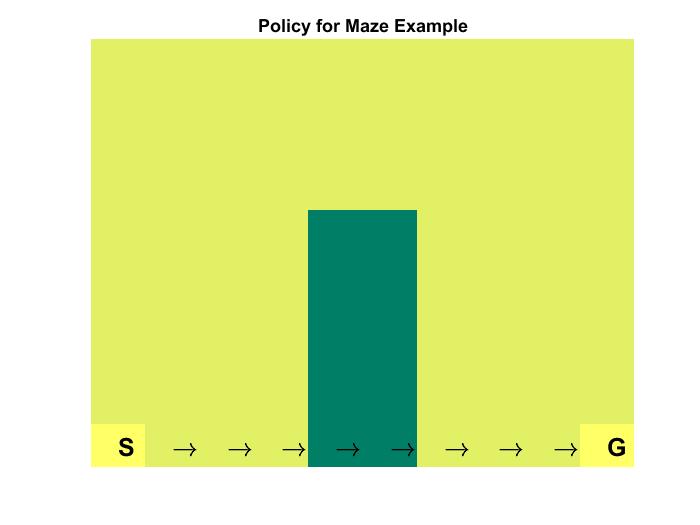}\label{subfig:risk_neutral010}}\hspace{-3mm}
 	\subfigure[Risk averse]{\includegraphics[width=0.34\linewidth, height=4.25cm]{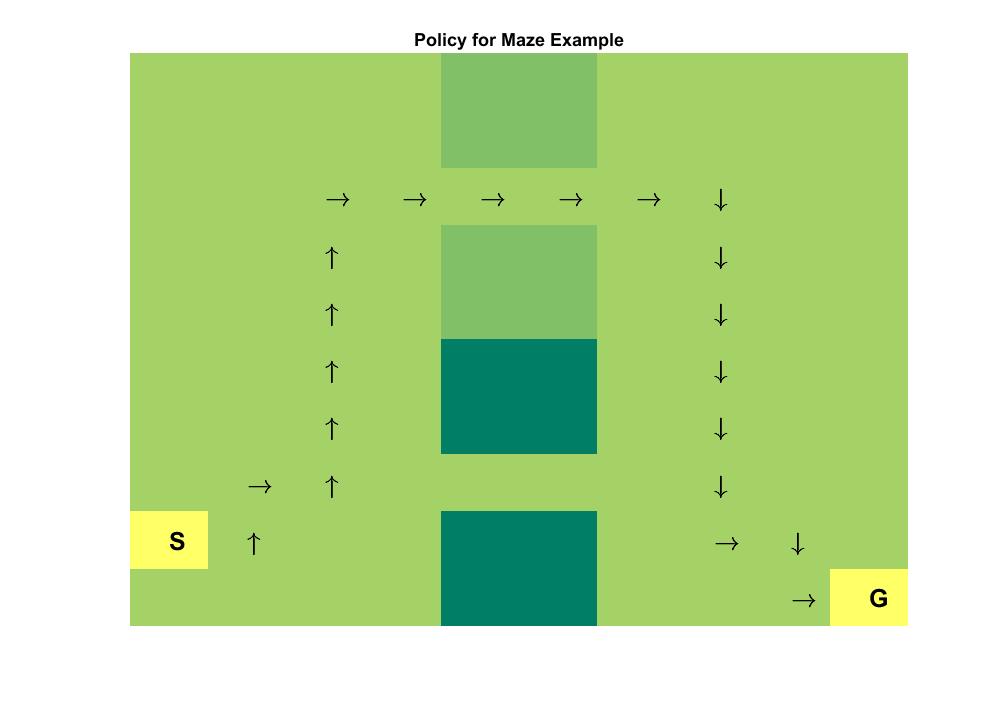}\label{subfig:risk_averse010}}	\vspace{-3mm}
 \caption{Experiment on grid world with variance as the risk. (a) Reward distribution for the Maze environment; (b) Risk neutral and (c) Risk averse  trajectories, respectively, from start to goal. The trajectory resulting from greedily following the risk-averse policy avoids negative reward states.}
 	\label{fig:risk_averse10}\vspace{-3mm}
 \end{figure}
 %
Finally, we establish the sample complexity of Algorithm \ref{algo_BCD} to find a first-order stationary point of \eqref{eq:alternative_dual_LP}.

\begin{theorem}[{\bf Convergence to approximate stationarity}]
	\label{thm:multi-stage}
	Suppose we apply Algorithm \ref{algo_one} to solve the subproblem \eqref{prob:BCD-lambda} with $T$ satisfying 
	$$T = \Theta\left(\frac{|\cS||\cA|\log(|\cS||\cA|)}{(1-\gamma)^4\epsilon^2}\left(1+(1-\gamma)^2(c^2+M^2)\right)\right).$$
	And we solve the subproblem \eqref{prob:BCD-mu} with a closed form solution \citep{wang2013projection}.	Let the number of outer iterations be 
	$$K \geq \frac{\max_{\lambda,\mu}\Phi(\lambda,\mu)-\Phi(\lambda^0,\mu^0)}{\epsilon}.$$
	Then the output $(\lambda^{k^*},\mu^{k^*})$ of Algorithm \ref{algo_BCD} is an approximate-stationary solution to problem \eqref{eq:alternative_dual_LP}, which satisfies
	\begin{align}
	\label{defn:EpsSolu-multi-stage-1}
	\EE\big[&\|\Pi_\Lambda(\nabla_\lambda \Phi(\lambda,\mu))\|^2+\|\Pi_U(\nabla_\mu \Phi(\lambda,\mu))\|^2\big]\nonumber
	\\
	 &\leq \cO\left((1-\gamma)^2\left(\frac{c^2|\cS|^2|\cA|^2}{M} + M\right)\epsilon\right)
	\end{align}
    and 
	\begin{equation}
	\label{defn:EpsSolu-multi-stage-2}
	\begin{cases}
	\EE[\|\sum\limits_{a\in\cA}(I-\gamma P_a^\top) \lambda^{k^*}_a-\xi\|_1] \leq (1-\gamma)\epsilon,  \\
	\lambda^{k^*}\geq 0,\,\,\,\,\, \|\lambda^{k^*}\|_1 = (1-\gamma)^{-1},\\
	\mu^{k^*} \geq 0, \,\,\,\,\,\|\mu^{k^*}\|_1 = 1.
	\end{cases} 
	\end{equation}
\end{theorem}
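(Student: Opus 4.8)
The plan is to exploit the block-strong-concavity of $\Phi$ to run a standard inexact block coordinate ascent argument. First I would record the smoothness/strong-concavity constants of $\Phi$: since $\hat\lambda=(1-\gamma)\lambda$ and $\rho(\lambda,\mu)=\langle\hat\lambda,r\rangle^2-2\langle\mu,r\rangle\langle\hat\lambda,r\rangle+\langle\mu,R\rangle$ is bilinear-plus-quadratic, $\Phi(\cdot,\mu)$ is $(1-\gamma)^2(cO(1)+M)$-smooth and the $-\tfrac{M}{2}\|\mu-\hat\lambda\|^2$ term makes it $\Theta((1-\gamma)^2 M)$-strongly concave in $\lambda$ on $\Lambda$; likewise $\Phi(\lambda,\cdot)$ is $M$-strongly concave in $\mu$ on $U$ and the $\mu$-subproblem \eqref{prob:BCD-mu} is an exact simplex projection with the closed form of \citep{wang2013projection}. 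I would also note that the gradients $\nabla_\lambda\Phi,\nabla_\mu\Phi$ have $\|\cdot\|_\infty$ bounded by $O((1-\gamma)(1+c+M))$ on the feasible sets, which is what lets me invoke Theorem \ref{theorem:convergence} for the $\lambda$-subproblem with $\sigma$ replaced by the appropriate $O(c+M)$ term — this is exactly where the stated $T$ comes from, after translating the $\epsilon$-duality-gap guarantee of Theorem \ref{theorem:convergence} (together with Theorem \ref{theorem:duality-gap-meaning}) into an $\epsilon$-suboptimality-plus-$\epsilon$-feasibility guarantee for \eqref{prob:BCD-lambda}; the feasibility part of \eqref{defn:EpsSolu-multi-stage-2} then follows immediately, and $\|\lambda^{k^*}\|_1=(1-\gamma)^{-1}$, $\|\mu^{k^*}\|_1=1$ are hard constraints enforced exactly by the respective subroutines.

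Next I would set up the ascent/telescoping bookkeeping. Let $\Phi^k:=\Phi(\lambda^k,\mu^k)$. The exact $\mu$-step gives $\Phi(\lambda^k,\mu^{k+1})\ge\Phi(\lambda^k,\mu^k)+\tfrac{M}{2}\|\mu^{k+1}-\mu^k\|^2$ by strong concavity and optimality; equivalently $\|\Pi_U(\nabla_\mu\Phi(\lambda^k,\mu^{k+1}))\|^2\le O(M^2)\|\mu^{k+1}-\mu^k\|^2$ (gradient-mapping norm controlled by the displacement of a strongly-concave projected step). The inexact $\lambda$-step gives, in expectation, $\Phi(\lambda^{k+1},\mu^{k+1})\ge \max_\lambda\Phi(\lambda,\mu^{k+1})-\epsilon\ge\Phi(\lambda^k,\mu^{k+1})-\epsilon$, so $\EE[\Phi^{k+1}]\ge\EE[\Phi^k]-\epsilon$ — wait, that drifts the wrong way, so the correct accounting is $\EE[\Phi^{k+1}]\ge\EE[\Phi(\lambda^k,\mu^{k+1})]-0\ge\EE[\Phi^k]$ for the exact part while the $\epsilon$-inexactness is absorbed into bounding the stationarity residual rather than the function increase. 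Concretely I would show $\EE[\|\Pi_\Lambda(\nabla_\lambda\Phi(\lambda^{k+1},\mu^{k+1}))\|^2]\le O((1-\gamma)^2 M^2)\cdot(\text{distance of }\lambda^{k+1}\text{ to the exact }\lambda\text{-argmax})^2 + O(\text{smoothness})\cdot\epsilon$, and the distance-squared-to-argmax is $\le O(\epsilon/((1-\gamma)^2 M))$ by strong concavity and the $\epsilon$-suboptimality from Theorem \ref{theorem:convergence}. Summing the telescoped increments $\sum_k(\EE[\Phi^{k+1}]-\EE[\Phi^k])\le \max_{\lambda,\mu}\Phi-\Phi(\lambda^0,\mu^0)$ and dividing by $K$, together with the random index choice of $(\lambda^{k^*},\mu^{k^*})$, yields that the average — hence the randomly-chosen — gradient-mapping-squared is $O\!\big((1-\gamma)^2(\tfrac{c^2|\cS|^2|\cA|^2}{M}+M)\epsilon\big)$, where the $c^2|\cS|^2|\cA|^2/M$ term tracks the $\lambda$-residual contribution (the $|\cS||\cA|$ factors entering through converting the $\ell_\infty$ gradient bound into $\ell_2$) and the $M$ term the $\mu$-residual contribution.

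The main obstacle I anticipate is making the coupling between the two blocks rigorous: after the $\mu$-update is exact but the subsequent $\lambda$-update is only $\epsilon$-approximate, the point $(\lambda^{k+1},\mu^{k+1})$ is not exactly stationary in either block, and I need to bound $\|\Pi_U(\nabla_\mu\Phi(\lambda^{k+1},\mu^{k+1}))\|$ at the new $\lambda$, not the old one — so I must control how much $\nabla_\mu\Phi$ moves when $\lambda$ changes by the $\lambda$-update displacement, which requires a Lipschitz-in-$\lambda$ bound on $\nabla_\mu\Phi$ and a bound on $\|\lambda^{k+1}-\lambda^k\|$ in terms of the function increase. The clean way around this is to (i) bound the $\lambda$-displacement by $O(\sqrt{(\Phi^{k+1}-\Phi^k)/((1-\gamma)^2M)}+\sqrt{\epsilon/((1-\gamma)^2M)})$ via strong concavity, (ii) feed that through the cross-Lipschitz constant $O((1-\gamma)cO(1))$ of $\nabla_\mu\Phi$, and (iii) note the resulting extra terms are again telescoping in $\Phi^{k+1}-\Phi^k$, so after summation and division by $K$ they contribute only an additional $O(\epsilon)$-order term. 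The remaining steps — verifying the smoothness constants, checking the $\ell_\infty$-to-$\ell_2$ conversions produce exactly the $|\cS|^2|\cA|^2$ factor, and confirming the stated $T$ matches the $\epsilon$ required of subproblem \eqref{prob:BCD-lambda} via \eqref{eq:sample_complexity_convex} with $c\sigma\rightsquigarrow O((1-\gamma)(c+M))$ — are routine bookkeeping.
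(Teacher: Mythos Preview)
Your proposal is essentially correct and follows the same approach as the paper: invoke Theorems~\ref{theorem:convergence}--\ref{theorem:duality-gap-meaning} on the $\lambda$-subproblem (with $c\sigma$ replaced by $O((1-\gamma)(c+M))$) to obtain the stated $T$ and the feasibility part \eqref{defn:EpsSolu-multi-stage-2}; use strong concavity of $\Phi(\cdot,\mu)$ to convert $\epsilon$-suboptimality into a bound on the distance to the exact block maximizer; telescope the function increases over $K$ outer iterations and divide by $K$; and finally transfer block optimality through the cross-Lipschitz constant of $\nabla_\mu\Phi$ and $\nabla_\lambda\Phi$ (this is where the $(2c|\cS||\cA|+M)^2$ factor, hence the $c^2|\cS|^2|\cA|^2/M+M$ in the final bound, enters).

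One correction to your bookkeeping: your mid-paragraph ``fix'' asserting $\EE[\Phi^{k+1}]\ge\EE[\Phi^k]$ is not actually available and not what the paper uses. The paper introduces $\lambda^{k+1}_*:=\arg\max_{\lambda\in\Lambda}\Phi(\lambda,\mu^{k+1})$ and shows, via strong concavity and the $\epsilon$-suboptimality of the $\lambda$-step,
\[
\EE\big[\Phi(\lambda^{k+1},\mu^{k+1})-\Phi(\lambda^{k},\mu^{k})\big]\ \ge\ \tfrac{M}{2}\,\EE\|\lambda^{k+1}_*-\lambda^k\|^2-\epsilon,
\]
so the $-\epsilon$ drift \emph{is} present but harmless: summing over $k$ and dividing by $K\ge(\max\Phi-\Phi^0)/\epsilon$ gives $\tfrac{M}{2K}\sum_k\EE\|\lambda^{k+1}_*-\lambda^k\|^2\le 2\epsilon$, and separately $\tfrac{M}{2}\EE\|\lambda^{k+1}-\lambda^{k+1}_*\|^2\le\epsilon$. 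These two distance bounds, combined by the triangle inequality, are exactly what feeds into the Lipschitz step---which is equivalent to your plan (i)--(iii) once you replace the direct $\|\lambda^{k+1}-\lambda^k\|$ bound by the cleaner two-piece decomposition through $\lambda^{k+1}_*$.
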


Eqs.\ \eqref{defn:EpsSolu-multi-stage-1}, \eqref{defn:EpsSolu-multi-stage-2} suggest that both the projected gradient norm and the level of constraint violation are $\cO(\epsilon)$ small. They imply that the output solution is nearly feasible and nearly stationary. See Appendix \ref{Appendix:theorem-non-convex} for the proof of theorem.



\begin{figure}[t]
\hspace{-1mm}
	\subfigure[Objective]{\includegraphics[width=0.33\columnwidth,height=0.2\textheight]{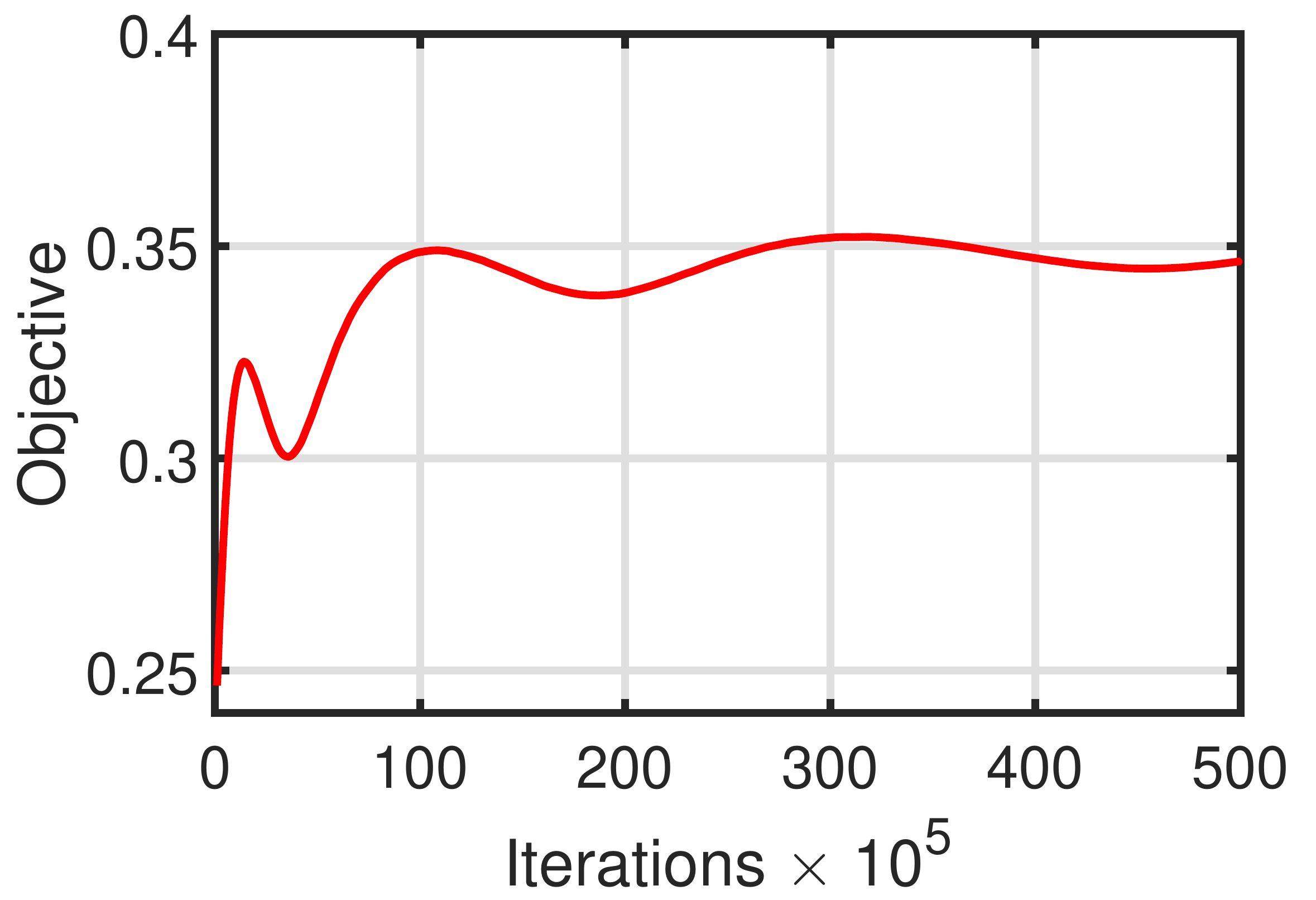}\label{subfig:risk_neutral0010}}\hspace{-2mm}
		\subfigure[Reward mean ]{\includegraphics[width=0.33\columnwidth,height=0.2\textheight]{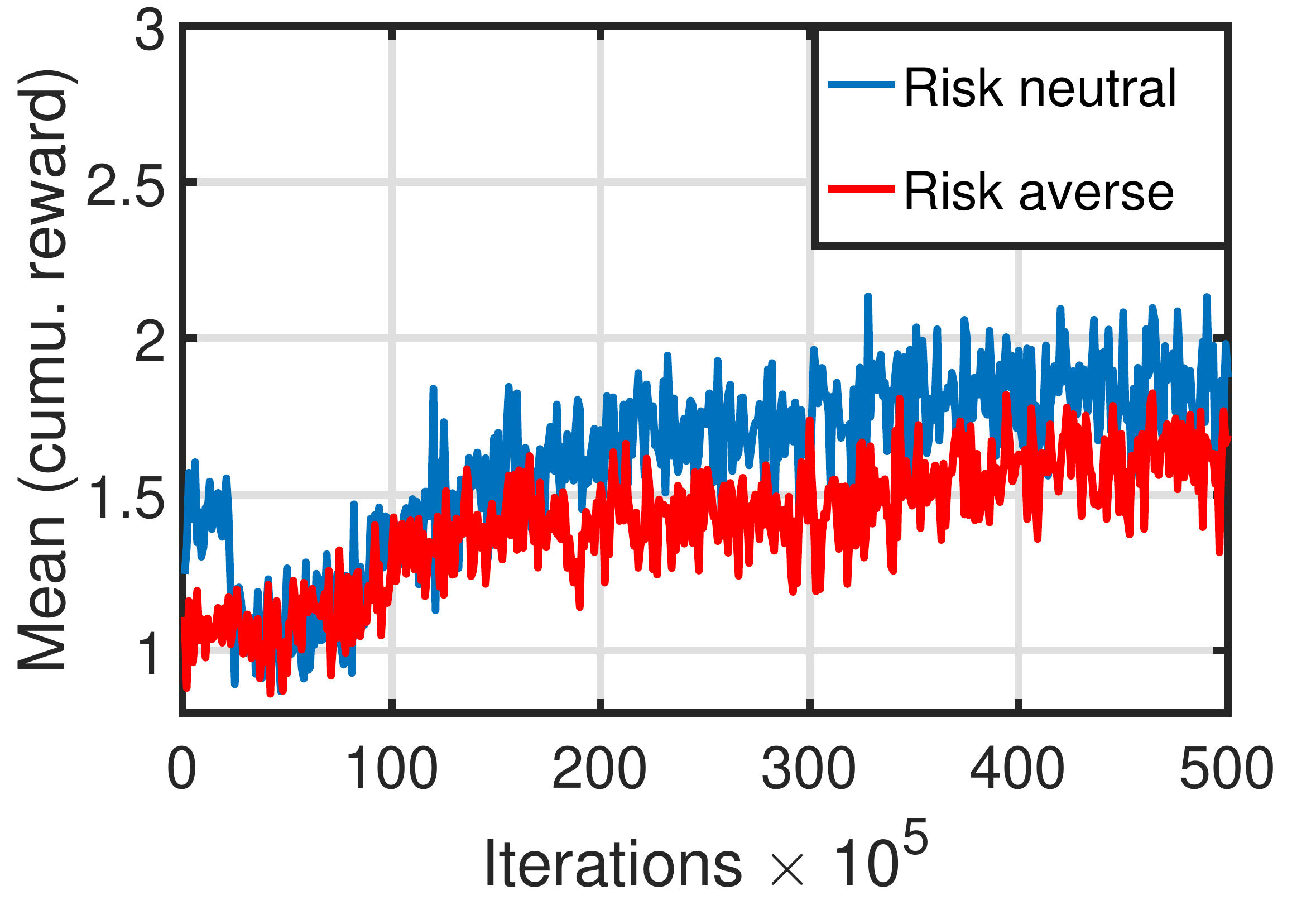}\label{subfig:risk_averse0010}}\hspace{-2mm}
		\subfigure[Reward Variance ]{\includegraphics[width=0.33\columnwidth,height=0.2\textheight]{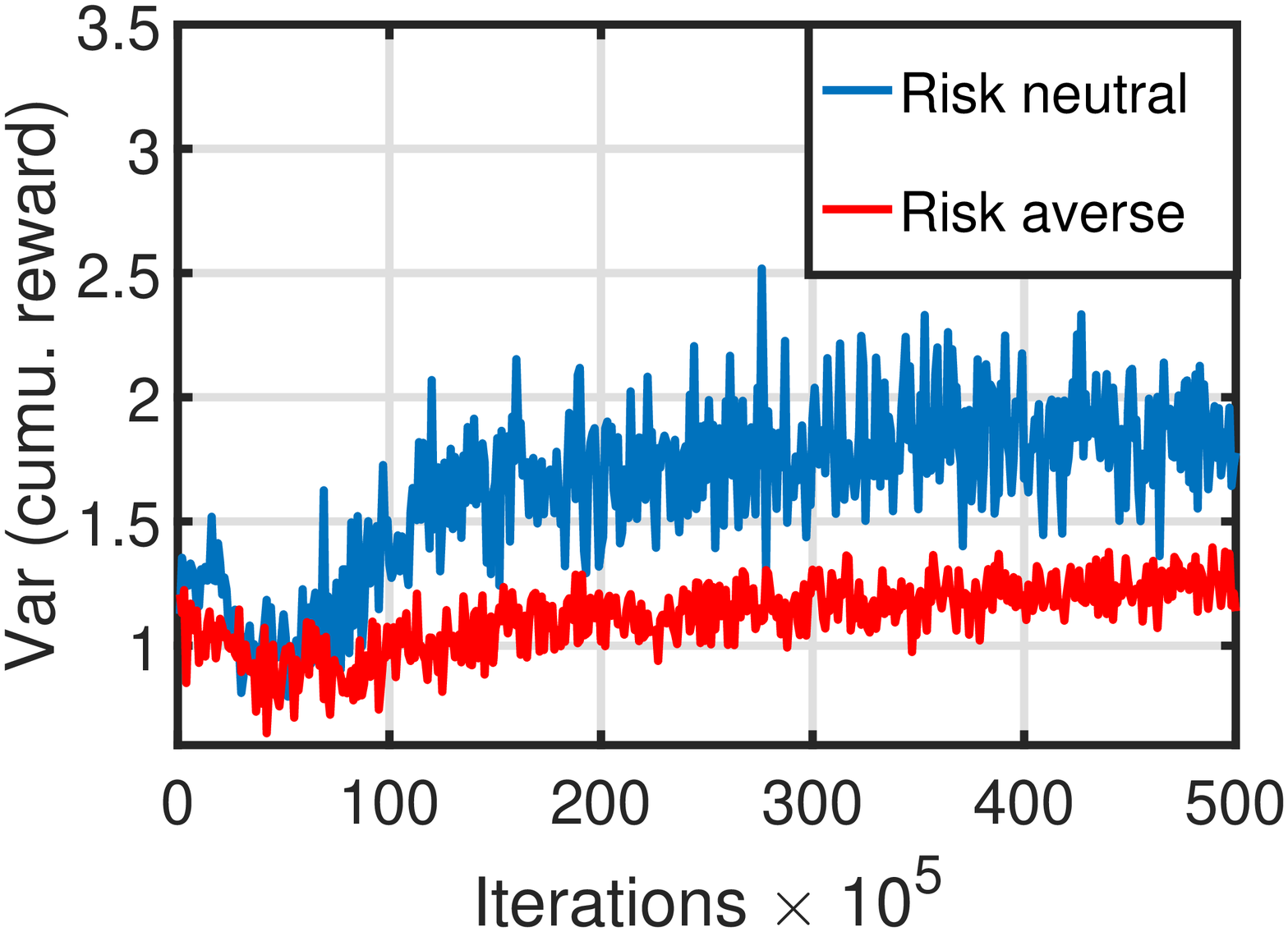}\label{subfig:risk_averse00100}}\vspace{-2mm}
		 \caption{(a) Convergence of the dual objective [cf. \eqref{prob:dual}]; Sample mean return (b) and variance (c) over $100$ simulated trajectories. Observe the expected reward return is comparable while the risk-averse policy attains lower variance, and is thus more reliable. }\vspace{-4mm}
		 	\label{fig:risk}
\end{figure}

%

%

\section{Experimental Results}\label{sec:experiments}
In this section, we experimentally evaluate the proposed technique for incorporating risk or other sources of exogenous information into RL training. In particular,  we consider a setting in which an agent originally learns in the risk-neutral sense of \eqref{main}, i.e., focusing on expected returns. The MDP we focus on is a $10\times 10$ grid with each state permitting for four possible actions (moving $\mathcal{A}:=\{\texttt{up}, \texttt{down}, \texttt{left}, \ \ \text{and}\ \ \texttt{right}\}$). For the transition model, given the direction of the previous action selection, the agent movies in the same direction with probability $p$ and moves in the different direction with probability $1-p$, and moves backwards with null probability. For instance, in a given state action pair $(s,a)$, suppose the action $a$ selected is $\texttt{up}$. Then, the next action will be $\texttt{up}$ with prob $p$ and $\{\texttt{left}, \ \ \text{or}\ \ \texttt{right}\}$ with prob $1-p$, and $\texttt{down}$ with null probability. Overall, this means that the transition matrix has four nonzero sequences of likelihoods along the main diagonal, i.e., it is quad-diagonal. For the experiments, we consider the caution-sensitive formulation presented in Examples \ref{eg:variance} and \ref{eg:KL} which respectively correspond to quantifying risk via the variance and the KL divergence to a previously learned policy which serves as a prior. We append videos (links in the footnote\footnote{\url{https://tinyurl.com/sk4lddb}}\footnote{ \url{https://tinyurl.com/tlcl3m2}}) to the submission which visualize the safety of risk-awareness during training.
 
 \begin{figure}[t]
 	\centering\hspace{-4mm}
 	\subfigure[!][$\mu$]{\includegraphics[width=0.35\linewidth,height=4.25cm]{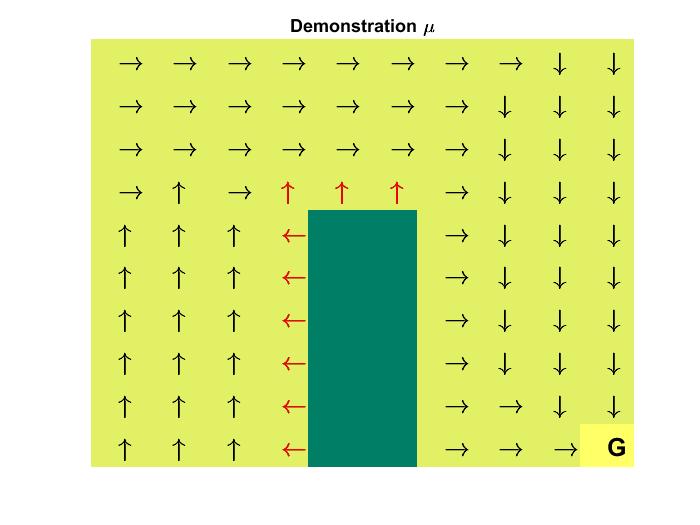}\label{subfig:env}}\hspace{-6mm}
 	\subfigure[Risk neutral]{\includegraphics[width=0.35\linewidth,height=4.25cm]{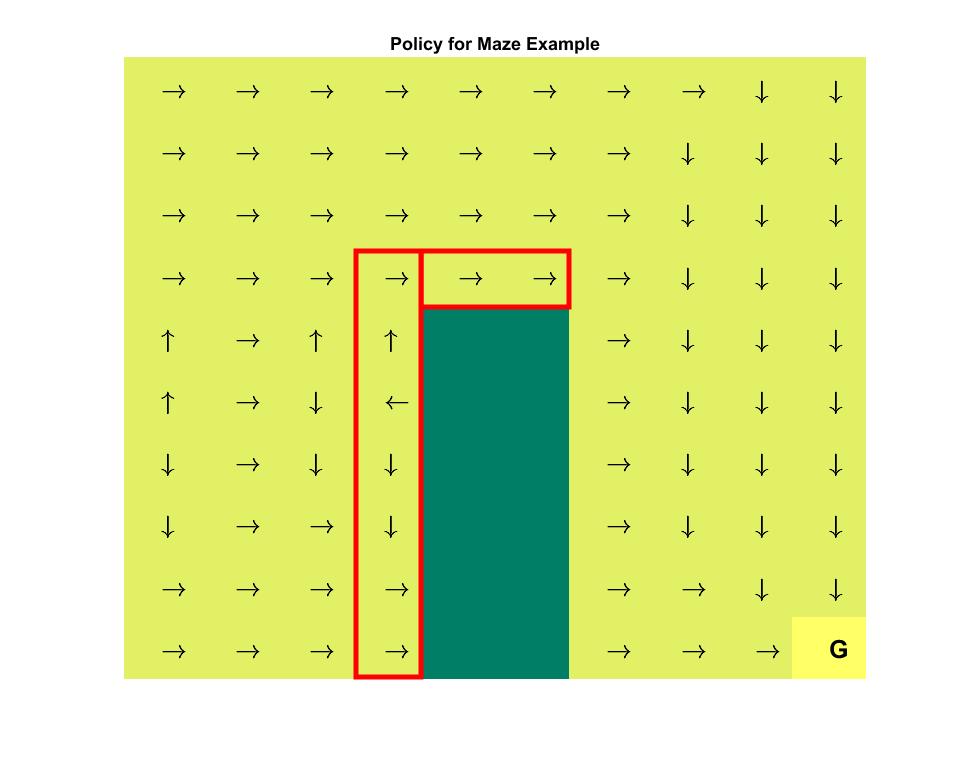}\label{subfig:risk_neutral1}}\hspace{-6mm}
 	\subfigure[Risk averse]{\includegraphics[width=0.35\linewidth, height=4.25cm]{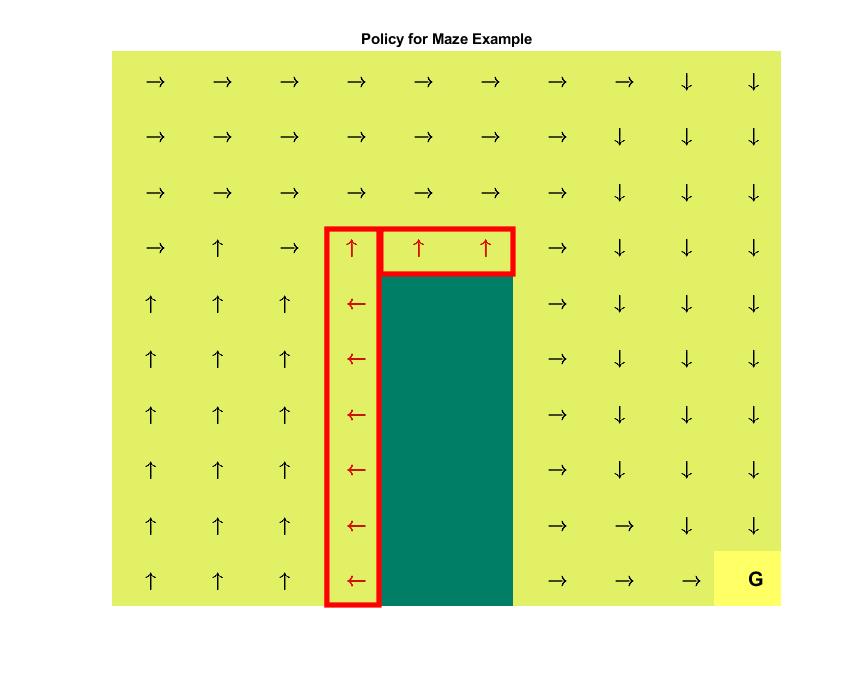}\label{subfig:mnist1}}\vspace{-4mm}
 	\caption{Results for the learning with demonstration $\mu$. We have used KL divergence as the risk function for these results.  (a) The given demonstration, (b) Risk neutral solution, (c) Risk averse solution. Note that incorporating KL divergence yields a policy that avoids unrewarding states (red block in (b) and (c)).}
 	\label{fig:risk_averse1}\vspace{-5mm}
 \end{figure}
\subsection{Variance-Sensitive Policy Optimization}
The variance risk given in Example \ref{eg:variance} characterizes the statistical robustness of the rewards from a policy. 
To evaluate the merit of this definition, consider the maze example with the rewards distribution as described in Fig.~\ref{subfig:env010}. There are two ways to go from start to destination. The reward of dark green areas is more negative than lighter shades of green, and thus it is riskier to be near darker green in terms of the returns of a trajectory. We display a sample path of the Markov chain obtained by solving the variance-sensitive policy optimization problem as Fig.~\ref{subfig:risk_averse010}, whereas the one based on the risk-neutral (classical) formulation is shown in Fig.~\ref{subfig:risk_neutral010}. Clearly, the risk-averse one avoids the dark green areas and collects a sequence of more robust rewards, yet still reaches the goal. The convergence of objective is plotted in Fig. \ref{subfig:risk_neutral0010} for the proposed algorithm. Further, we plot the associated sample mean and variance of the  discounted return over number of training indices in Figs.~\ref{subfig:risk_averse0010} and \ref{subfig:risk_averse00100}, respectively. Observe that the risk-averse policy yields comparable mean reward accumulation with reduced variance, meaning it more reliably reaches the goal without visiting unwanted states whose rewards are negative.
\subsection{Caution as Proximity to a Prior}
When a prior is available in the form of some baseline state-action distribution $\mu$ , KL divergence to the baseline makes sense as a measure of caution [cf. \eqref{risk_not_r}] as stated in Example \ref{eg:KL}. To evaluate this definition, consider the setting where the baseline $\mu$ is a risk-neutral policy (shown in Fig.~\ref{subfig:env}) learned by solving \eqref{dual} with a reward that is highly negative $r=-5$ in the dark green area, strictly positive $r=0.3$ in the light green area, and $r=1$ at the goal in the bottom right  denoted by $\mathsf{G}$ in Fig.~\ref{subfig:env}.  The transition probabilities are defined by $p=0.4$. Then, the resulting risk-neutral policy is used as a baseline policy for a  drifted MDP  whose reward is $r=0$ for the dark green area while identical elsewhere, and whose transition dynamics are defined by likelihood parameter $p=0.6$. The overarching purpose is that although the reward landscape and transition dynamics changed, the ``lessons" of past learning may still be incorporated.

The resulting policy learned from this procedure, as compared with the risk-neutral policy, are visualized in Figures \ref{subfig:risk_neutral1} and \ref{subfig:mnist1}, respectively.  Observe that the policy associated with incorporating past experience in the form of policy $\mu$ has explicitly pushed avoidance of the dark green region, whereas the risk-neutral policy resulting from \eqref{dual} does not. Thus, past (negative) experiences may be incorporated into the learned policy. This hearkens back to psychological experiments on mice: if its food supply is electrified, then a mouse will refuse to eat, even after the electricity is shut off, a form of fear conditioning. Further, we plot the associated discounted return and empirical occupancy of negative reward states with the iteration index of the optimization procedure in Algorithm \ref{algo_one} in Fig.~\ref{fig:risk1}. Overall, then, the incorporation of prior demonstrations results in the faster  learning (see Fig. \ref{subfig:risk_neutral00100}) and reduces the proportion of time spent in unrewarding states as evidenced by Fig. \ref{subfig:risk_averse0010000}.
 \begin{figure}\centering
  	\subfigure[Comparison]{\includegraphics[width=0.48\columnwidth,height=0.22\textheight]{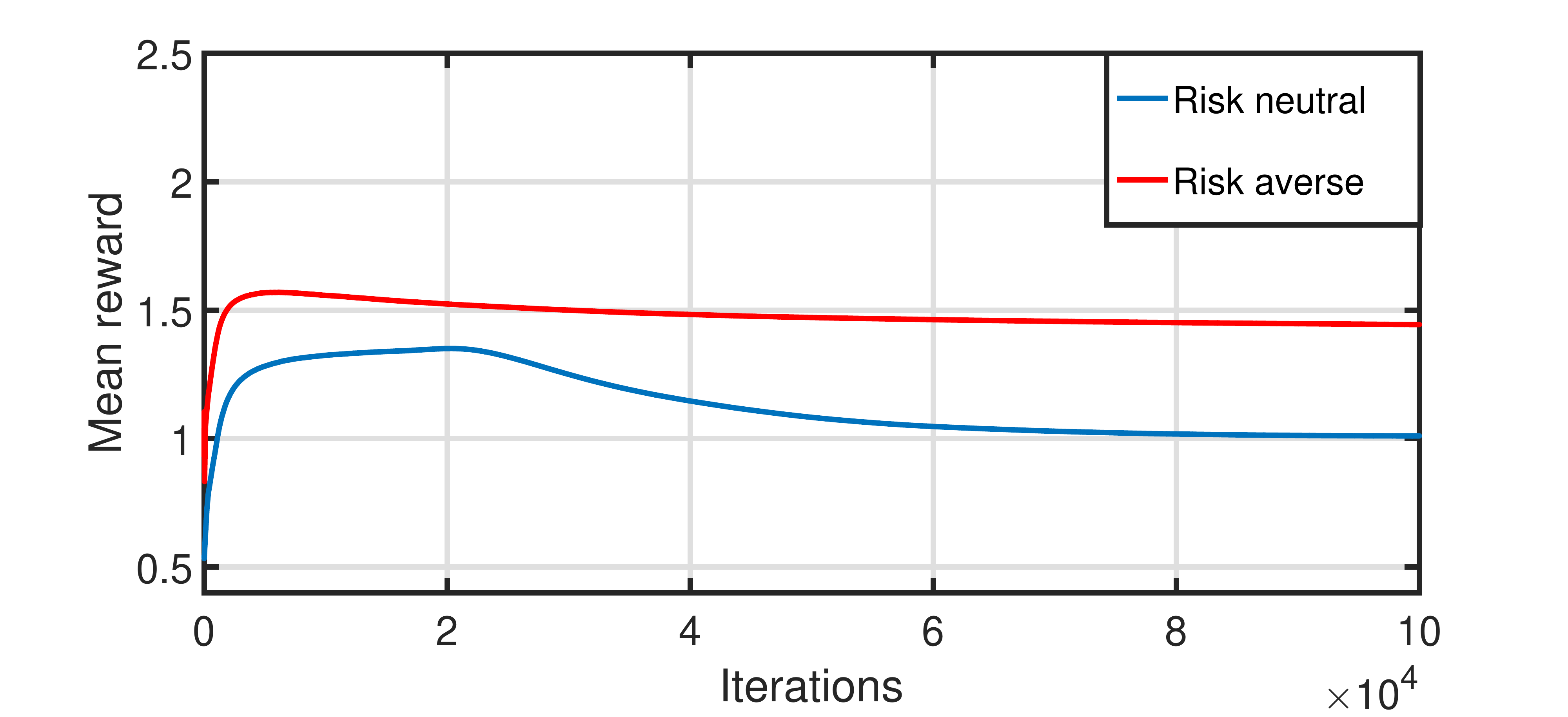}\label{subfig:risk_neutral00100}}
  		\subfigure[\!Time in unrewarding states ]{\includegraphics[width=0.48\columnwidth,height=0.22\textheight]{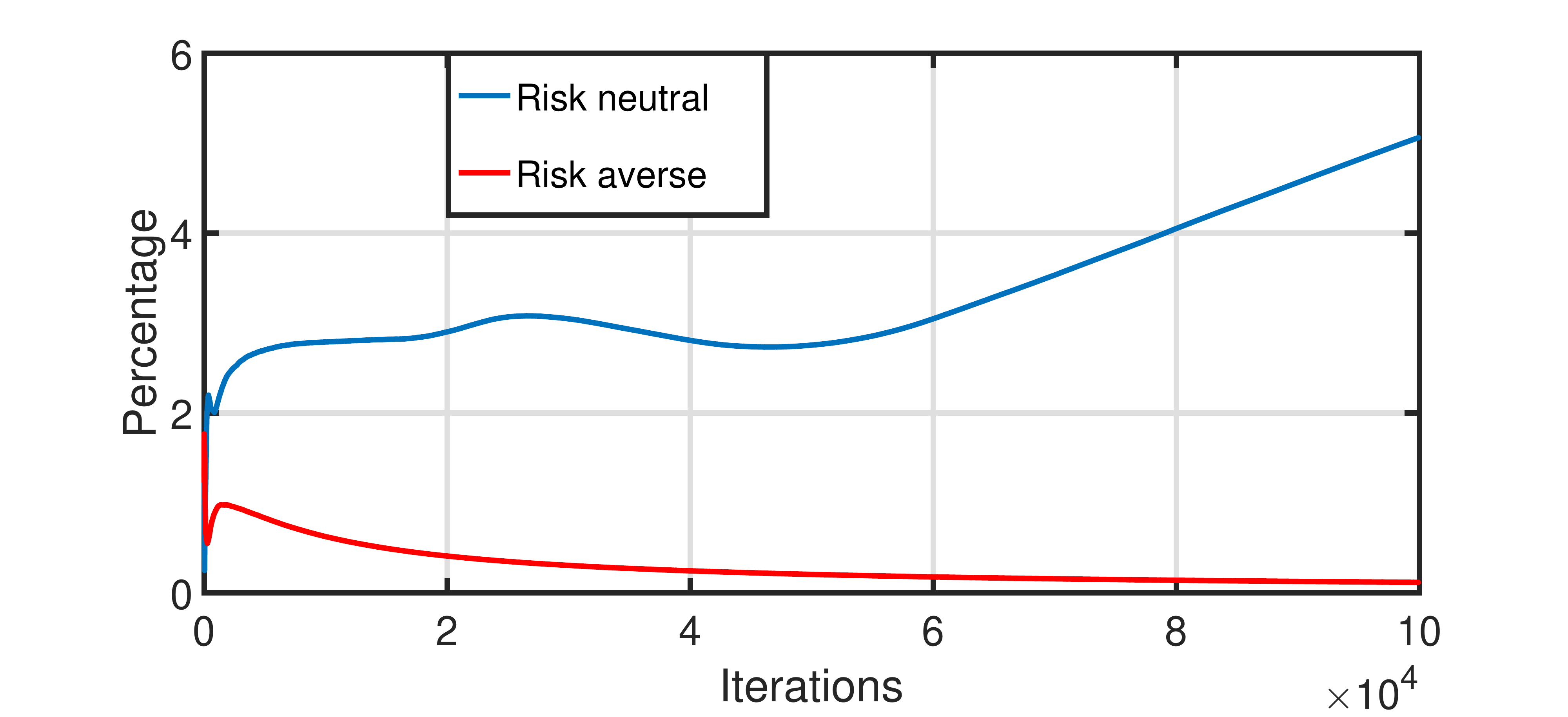}\label{subfig:risk_averse0010000}}\vspace{-3mm}
  		 \caption{We plot the running average of (a) Expected reward return, (b) percentage of time we visit the unrewarding states. Note that the prior demonstration helps in the faster convergence as clear from (a). Further, the KL divergence based risk helps to avoid the visitation of the unrewarding states as clear from the result in (b).}\vspace{-6mm}
  		 	\label{fig:risk1}
  \end{figure}
%
%



\section{Conclusions}\label{sec:conclusions}

In this work, we proposed a new definition of risk named caution which takes as input unnormalized state-action occupancy distributions, motivated by the dual of the LP formulation of the MDP. To solve the resulting risk-aware RL in an online model-free manner, we proposed a variant of stochastic primal-dual method to solve it, whose sample complexity matches optimal dependencies of risk-neutral problem. Experiments illuminated the usefulness of this definition in practice. Future work includes deriving the Bellman equations associated with cautious policy optimization \eqref{prob:dual}, generalizations to continuous spaces, and broadening caution to encapsulate other aspects of decision-making such as inattention and anticipation.

\bibliographystyle{icml2020}
\bibliography{bibliography}

\begin{thebibliography}{35}
\providecommand{\natexlab}[1]{#1}
\providecommand{\url}[1]{\texttt{#1}}
\expandafter\ifx\csname urlstyle\endcsname\relax
  \providecommand{\doi}[1]{doi: #1}\else
  \providecommand{\doi}{doi: \begingroup \urlstyle{rm}\Url}\fi

\bibitem[Achiam et~al.(2017)Achiam, Held, Tamar, and
  Abbeel]{achiam2017constrained}
Joshua Achiam, David Held, Aviv Tamar, and Pieter Abbeel.
\newblock Constrained policy optimization.
\newblock In \emph{Proceedings of the 34th International Conference on Machine
  Learning-Volume 70}, pages 22--31. JMLR. org, 2017.

\bibitem[Artzner et~al.(1999)Artzner, Delbaen, Eber, and
  Heath]{artzner1999coherent}
Philippe Artzner, Freddy Delbaen, Jean-Marc Eber, and David Heath.
\newblock Coherent measures of risk.
\newblock \emph{Mathematical finance}, 9\penalty0 (3):\penalty0 203--228, 1999.

\bibitem[Bach and Levy(2019)]{bach2019universal}
Francis Bach and Kfir~Y Levy.
\newblock A universal algorithm for variational inequalities adaptive to
  smoothness and noise.
\newblock \emph{arXiv preprint arXiv:1902.01637}, 2019.

\bibitem[Bellemare et~al.(2017)Bellemare, Dabney, and
  Munos]{bellemare2017distributional}
Marc~G Bellemare, Will Dabney, and R{\'e}mi Munos.
\newblock A distributional perspective on reinforcement learning.
\newblock In \emph{Proceedings of the 34th International Conference on Machine
  Learning-Volume 70}, pages 449--458. JMLR. org, 2017.

\bibitem[Bertsekas and Shreve(2004)]{bertsekas2004stochastic}
Dimitir~P Bertsekas and Steven Shreve.
\newblock \emph{Stochastic optimal control: the discrete-time case}.
\newblock 2004.

\bibitem[Bjork and Murgoci(2010)]{bjork2010general}
Tomas Bjork and Agatha Murgoci.
\newblock A general theory of markovian time inconsistent stochastic control
  problems.
\newblock \emph{Available at SSRN 1694759}, 2010.

\bibitem[Chen and Wang(2016)]{chen2016stochastic}
Yichen Chen and Mengdi Wang.
\newblock Stochastic primal-dual methods and sample complexity of reinforcement
  learning.
\newblock \emph{arXiv preprint arXiv:1612.02516}, 2016.

\bibitem[Chen et~al.(2018)Chen, Li, and Wang]{chen2018scalable}
Yichen Chen, Lihong Li, and Mengdi Wang.
\newblock Scalable bilinear pi learning using state and action features.
\newblock \emph{arXiv preprint arXiv:1804.10328}, 2018.

\bibitem[Chow et~al.(2017)Chow, Ghavamzadeh, Janson, and Pavone]{chow2017risk}
Yinlam Chow, Mohammad Ghavamzadeh, Lucas Janson, and Marco Pavone.
\newblock Risk-constrained reinforcement learning with percentile risk
  criteria.
\newblock \emph{The Journal of Machine Learning Research}, 18\penalty0
  (1):\penalty0 6070--6120, 2017.

\bibitem[De~Farias and Van~Roy(2003)]{de2003linear}
Daniela~Pucci De~Farias and Benjamin Van~Roy.
\newblock The linear programming approach to approximate dynamic programming.
\newblock \emph{Operations research}, 51\penalty0 (6):\penalty0 850--865, 2003.

\bibitem[Ghavamzadeh et~al.(2015)Ghavamzadeh, Mannor, Pineau, Tamar,
  et~al.]{ghavamzadeh2015bayesian}
Mohammad Ghavamzadeh, Shie Mannor, Joelle Pineau, Aviv Tamar, et~al.
\newblock Bayesian reinforcement learning: A survey.
\newblock \emph{Foundations and Trends{\textregistered} in Machine Learning},
  8\penalty0 (5-6):\penalty0 359--483, 2015.

\bibitem[Jiang and Powell(2018)]{jiang2018risk}
Daniel~R Jiang and Warren~B Powell.
\newblock Risk-averse approximate dynamic programming with quantile-based risk
  measures.
\newblock \emph{Mathematics of Operations Research}, 43\penalty0 (2):\penalty0
  554--579, 2018.

\bibitem[Karatzoglou et~al.(2013)Karatzoglou, Baltrunas, and
  Shi]{karatzoglou2013learning}
Alexandros Karatzoglou, Linas Baltrunas, and Yue Shi.
\newblock Learning to rank for recommender systems.
\newblock In \emph{Proceedings of the 7th ACM conference on Recommender
  systems}, pages 493--494, 2013.

\bibitem[Keramati et~al.(2019)Keramati, Dann, Tamkin, and
  Brunskill]{keramati2019}
Ramtin Keramati, Christoph Dann, Alex Tamkin, and Emma Brunskill.
\newblock Being optimistic to be conservative: Quickly learning a cvar policy.
\newblock \emph{arXiv preprint arXiv:1911.01546}, 2019.

\bibitem[Krishnamurthy et~al.(2003)Krishnamurthy, Martin, and
  Abad]{krishnamurthy2003implementation}
Vikram Krishnamurthy, K~Martin, and F~Vasquez Abad.
\newblock Implementation of gradient estimation to a constrained markov
  decision problem.
\newblock In \emph{42nd IEEE International Conference on Decision and Control
  (IEEE Cat. No. 03CH37475)}, volume~5, pages 4841--4846. IEEE, 2003.

\bibitem[Markowitz(1952)]{markowitz1952portfolio}
Harry Markowitz.
\newblock Portfolio selection.
\newblock \emph{The journal of finance}, 7\penalty0 (1):\penalty0 77--91, 1952.

\bibitem[Mnih et~al.(2013)Mnih, Kavukcuoglu, Silver, Graves, Antonoglou,
  Wierstra, and Riedmiller]{mnih2013playing}
Volodymyr Mnih, Koray Kavukcuoglu, David Silver, Alex Graves, Ioannis
  Antonoglou, Daan Wierstra, and Martin Riedmiller.
\newblock Playing atari with deep reinforcement learning.
\newblock \emph{arXiv preprint arXiv:1312.5602}, 2013.

\bibitem[Nemirovski and Shapiro(2007)]{nemirovski2007convex}
Arkadi Nemirovski and Alexander Shapiro.
\newblock Convex approximations of chance constrained programs.
\newblock \emph{SIAM Journal on Optimization}, 17\penalty0 (4):\penalty0
  969--996, 2007.

\bibitem[Paternain et~al.(2019)Paternain, Calvo-Fullana, Chamon, and
  Ribeiro]{paternain2019learning}
Santiago Paternain, Miguel Calvo-Fullana, Luiz~FO Chamon, and Alejandro
  Ribeiro.
\newblock Learning safe policies via primal-dual methods.
\newblock In \emph{Proceedings of the 58th IEEE Conference on Decision and
  Control, IEEE}, 2019.

\bibitem[Peidro et~al.(2009)Peidro, Mula, Poler, and
  Lario]{peidro2009quantitative}
David Peidro, Josefa Mula, Ra{\'u}l Poler, and Francisco-Cruz Lario.
\newblock Quantitative models for supply chain planning under uncertainty: a
  review.
\newblock \emph{The International Journal of Advanced Manufacturing
  Technology}, 43\penalty0 (3-4):\penalty0 400--420, 2009.

\bibitem[Prashanth(2014)]{prashanth2014policy}
LA~Prashanth.
\newblock Policy gradients for cvar-constrained mdps.
\newblock In \emph{International Conference on Algorithmic Learning Theory},
  pages 155--169. Springer, 2014.

\bibitem[Roca et~al.(2011)Roca, Ford, McRobert, and
  Williams]{roca2011identifying}
Andr{\'e} Roca, Paul~R Ford, Allistair~P McRobert, and A~Mark Williams.
\newblock Identifying the processes underpinning anticipation and
  decision-making in a dynamic time-constrained task.
\newblock \emph{Cognitive processing}, 12\penalty0 (3):\penalty0 301--310,
  2011.

\bibitem[Rockafellar and Uryasev(2002)]{rockafellar2002conditional}
R~Tyrrell Rockafellar and Stanislav Uryasev.
\newblock Conditional value-at-risk for general loss distributions.
\newblock \emph{Journal of banking \& finance}, 26\penalty0 (7):\penalty0
  1443--1471, 2002.

\bibitem[Ruszczy{\'n}ski(2010)]{ruszczynski2010risk}
Andrzej Ruszczy{\'n}ski.
\newblock Risk-averse dynamic programming for markov decision processes.
\newblock \emph{Mathematical programming}, 125\penalty0 (2):\penalty0 235--261,
  2010.

\bibitem[Schulman et~al.(2015)Schulman, Moritz, Levine, Jordan, and
  Abbeel]{schulman2015high}
John Schulman, Philipp Moritz, Sergey Levine, Michael Jordan, and Pieter
  Abbeel.
\newblock High-dimensional continuous control using generalized advantage
  estimation.
\newblock \emph{arXiv preprint arXiv:1506.02438}, 2015.

\bibitem[Sims(2003)]{sims2003implications}
Christopher~A Sims.
\newblock Implications of rational inattention.
\newblock \emph{Journal of monetary Economics}, 50\penalty0 (3):\penalty0
  665--690, 2003.

\bibitem[Sutton and Barto(2018)]{sutton2018reinforcement}
Richard~S Sutton and Andrew~G Barto.
\newblock \emph{Reinforcement learning: An introduction}.
\newblock 2018.

\bibitem[Tamar et~al.(2015)Tamar, Chow, Ghavamzadeh, and
  Mannor]{tamar2015policy}
Aviv Tamar, Yinlam Chow, Mohammad Ghavamzadeh, and Shie Mannor.
\newblock Policy gradient for coherent risk measures.
\newblock In \emph{Advances in Neural Information Processing Systems}, pages
  1468--1476, 2015.

\bibitem[Tom et~al.(2007)Tom, Fox, Trepel, and Poldrack]{tom2007neural}
Sabrina~M Tom, Craig~R Fox, Christopher Trepel, and Russell~A Poldrack.
\newblock The neural basis of loss aversion in decision-making under risk.
\newblock \emph{Science}, 315\penalty0 (5811):\penalty0 515--518, 2007.

\bibitem[Vinyals et~al.(2019)Vinyals, Babuschkin, Chung, Mathieu, Jaderberg,
  Czarnecki, Dudzik, Huang, Georgiev, Powell, et~al.]{vinyals2019alphastar}
Oriol Vinyals, Igor Babuschkin, Junyoung Chung, Michael Mathieu, Max Jaderberg,
  Wojciech~M Czarnecki, Andrew Dudzik, Aja Huang, Petko Georgiev, Richard
  Powell, et~al.
\newblock Alphastar: Mastering the real-time strategy game starcraft ii.
\newblock \emph{DeepMind blog}, page~2, 2019.

\bibitem[Wang(2017{\natexlab{a}})]{wang2017primal}
Mengdi Wang.
\newblock Primal-dual pi learning: Sample complexity and sublinear run time for
  ergodic markov decision problems.
\newblock \emph{arXiv preprint arXiv:1710.06100}, 2017{\natexlab{a}}.

\bibitem[Wang(2017{\natexlab{b}})]{wang2017randomized}
Mengdi Wang.
\newblock Randomized linear programming solves the discounted markov decision
  problem in nearly-linear (sometimes sublinear) running time.
\newblock \emph{arXiv preprint arXiv:1704.01869}, 2017{\natexlab{b}}.

\bibitem[Wang and Carreira-Perpin{\'a}n(2013)]{wang2013projection}
Weiran Wang and Miguel~A Carreira-Perpin{\'a}n.
\newblock Projection onto the probability simplex: An efficient algorithm with
  a simple proof, and an application.
\newblock \emph{arXiv preprint arXiv:1309.1541}, 2013.

\bibitem[Yang et~al.(2019)Yang, Zhao, Lin, Qin, Bian, and Liu]{yang2019fully}
Derek Yang, Li~Zhao, Zichuan Lin, Tao Qin, Jiang Bian, and Tie-Yan Liu.
\newblock Fully parameterized quantile function for distributional
  reinforcement learning.
\newblock In \emph{Advances in Neural Information Processing Systems}, pages
  6190--6199, 2019.

\bibitem[Yu et~al.(2019)Yu, Yang, Kolar, and Wang]{yu2019convergent}
Ming Yu, Zhuoran Yang, Mladen Kolar, and Zhaoran Wang.
\newblock Convergent policy optimization for safe reinforcement learning.
\newblock In \emph{Advances in Neural Information Processing Systems}, pages
  3121--3133, 2019.

\end{thebibliography}

 
\clearpage\newpage\onecolumn
\appendix
\section*{\centering Supplementary Material for ``Cautious Reinforcement Learning via Distributional Risk in the Dual Domain"}

\section{The Physical Meaning of Dual LP}
The dual LP formulation \eqref{dual} has a clear physical meaning. Suppose $\xi\geq0$ and $\|\xi\|_1=1$ is a distribution over the state space $\cS$. Then the following proposition explains the meaning of the dual problem. 
\begin{proposition}
	\label{proposition:dual-meaning}
Suppose the variable $\lambda\in\RR_+^{|\cS|\times|\cA|}$ satisfies the conditions
\begin{equation}
\label{prop:dual-meaning-1}
	\lambda\geq0 \quad \mbox{and}\quad \sum\limits_{a\in\cA}(I-\gamma P_a^\top)\lambda_a=\xi,
\end{equation}
Then $\lambda$ is an  \textbf{unnormalized distribution}, or \textbf{flux}, under the randomized policy $\pi$:
\begin{equation}
\label{prop:dual-meaning-2}
\pi(a|s) = \frac{\lambda_{sa}}{\sum_{a'\in\cA}\lambda_{sa'}}, \,\, \quad\mbox{for}\quad\forall a\in\cA, \forall s\in\cS.
\end{equation}
Furthermore, it satisfies 
\begin{equation}
\label{prop:dual-meaning-3}
\lambda_{sa} = \sum_{t=0}^\infty\gamma^t\cdot\mathbb{P}\bigg(i_t = s, a_t = a\,\,\bigg|\,\,i_0\sim\xi, a_t\sim \pi(\cdot|i_t)\bigg)
\end{equation}
and 
\begin{equation}
\label{prop:dual-meaning-4}
\langle\lambda,r\rangle = \mathbb{E}\left[\sum_{t=0}^\infty \gamma^t r_{i_t i_{t+1} a_t}~\bigg|~ i_0 \sim \xi, a_t\sim \pi(\cdot|i_t)\right].
\end{equation}
\end{proposition}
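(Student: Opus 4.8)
The plan is to introduce the discounted state--action occupancy measure generated by the policy $\pi$ of \eqref{prop:dual-meaning-2}, show it satisfies the same balance equation \eqref{prop:dual-meaning-1} as $\lambda$, and then argue uniqueness so that it must equal $\lambda$; equation \eqref{prop:dual-meaning-4} then drops out by a routine interchange of sums.

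First I would set $\nu_{sa}:=\sum_{t=0}^\infty\gamma^t\,\mathbb{P}\big(i_t=s,a_t=a\mid i_0\sim\xi,\ a_t\sim\pi(\cdot|i_t)\big)$ and $\nu_s:=\sum_{a}\nu_{sa}$. Since $\sum_{s,a}\mathbb{P}(i_t=s,a_t=a)=1$ for every $t$, we get $\sum_{s,a}\nu_{sa}=\sum_{t\ge0}\gamma^t=(1-\gamma)^{-1}<\infty$, so $\nu$ is a finite nonnegative array (and, once \eqref{prop:dual-meaning-3} is proved, this is exactly the ``unnormalized distribution / flux'' claim, namely $\|\lambda\|_1=(1-\gamma)^{-1}$). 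Writing $q_t(s):=\mathbb{P}(i_t=s)$ and using $q_t(s,a)=q_t(s)\pi(a|s)$ together with the one-step recursion $q_{t+1}(j)=\sum_{s,a}q_t(s)\pi(a|s)P_a(s,j)$ and $q_0=\xi$, I would multiply by $\gamma^t$, sum over $t$, and re-index to obtain
\[
\sum_{a\in\cA}\big[(I-\gamma P_a^\top)\nu_a\big]_j=\nu_j-\gamma\cdot\gamma^{-1}\big(\nu_j-\xi_j\big)=\xi_j ,
\]
i.e.\ $\nu$ solves the same linear system as $\lambda$.

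Next I would exploit the product structure. For a nonnegative vector $\mu$ let $\mu^\pi$ denote the array with entries $\mu_s\pi(a|s)$; a direct componentwise computation, using $\sum_a\pi(a|s)=1$, gives $\sum_{a}(I-\gamma P_a^\top)\mu^\pi_a=(I-\gamma (P^\pi)^\top)\mu$, where $P^\pi(s,j):=\sum_a\pi(a|s)P_a(s,j)$ is row-stochastic. Because $\gamma\in(0,1)$, the matrix $I-\gamma(P^\pi)^\top$ is invertible, so $(I-\gamma(P^\pi)^\top)\mu=\xi$ has a unique solution. Now $\nu$ has this product form by construction, and so does $\lambda$: $\lambda_{sa}=\lambda_s\pi(a|s)$ holds by the definition of $\pi$ whenever $\lambda_s:=\sum_{a'}\lambda_{sa'}>0$, and holds trivially (both sides zero, by $\lambda\ge0$) when $\lambda_s=0$, whatever convention is adopted for $\pi$ on such zero-flux states (which, as the balance equation shows, can only occur where $\xi_s=0$). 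Hence $\lambda_s$ and $\nu_s$ both solve $(I-\gamma(P^\pi)^\top)\mu=\xi$, so $\lambda_s=\nu_s$ for all $s$ and therefore $\lambda_{sa}=\lambda_s\pi(a|s)=\nu_s\pi(a|s)=\nu_{sa}$, which is \eqref{prop:dual-meaning-3}. Finally, substituting \eqref{prop:dual-meaning-3} into $\langle\lambda,r\rangle=\sum_{s,a}\lambda_{sa}r_{sa}$, interchanging the (absolutely convergent, since $\hat r\in[0,1]$) sums over $t$ and over $(s,a)$, and using $r_{sa}=\sum_jP_a(s,j)\hat r_{sja}=\mathbb{E}[\hat r_{i_ti_{t+1}a_t}\mid i_t=s,a_t=a]$ with the tower property yields \eqref{prop:dual-meaning-4}.

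I expect the only genuine obstacle to be the uniqueness step: recognizing that the balance constraint \eqref{prop:dual-meaning-1} by itself does not determine $\lambda$, and that one must factor through the policy to collapse it to the invertible state-only operator $I-\gamma(P^\pi)^\top$. The remaining points --- the geometric-series identity for $\nu$, the handling of zero-flux states, and the justification of the interchange of summations --- are routine bookkeeping.
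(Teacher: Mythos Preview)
Your argument is correct and closely related to, but not identical with, the paper's. The paper proceeds entirely in the state--action space: it multiplies the balance equation \eqref{prop:dual-meaning-1} by $\pi(a|s)$ to obtain $(I-\gamma\hat P_\pi^\top)\lambda=\hat\xi$, where $\hat P_\pi(s,a;s',a')=P_a(s,s')\pi(a'|s')$ is the state--action transition kernel and $\hat\xi_{sa}=\xi_s\pi(a|s)$, and then inverts via the Neumann series $\lambda^\top=\hat\xi^\top\sum_{t\ge0}\gamma^t\hat P_\pi^{\,t}$ to read off \eqref{prop:dual-meaning-3} directly. You instead collapse to the state space: using the factorization $\lambda_{sa}=\lambda_s\pi(a|s)$ you reduce \eqref{prop:dual-meaning-1} to the $|\cS|$-dimensional system $(I-\gamma(P^\pi)^\top)\mu=\xi$, and invoke uniqueness to match $\lambda$ with the independently defined occupancy $\nu$. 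Both routes rest on the same invertibility fact (stochastic kernel, $\gamma<1$); yours works on a smaller operator and, by arguing via uniqueness, handles the zero-flux states $\sum_{a'}\lambda_{sa'}=0$ explicitly, whereas the paper's multiplication step tacitly assumes this sum is positive (harmless under the standing assumption $\xi>0$, but left implicit there).
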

\begin{proof}
	Under the initial distribution $\xi$ and the randomized policy $\pi:\cS\mapsto\Delta_{|\cA|}$ defined in \eqref{prop:dual-meaning-2}, we define a new initial distribution $\hat \xi$ as 
	$$\hat{\xi}_{sa} = \xi_s\cdot\pi(a|s)\quad\mbox{for}\quad \forall s\in\cS, a\in\cA$$
	as the distribution of the initial state-action pair $(s_0,a_0)$. Therefore the dynamics of the state-action pairs $(s_t,a_t)$ form another Markov chain with transition matrix $\hat P\in\RR_+^{|\cS||\cA|\times|\cA||\cS|}$ defined as 
	$$\hat P_\pi(s,a;s',a') = P_a(s,s')\cdot\pi(a'|s').$$
	First, let us prove that \eqref{prop:dual-meaning-1} is equivalent to \eqref{prop:dual-meaning-3}. 	For the ease of notation, we used the multi-indices. Let us view both $r$ and $\lambda$ as vectors with  $s,a$ being a multi-index.  Note that \eqref{prop:dual-meaning-1} implies that for all $s\in\cS$
	$$\xi_s = \sum_{a'\in\cA} \lambda_{sa'} - \gamma\sum_{a'\in\cA}\sum_{s'\in\cS} P_{a'}(s',s)\lambda_{s'a'}.$$
	Multiplying both sides by $\pi(a|s) = \frac{\lambda_{sa}}{\sum_{a'\in\cA}\lambda_{sa'}}$, we get 
	\begin{eqnarray*}
	\hat{\xi}_{sa} & = & \lambda_{sa} - \gamma\sum_{a'\in\cA}\sum_{s'\in\cS} P_{a'}(s',s)\cdot\pi(a|s)\cdot\lambda_{s'a'}\\
	& = & \lambda_{sa} - \gamma\sum_{a'\in\cA}\sum_{s'\in\cS}\hat P_\pi(s',a';s,a)\lambda_{s'a'}
	\end{eqnarray*}
	for any $s\in\cS, a\in\cA$. If we write this equation in a compact matrix form, we get 
	$$\hat \xi = (I-\gamma \hat{P}_\pi^\top)\lambda.$$
	Note that $\|\gamma \hat{P}_\pi^\top\|_2 \leq \gamma < 1$, we know $(I-\gamma \hat{P}_\pi^\top)^{-1} = \sum_{i=0}^{\infty} \gamma^i (\hat{P}_\pi^i)^\top$. Consequently, 
	\begin{eqnarray*}
	\lambda^\top  =  \hat \xi^\top(I-\gamma \hat{P}_\pi)^{-1}
	 =  \hat \xi^\top  + \gamma \hat \xi^\top\hat P_\pi  + \gamma^2\hat \xi^\top\hat P_\pi^2 + \cdots
	\end{eqnarray*}
	If we write the above equation in an elementwise way, we get \eqref{prop:dual-meaning-3}. Consequently, we also have 
	\begin{eqnarray*} 
	\lambda^\top r & = & \hat \xi^\top r + \gamma \hat \xi^\top\hat P_\pi r + \gamma^2\hat \xi^\top\hat P_\pi^2r + \cdots\\ 
	& = &
	\mathbb{E}\left[\sum_{t=0}^\infty \gamma^t \hat r_{i_t i_{t+1} a_t}~\bigg|~ i_0 \sim \xi, a_t\sim \pi(\cdot|i_t)\right],
	\end{eqnarray*}
which is as stated in \eqref{prop:dual-meaning-4}
\end{proof} 

%
%
%

\section{Proof of Lemma \ref{lemma:bound-V}}\label{proof_lemma}
\begin{proof}
	Consider the min-max saddle point problem,
	\begin{equation}
	\max_{\lambda \geq 0} \min_{v\in\R^{|\cS|}}  L\!\left(\!v,\!\lambda\right)\!=\!\ip{\lambda,\!r}-c\rho(\lambda)+\ip{\xi,\!v}+\!\sum\limits_{a\in\cA}\!\lambda_a^\top\!(\!\gamma P_a-\!I)v,
	\end{equation}
	Then $(\lambda^*,v^*)$ solves this saddle point problem if and only if  
	\begin{equation} 
	\label{lm:KKT-1}
	\lambda^* = \underset{\lambda\geq0}{\operatorname{argmax}}\,\, L(v^*,\lambda)\qquad\mbox{and}\qquad\sum\limits_{a\in\cA}(I-\gamma P_a^\top)\lambda_a^*-\xi=0.
	\end{equation}
	A remark is that, this is also the KKT condition for the original convex problem \eqref{prob:dual}. Due to the concavity of $L(v^*,\lambda)$ for any fixed $v^*$, the condition $\lambda^* = \underset{\lambda\geq0}{\operatorname{argmax}}\,\, L(v^*,\lambda)$ is equivalent to the existence of a subgradient $w^*\in\partial_\lambda L(v^*,\lambda^*)$ s.t.
	\begin{equation}
	\label{lm:KKT-2}
	\langle w^*, \lambda-\lambda^*\rangle\leq0\quad\mbox{for}\quad\forall \lambda\geq0.
	\end{equation}
	If we use $u^*$ to denote the specific subgradient in $\partial \rho(\lambda^*)$ that consists $w^*$.	
	For any fixed $s,a$, we know
	$w^*_{sa} = -(e_s - \gamma P_{as})^\top v^*+ r_{sa} - cu_{sa}^*.$ If we choose $\lambda_{s'a'} = \lambda^*_{s'a'}$ for $\forall (s',a')\neq(s,a)$, \eqref{lm:KKT-2} further implies  
	$$\big((e_s - \gamma P_{as})^\top v^*- r_{sa} + cu^*_{sa}\big)(\lambda_{sa}-\lambda^*_{sa})\geq 0,$$
	where $P_{as}$ is a column vector, with $P_{as}(s') = \cP(s'|a,s)$. Combine this inequality with \eqref{lm:KKT-1}, we can formally write the final optimality condition as follows.
	\begin{equation}
	\label{KKT}\exists u^*\in\partial\rho(\lambda^*)\,\,\,\mathrm{s.t.}\,\,\,
	\begin{cases}
	\sum\limits_{a\in\cA}(I-\gamma P_a^\top)\lambda_a^*=\xi, \quad \lambda^* \geq 0,\\
	\big((e_s - \gamma P_{as})^\top v^*- r_{sa} + cu^*_{sa}\big)(\lambda_{sa}-\lambda^*_{sa})\geq 0,\,\,\,\, \forall s\in\cS, \forall a\in\cA, \forall \lambda_{sa}\geq0.
	\end{cases}
	\end{equation}
	
	By \eqref{prop:dual-meaning-3} of Proposition \ref{proposition:dual-meaning}, we know that $$\sum_{a\in\cA}\lambda^*_{sa}\geq\sum_{a\in\cA}\mathbf{Prob}\big(i_0=s,a_0=a|i_0\sim\xi,a_0\sim\pi(\cdot|i_0)\big) = \xi_s>0\quad \mbox{ for }\quad\forall s\in\cS.$$
	Therefore, for any $s\in\cS$, there exists an $a_s$ such that $\lambda^*_{sa_s}>0.$ Therefore, the second inequality of the optimality condition \eqref{KKT} implies that, 
	$$(e_s - \gamma P_{a_ss})^\top v^*- r_{sa_s} + cu^*_{sa_s} = 0\quad\mbox{ for }\quad \forall s\in\cS.$$ 
	Let us denote $\tilde r := [r_{1a_1},\cdots,r_{|\cS|a_{|\cS|}}]^\top\in\RR^{|\cS|}$, $\tilde u:=[u^*_{1a_1},\cdots,u^*_{|\cS|a_{|\cS|}}]^\top\in\RR^{|\cS|}$ and $\tilde P: = [ P_{a_11},\cdots, P_{a_{|\cS|}{|\cS|}}]\in\RR^{|\cS|\times|\cS|}$. Then we can write
	$$\left(I-\gamma\tilde P^\top\right)v^* = \tilde r - c\tilde u.$$
	As a result, 
	\begin{eqnarray*}
		1 + c\sigma \geq \|\tilde r-c\tilde u\|_\infty
		 =  \|(I-\gamma\tilde P^\top)v^*\|_\infty
		\geq  \|v^*\|_\infty -  \|\gamma\tilde P^\top v^*\|_\infty
		\geq  (1-\gamma)\|v^*\|_\infty,
	\end{eqnarray*}
	which implies the statement of Lemma \ref{lemma:bound-V}. 
\end{proof}

\section{Proof of Theorem \ref{theorem:convergence}}\label{proof_convergence_bound}
\begin{proof} 
To make the proof of this result clearer, we will separate part of the major steps into several different lemmas. 
\begin{lemma}
	\label{lemma:convergence-v}
	Suppose the iterate sequence $\{v^t\}$ is updated according to the rule \eqref{defn:v-update} in Algorithm \ref{algo_one}. Then for any $t$, 
	\begin{align}  
		\label{lm:convergence-1}
		\langle \nabla_v L(v^t,\lambda^t), v^t-v\rangle
		\leq & \frac{1}{2\alpha}(\|v^t-v\|^2 - \|v^{t+1}-v\|^2) + \frac{\alpha}{2}\|\hat{\nabla}_v L(v^t,\lambda^t)\|^2\nonumber 
		\\
		& + \langle\nabla_v L(v^t,\lambda^t)-\hat{\nabla}_v L(v^t,\lambda^t),v^t-v\rangle.
	\end{align}
\end{lemma}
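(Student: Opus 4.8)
\textbf{Proof plan for Lemma \ref{lemma:convergence-v}.}

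The plan is to use the standard ``three-point'' or ``prox-inequality'' argument for projected stochastic subgradient steps. First I would recall that the update \eqref{defn:v-update} is $v^{t+1} = \Pi_\cV(v^t - \alpha \hat{\nabla}_v L(v^t,\lambda^t))$, which by the characterization of the Euclidean projection onto the convex set $\cV$ means that for every $v \in \cV$,
\begin{equation*}
\langle v^t - \alpha \hat{\nabla}_v L(v^t,\lambda^t) - v^{t+1}, \, v - v^{t+1}\rangle \leq 0.
\end{equation*}
Rearranging this gives $\langle \hat{\nabla}_v L(v^t,\lambda^t), v^{t+1} - v\rangle \leq \frac{1}{\alpha}\langle v^t - v^{t+1}, v^{t+1} - v\rangle$.

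Next I would apply the elementary identity $\langle v^t - v^{t+1}, v^{t+1} - v\rangle = \frac{1}{2}\|v^t - v\|^2 - \frac{1}{2}\|v^{t+1} - v\|^2 - \frac{1}{2}\|v^t - v^{t+1}\|^2$ to obtain
\begin{equation*}
\langle \hat{\nabla}_v L(v^t,\lambda^t), v^{t+1} - v\rangle \leq \frac{1}{2\alpha}\|v^t - v\|^2 - \frac{1}{2\alpha}\|v^{t+1} - v\|^2 - \frac{1}{2\alpha}\|v^t - v^{t+1}\|^2.
\end{equation*}
Then I would write $\langle \hat{\nabla}_v L(v^t,\lambda^t), v^t - v\rangle = \langle \hat{\nabla}_v L(v^t,\lambda^t), v^{t+1} - v\rangle + \langle \hat{\nabla}_v L(v^t,\lambda^t), v^t - v^{t+1}\rangle$, and bound the last term using Young's inequality: $\langle \hat{\nabla}_v L(v^t,\lambda^t), v^t - v^{t+1}\rangle \leq \frac{\alpha}{2}\|\hat{\nabla}_v L(v^t,\lambda^t)\|^2 + \frac{1}{2\alpha}\|v^t - v^{t+1}\|^2$. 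The $\frac{1}{2\alpha}\|v^t - v^{t+1}\|^2$ terms cancel, leaving the bound on $\langle \hat{\nabla}_v L(v^t,\lambda^t), v^t - v\rangle$ in terms of the telescoping distances and the squared gradient norm.

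Finally, to pass from the stochastic gradient $\hat{\nabla}_v L$ to the true gradient $\nabla_v L$ on the left-hand side, I would simply add and subtract: $\langle \nabla_v L(v^t,\lambda^t), v^t - v\rangle = \langle \hat{\nabla}_v L(v^t,\lambda^t), v^t - v\rangle + \langle \nabla_v L(v^t,\lambda^t) - \hat{\nabla}_v L(v^t,\lambda^t), v^t - v\rangle$, which produces exactly the extra inner-product term in \eqref{lm:convergence-1}. There is no real obstacle here — the lemma is a purely deterministic inequality valid pathwise, and the only mild care needed is to make sure the arbitrary comparison point $v$ is taken in $\cV$ so that the projection inequality applies; the noise term is left explicit precisely so it can be handled (in expectation, using unbiasedness of $\hat{\nabla}_v L$) later in the proof of Theorem \ref{theorem:convergence}.
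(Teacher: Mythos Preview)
Your argument is correct and yields exactly the claimed inequality. It differs slightly from the paper's route: the paper does not invoke the variational characterization of the projection or Young's inequality, but instead uses the simpler nonexpansiveness property $\|\Pi_{\cV}(y)-v\|^2\le\|y-v\|^2$ for $v\in\cV$ with $y=v^t-\alpha\hat\nabla_vL(v^t,\lambda^t)$, expands the squared norm, and then adds and subtracts $\nabla_vL$ inside the resulting inner product before dividing by $2\alpha$. Your three-point/prox argument is the mirror-descent-style derivation; it is a couple of steps longer here but has the advantage of generalizing verbatim to non-Euclidean geometries, whereas the paper's nonexpansiveness shortcut is specific to Euclidean projection. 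Either way, the restriction $v\in\cV$ that you flag is indeed required, and both proofs are purely deterministic, leaving the noise term to be averaged out later.
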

The proof of this lemma is provided in Appendix \ref{appendix:lemma:convergence-v}.
\begin{lemma}
	\label{lemma:convergence-lam}
	Suppose the iterate sequence $\{\lambda^t\}$ is updated according to the rule \eqref{defn:lam-update-1} and \eqref{defn:lam-update-2} in Algorithm \ref{algo_one}. For $\forall t$,  
	\begin{align}
		\label{lm:convergence-3}
		- \langle w^t, \lambda^t-\lambda\rangle 
		\leq & \frac{1}{(1-\gamma)\beta}\big(KL\big((1-\gamma)\lambda\,||\,(1-\gamma)\lambda^t\big) - KL\big((1-\gamma)\lambda\,||\,(1-\gamma)\lambda^{t+1}\big)\big)  \nonumber\\
		& +
		\frac{\beta}{2}\sum_{s,a}\lambda_{sa}^t(\Delta_{sa}^t)^2 + \langle\hat \partial_\lambda L(v^t,\lambda^t)-w^t,\lambda^t-\lambda\rangle,
	\end{align}
    where $w^t := \EE\left[\hat\partial_\lambda L(v^t,\lambda^t)\big|\lambda^t,v^t\right] + (M_1+M_2)\cdot\mathbf{1}\in\partial_\lambda L(v^t,\lambda^t)$ is a subgradient vector.
\end{lemma}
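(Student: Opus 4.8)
The statement to prove is Lemma \ref{lemma:convergence-lam}, which bounds the inner product $-\langle w^t, \lambda^t - \lambda\rangle$ for a subgradient $w^t \in \partial_\lambda L(v^t, \lambda^t)$ using the mirror-descent-style update with KL divergence as the proximal term.

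\textbf{Proof plan.}

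The plan is to treat the dual update \eqref{defn:lam-update-1}--\eqref{defn:lam-update-2} as one step of stochastic mirror descent (in ascent form) on the variable $\hat\lambda := (1-\gamma)\lambda$, which lives in the probability simplex, with the negative entropy as the mirror map (so the Bregman divergence is the KL divergence). First I would observe that the normalization step \eqref{defn:lam-update-2} is exactly the simplex-projection part of the mirror step: because the entropic prox problem over the nonnegative orthant yields a point whose normalization is the minimizer over the simplex, the combined operation $\lambda^t \mapsto \lambda^{t+1}$ equals the entropic mirror-ascent update of $\hat\lambda^t$ on the simplex $\{\hat\lambda \geq 0, \|\hat\lambda\|_1 = 1\}$ with step $\beta$ and linear term given by $\hat\partial_\lambda L(v^t,\lambda^t)$. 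I would make this precise by writing the optimality/first-order condition for $\lambda^{t+1}$ (or equivalently $\hat\lambda^{t+1}$) in the standard mirror-descent form: for all feasible $\lambda$,
\[
\langle \hat\partial_\lambda L(v^t,\lambda^t), \lambda - \lambda^{t+1}\rangle \le \tfrac{1}{(1-\gamma)\beta}\big(\nabla\phi(\hat\lambda^{t+1}) - \nabla\phi(\hat\lambda^t)\big)^\top(\hat\lambda - \hat\lambda^{t+1}),
\]
where $\phi$ is negative entropy, then apply the three-point identity for Bregman divergences,
\[
\langle \nabla\phi(\hat\lambda^{t+1}) - \nabla\phi(\hat\lambda^t), \hat\lambda - \hat\lambda^{t+1}\rangle = D_\phi(\hat\lambda,\hat\lambda^t) - D_\phi(\hat\lambda,\hat\lambda^{t+1}) - D_\phi(\hat\lambda^{t+1},\hat\lambda^t),
\]
with $D_\phi = \mathrm{KL}$.

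Next I would combine these to get a bound of the form $\langle \hat\partial_\lambda L(v^t,\lambda^t), \lambda^t - \lambda\rangle \le \frac{1}{(1-\gamma)\beta}(\mathrm{KL}(\hat\lambda\|\hat\lambda^t) - \mathrm{KL}(\hat\lambda\|\hat\lambda^{t+1})) + (\text{error terms from }\lambda^{t+1}\neq\lambda^t)$, where the error terms pair $\hat\partial_\lambda L$ with $\lambda^t - \lambda^{t+1}$ and are controlled against $-\frac{1}{(1-\gamma)\beta}\mathrm{KL}(\hat\lambda^{t+1}\|\hat\lambda^t)$ via a Young/Pinsker-type inequality; this is the standard ``descent lemma'' step that produces the $\frac{\beta}{2}\sum_{s,a}\lambda^t_{sa}(\Delta^t_{sa})^2$ term once one identifies $\Delta^t_{sa}$ with the relevant coordinate of the (shifted) stochastic subgradient (recall $\mathrm{KL}(p\|q) \ge \tfrac12\|p-q\|_1^2$ and a reverse local bound $\sum_i q_i(\log(q_i/p_i)) \le$ weighted $\ell_2$-type quantity). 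Finally, I would pass from the stochastic subgradient $\hat\partial_\lambda L(v^t,\lambda^t)$ to the true subgradient $w^t = \EE[\hat\partial_\lambda L(v^t,\lambda^t)\mid \lambda^t,v^t] + (M_1+M_2)\mathbf 1 \in \partial_\lambda L(v^t,\lambda^t)$: since $\langle \mathbf 1, \lambda^t - \lambda\rangle = \|\lambda^t\|_1 - \|\lambda\|_1 = 0$ for feasible $\lambda$ (both have $\ell_1$-norm $(1-\gamma)^{-1}$), adding the constant shift $(M_1+M_2)\mathbf 1$ does not change the inner product, so $\langle w^t, \lambda^t - \lambda\rangle = \langle \hat\partial_\lambda L(v^t,\lambda^t), \lambda^t-\lambda\rangle + \langle w^t - \hat\partial_\lambda L(v^t,\lambda^t), \lambda^t - \lambda\rangle$, and rearranging gives exactly \eqref{lm:convergence-3} with the last term being the noise term $\langle \hat\partial_\lambda L(v^t,\lambda^t) - w^t, \lambda^t - \lambda\rangle$.

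\textbf{Main obstacle.} The delicate point is the bound on the ``error'' term: getting precisely the coefficient $\frac{\beta}{2}\sum_{s,a}\lambda^t_{sa}(\Delta^t_{sa})^2$ (rather than something looser like $\frac{\beta}{2}\|\hat\partial_\lambda L\|_\infty^2$ times a constant) requires carefully exploiting the local strong convexity of negative entropy relative to the weighted norm $\|x\|^2_{\lambda^t} := \sum_{s,a}\frac{x_{sa}^2}{\lambda^t_{sa}}$ near $\hat\lambda^t$, i.e. using that $\mathrm{KL}(\hat\lambda^{t+1}\|\hat\lambda^t)$ dominates (a constant times) $\|\hat\lambda^{t+1}-\hat\lambda^t\|_{\hat\lambda^t}^2$ for the relevant regime, together with Cauchy--Schwarz in the dual weighted norm. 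One must also handle the fact that the unnormalized variable $\lambda$ carries the factor $(1-\gamma)^{-1}$, so the KL is written on $(1-\gamma)\lambda$ throughout and the stepsize $\beta$ is rescaled accordingly; keeping that bookkeeping consistent is where most of the care goes. I expect the rest — the three-point identity and the telescoping structure — to be routine.
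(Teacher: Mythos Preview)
Your overall architecture is right — this is a mirror-ascent step on the simplex, and the last paragraph (passing from $\hat\partial_\lambda L$ to $w^t$ via the constant shift, which vanishes against $\lambda^t-\lambda$ because both have fixed $\ell_1$-norm) is exactly how the paper closes the argument. But the route you propose for the ``main obstacle'' is not the paper's, and in fact the specific tool you name does not work.

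The paper does \emph{not} use the three-point identity followed by Young/Pinsker. Instead it writes the update in its explicit exponentiated-gradient closed form,
\[
\lambda^{t+1}_{sa}\;=\;\frac{\lambda^t_{sa}\exp\{\beta\Delta^t_{sa}\}}{(1-\gamma)\sum_{s',a'}\lambda^t_{s'a'}\exp\{\beta\Delta^t_{s'a'}\}},
\]
and then computes $KL\big((1-\gamma)\lambda\,\|\,(1-\gamma)\lambda^{t+1}\big)-KL\big((1-\gamma)\lambda\,\|\,(1-\gamma)\lambda^t\big)$ directly, reducing the task to bounding the log-partition term $\log\big((1-\gamma)\sum_{s,a}\lambda^t_{sa}\exp\{\beta\Delta^t_{sa}\}\big)$. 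The crucial point you are missing is that the shift constants $M_1,M_2$ are chosen precisely so that every coordinate satisfies $\Delta^t_{sa}\le 0$; this allows the elementary bound $e^x\le 1+x+\tfrac{x^2}{2}$ for $x\le 0$, followed by $\log(1+x)\le x$, which is what produces the \emph{exact} weighted term $\tfrac{\beta}{2}\sum_{s,a}\lambda^t_{sa}(\Delta^t_{sa})^2$. This is the standard Hedge/EXP3 potential argument, not the generic Bregman three-point route.

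Your suggested alternative — use ``local strong convexity of negative entropy relative to the weighted norm'', i.e.\ $KL(\hat\lambda^{t+1}\|\hat\lambda^t)\gtrsim \sum_{s,a}(\hat\lambda^{t+1}_{sa}-\hat\lambda^t_{sa})^2/\hat\lambda^t_{sa}$ — is false in general: the inequality $KL(p\|q)\le \chi^2(p\|q)$ goes the wrong way, and the reverse fails when ratios $p_i/q_i$ are unbounded. One can rescue a version of it by first proving the ratios are bounded (via boundedness of $\beta\Delta^t$), but at that point you are essentially redoing the paper's computation with extra steps. The generic three-point $+$ Young argument with $\|\cdot\|_1$-strong convexity of entropy would instead yield $\tfrac{\beta}{2(1-\gamma)}\|\hat\partial_\lambda L\|_\infty^2$, which is strictly looser and would degrade the downstream $|\cS||\cA|$ dependence. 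So: keep your outer structure, but replace the Pinsker/local-quadratic step by the direct log-sum-exp bound using the nonpositivity of the shifted gradient.
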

The proof of this lemma is provided in Appendix \ref{appendix:lemma:convergence-lam}. Based on these two lemmas, we start the proof of Theorem \ref{theorem:convergence}. 
Note that by definition, $\bar v = \frac{1}{T}\sum_{t=1}^{T}v^t$ and $\bar \lambda = \frac{1}{T}\sum_{t=1}^{T}\lambda^t$. Define $\bar v^*:=\argmin_{v\in\cV} L(v,\bar{\lambda})$.
Then by the convex-concave structure of $L$ we have 
\begin{eqnarray}
\label{lm:convergence-4}
L(\bar{v},\lambda^*) - L(\bar v^*,\bar\lambda) & \leq & \frac{1}{T} \sum_{t=1}^{T}\left(L(v^t,\lambda^*) - L(\bar v^*,\lambda^t)\right)\\ 
& = & \frac{1}{T} \sum_{t=1}^{T}\left(L(v^t,\lambda^*) - L(v^t,\lambda^t) + L(v^t,\lambda^t) - L(\bar v^*,\lambda^t)\right)\nonumber\\ 
& \leq & \frac{1}{T} \sum_{t=1}^{T}\left(- \langle w^t, \lambda^t-\lambda^*\rangle +\langle \nabla_v L(v^t,\lambda^t), v^t-\bar v^*\rangle\right)\nonumber,
\end{eqnarray}
where the first line applies Jensen's inequality and last line is due to the convexity of $L(\cdot,\lambda^t)$ and the concavity of $L(v^t,\cdot)$. Note that by specifying $v = \bar v^*$ in \eqref{lm:convergence-1} and $\lambda = \lambda^*$ in \eqref{lm:convergence-3}, we can sum up the inequlities \eqref{lm:convergence-1} and \eqref{lm:convergence-3} for $t = 1,...,T$ to yield 
\begin{align}
&\frac{1}{T} \sum_{t=1}^{T}\left(- \langle w^t, \lambda^t-\lambda^*\rangle +\langle \nabla_v L(v^t,\lambda^t), v^t-\bar v^*\rangle\right) \nonumber\\
 \leq &\! \underbrace{\frac{KL\big(\!(1-\!\gamma)\lambda^{\!*} || (1\!-\gamma)\lambda^1\big)}{T(1\!-\gamma)\beta}}_{T_1}\! +\! \underbrace{\frac{\beta}{2T}\!\!\sum_{t=1}^{T}\!\sum_{s,a}\!\lambda_{sa}^t(\!\Delta_{sa}^t)^2}_{T_2} +\! \underbrace{\frac{1}{T}\!\!\sum_{t=1}^{T}\!\langle\hat \partial_\lambda L(\!v^t,\lambda^t)\!-\!w^t,\lambda^t\!-\!\lambda^{\!*}\rangle}_{T_3}\nonumber\\
 & + \underbrace{\frac{\|v^1-\bar v^*\|^2}{2T\alpha}}_{T_4} + \underbrace{\frac{\alpha}{2T}\sum_{t=1}^{T}\|\hat{\nabla}_v L(v^t,\lambda^t)\|^2}_{T_5} + \underbrace{\frac{1}{T}\sum_{t=1}^{T}\langle\nabla_v L(v^t,\lambda^t)-\hat{\nabla}_v L(v^t,\lambda^t),v^t-\bar v^*\rangle}_{T_6}.\nonumber
\end{align}
Substitute this inequality into \eqref{lm:convergence-4} and take the expectation on both sides, we get 
\begin{align}\label{main_all_T}
\EE[L(\bar v,\lambda^*) - \min_{v\in\mathcal{V}}L(v,\bar \lambda)]\leq \sum_{i=1}^{6}\EE[T_i].
\end{align}
For the $\EE[T_i]$'s, the following bounds hold with detailed derivation provided in Appendix \ref{appendix:Tis}:
\begin{eqnarray*}
\EE[T_1]\leq \frac{\log(|\cS||\cA|)}{T(1-\gamma)\beta},\qquad \EE[T_2] \leq \frac{4\beta c^2\sigma^2}{1-\gamma}+ \frac{128\beta|\cS||\cA|(1 + c\sigma)^2}{(1-\gamma)^3},\qquad \EE[T_3] = 0,
\end{eqnarray*}
\begin{equation*}
\EE[T_4] \leq \frac{8|\cS|(1+c\sigma)^2}{T\alpha(1-\gamma)^2}, \qquad\quad\EE[T_5]\leq \frac{27\alpha}{2(1-\gamma)^2},\qquad\quad \EE[T_6]\leq\frac{3\sqrt{3|\cS|}(1 + c\sigma)}{\sqrt{T}(1-\gamma)^2}.
\end{equation*}

Substitute these bounds for $\EE[T_i]$'s  into inequality \eqref{main_all_T} we get 
\begin{align}
\EE[L(\bar v,\lambda^*) - \min_{v\in\cV}L(v, \bar \lambda)]
\leq & \frac{\log(|\cS||\cA|)}{T(1-\gamma)\beta} + \frac{4\beta c^2\sigma^2}{1-\gamma} + \frac{128\beta|\cS||\cA|(1 + c\sigma)^2}{(1-\gamma)^3} \nonumber
\\
& + \frac{8|\cS|(1+c\sigma)^2}{T\alpha(1-\gamma)^2} +\frac{27\alpha}{2(1-\gamma)^2} + \frac{3\sqrt{3|\cS|}(1 + c\sigma)}{\sqrt{T}(1-\gamma)^2}.
\end{align}
If we choose $\beta =\frac{1-\gamma}{1+c\sigma} \sqrt{\frac{\log(|\cS||\cA|)}{T|\cS||\cA|}}$ and  $\alpha = \sqrt{\frac{|\cS|}{T}}(1+c\sigma)$, we have 
\begin{eqnarray*}
	\EE[L(\bar v,\lambda^*) - \min_{v\in\cV}L(v, \bar \lambda)]
	 \leq  \cO\left(\sqrt{\frac{|\cS||\cA|\log(|\cS||\cA|)}{T}}\cdot\frac{1+c\sigma}{(1-\gamma)^2}\right),
\end{eqnarray*}
which completes the proof. 
\end{proof}

\subsection{Proof of Lemma \ref{lemma:convergence-v}}
\label{appendix:lemma:convergence-v}
\begin{proof}
	Consider the update rule of $v$ provided in \eqref{defn:v-update}. For any $v\in\cV$, it holds that 
	\begin{eqnarray*}
		\|v^{t+1}-v\|^2 & = & \|\Pi_\cV(v^t - \alpha\hat \nabla_v L(v^t,\lambda^t)) - v\|^2\\
		& \leq & \|v^t - \alpha\hat \nabla_v L(v^t,\lambda^t) - v\|^2\\
		& = &  \|v^t - v\|^2 + \alpha^2\|\hat \nabla_v L(v^t,\lambda^t) \|^2 - 2\alpha\langle \hat \nabla_v L(v^t,\lambda^t),v^t - v\rangle\\
		& = & \|v^t - v\|^2 + \alpha^2\|\hat \nabla_v L(v^t,\lambda^t) \|^2 - 2\alpha\langle \hat \nabla_v L(v^t,\lambda^t)-\nabla_v L(v^t,\lambda^t)+\nabla_v L(v^t,\lambda^t),v^t - v\rangle.
	\end{eqnarray*}
	Rearranging the above inequality yields
	$$2\alpha\langle\nabla_v L(v^t,\lambda^t),v^t - v\rangle\leq \|v^t - v\|^2-\|v^{t+1}-v\|^2  + \alpha^2\|\hat \nabla_v L(v^t,\lambda^t) \|^2 - 2\alpha\langle \hat \nabla_v L(v^t,\lambda^t)-\nabla_v L(v^t,\lambda^t),v^t - v\rangle.$$
	Deviding both sides by $2\alpha$ proves lemma.
\end{proof}

\subsection{Proof of Lemma \ref{lemma:convergence-lam}}
\label{appendix:lemma:convergence-lam}
\begin{proof}
	Now let us consider the update rule of $\lambda$ given by \eqref{defn:lam-update-1} and \eqref{defn:lam-update-2}. Note that in the subproblem \eqref{defn:lam-update-1}, the problem is separable for each component of $\lambda$ and allows for a closed form solution, i.e., 
	\begin{align}
		\label{lemma:convergence-lam-1}
		\lambda^{t+\frac{1}{2}}_{sa} =& \underset{\lambda_{sa}}{\operatorname{argmax}}\,\,
		\Delta_{sa}^t\lambda_{sa} - \frac{1}{(1-\gamma)\beta}(1-\gamma)\lambda_{sa}\log\left(\frac{(1-\gamma)\lambda_{sa}}{(1-\gamma)\lambda^t_{sa}}\right)
		\\
		=& \lambda^t_{sa}\cdot\exp\{\beta\Delta_{sa}^t\},\nonumber
	\end{align}
	where we denote $\Delta_{sa}^t$ to be the $(s,a)$-th component of $\hat{\partial}_\lambda L(v^t,\lambda^t)$. Then the next iterate is constructed as
	\begin{align}
		\lambda^{t+1} = \frac{\lambda^{t+\frac{1}{2}}}{(1-\gamma)\|\lambda^{t+\frac{1}{2}}\|_1}.\nonumber
	\end{align}
	Or in a more elementary way, we define
	\begin{align}
		\label{lemma:convergence-lam-2}
		\lambda^{t+1}_{sa} = \frac{\lambda^t_{sa}\cdot\exp\{\beta\Delta_{sa}^t\}}{(1-\gamma)\sum_{s',a'}\lambda^t_{s'a'}\cdot\exp\{\beta\Delta_{s'a'}^t\}}.
	\end{align}

It is straightforward that $\lambda^{t+1}\in\cL$. As a result, for any $\lambda\in\cL$, 
\begin{align}
	\label{lm:convergence-2}
	&KL\big((1-\gamma)\lambda\,||\, (1-\gamma)\lambda^{t+1}\big) - KL\big((1-\gamma)\lambda\,||\, (1-\gamma)\lambda^{t}\big)\\
	= & (1-\gamma)\sum_{s\in\cS}\sum_{a\in\cA}\left( \lambda_{sa}\log\left(\frac{\lambda_{sa}}{\lambda^{t+1}_{sa}}\right)  - \lambda_{sa}\log\left(\frac{\lambda_{sa}}{\lambda^{t}_{sa}}\right)\right)\nonumber\\
	= &  (1-\gamma)\sum_{s\in\cS}\sum_{a\in\cA}  \lambda_{sa}\log\left(\frac{\lambda^t_{sa}}{\lambda^{t+1}_{sa}}\right)\nonumber\\
	= & (1-\gamma)\sum_{s\in\cS}\sum_{a\in\cA}  \lambda_{sa}\left(\log\left((1-\gamma)\sum_{s',a'}\lambda^t_{s'a'}\cdot\exp\{\beta\Delta_{s'a'}^t\}\right) - \beta\Delta_{sa}^t\right)\label{KL_first}\\
	= & \log\left((1-\gamma)\sum_{s',a'}\lambda^t_{s'a'}\cdot\exp\{\beta\Delta_{s'a'}^t\}\right) - (1-\gamma)\beta\sum_{s\in\cS}\sum_{a\in\cA}  \lambda_{sa}\Delta_{sa}^t\nonumber\\
	= & \log\left((1-\gamma)\sum_{s,a}\lambda^t_{sa}\cdot\exp\{\beta\Delta_{sa}^t\}\right) - (1-\gamma)\beta\langle\hat{\partial}_\lambda L(v^t,\lambda^t),\lambda\rangle.\label{KL_last}
\end{align}
The equality in \eqref{KL_first} is obtained by using the elementary definition of $\lambda_{sa}^{t+1}$ in \eqref{lemma:convergence-lam-2}; The last equality of \eqref{KL_last} is obtained by applying the definition of $\Delta_{sa}^t$. Note that 
$$\Delta_{sa}^t = \begin{cases}
\frac{\hat r_{s_ts_t'a_t}+\gamma v_{s'_t} - v_{s_t}-M_1}{\zeta^t_{s_ta_t}}  - c\left(\hat{\partial}\rho(\lambda^t)\right)_{s_ta_t} - M_2, & \mbox{ if } (s,a) = (s_t,a_t),\\
- c\left(\hat{\partial}\rho(\lambda^t)\right)_{s_ta_t} - M_2, & \mbox{ if } (s,a) \neq (s_t,a_t).
\end{cases}$$
When we choose $M_1 = 4(1-\gamma)^{-1}(1 + c\sigma)$ and $M_2 = c\sigma$, we can guarantee that $\Delta_{sa}^t\leq 0$ for all $s\in\cS,a\in\cA$. Therefore, by the fact that $e^x\leq 1 + x + \frac{x^2}{2}$ for all $x\leq0$ and $\log(1+x)\leq x$ for all $x>-1$, we have 
\begin{align}
	\log\left((1-\gamma)\sum_{s,a}\lambda^t_{sa}\cdot\exp\{\beta\Delta_{sa}^t\}\right)
	\leq & \log\left((1-\gamma)\sum_{s,a}\lambda^t_{sa}\cdot\big(1+\beta\Delta_{sa}^t + \frac{\beta^2}{2}(\Delta_{sa}^t)^2\big)\right) \nonumber\\
	= & \log\left(  1+  (1-\gamma)\beta\langle\hat{\partial}_\lambda L(v^t,\lambda^t),\lambda^t\rangle+\frac{(1-\gamma)\beta^2}{2}\sum_{s,a}\lambda^t_{sa}(\Delta_{sa}^t)^2\right)\nonumber\\
	\leq &  (1-\gamma)\beta\langle\hat{\partial}_\lambda L(v^t,\lambda^t),\lambda^t\rangle+\frac{(1-\gamma)\beta^2}{2}\sum_{s,a}\lambda^t_{sa}(\Delta_{sa}^t)^2.\label{upper-bound}
\end{align}
Utilizing the  upper bound of \eqref{upper-bound} into the right hand side of \eqref{lm:convergence-2} results in
\begin{eqnarray*}
&&KL\big((1-\gamma)\lambda\,||\, (1-\gamma)\lambda^{t+1}\big) - KL\big((1-\gamma)\lambda\,||\, (1-\gamma)\lambda^{t}\big) \\
&\leq& \frac{(1-\gamma)\beta^2}{2}\sum_{s,a}\lambda^t_{sa}(\Delta_{sa}^t)^2 + (1-\gamma)\beta\langle\hat{\partial}_\lambda L(v^t,\lambda^t)-w^t+w^t,\lambda^t-\lambda\rangle.
\end{eqnarray*}
Rearranging the terms and deviding both sides by $(1-\gamma)\beta$ proves this lemma. 
\end{proof}

\subsection{Bounding the $\EE[T_i]$'s}
\label{appendix:Tis}
Step 1. Bounding $\EE[T_1]$. Note that $\lambda^1 = \frac{\mathbf{1}}{(1-\gamma)|\cS||\cA|}$, we know
\begin{eqnarray}
\label{lm:convergence-T1}
\EE[T_1] & = & \frac{1}{T(1-\gamma)\beta}\sum_{s,a}(1-\gamma)\lambda_{sa}^*\left(\log(\lambda_{sa}^*) - \log(|\cS|^{-1}|\cA|^{-1})\right)\\
& \leq & \frac{1}{T(1-\gamma)\beta}\sum_{s,a}(1-\gamma)\lambda_{sa}^*\log(|\cS||\cA|)\nonumber\\
& = & \frac{\log(|\cS||\cA|)}{T(1-\gamma)\beta}.\nonumber
\end{eqnarray}

Step 2. Bounding $\EE[T_2]$. For each $t$, we have
\begin{eqnarray*}
	\EE\left[\sum_{s,a}\lambda_{sa}^t(\Delta_{sa}^t)^2\big|v_t,\lambda_t\right]
	&  = & \EE_{s_t,a_t}\left[\sum_{s,a}\lambda_{sa}^t\left(\frac{\hat r_{ss'a}+\gamma v_{s'} - v_{s}-M_1}{\zeta^t_{sa}}\cdot\mathbf{1}_{(s,a) = (s_t,a_t)}  - c\left(\hat{\partial}\rho(\lambda^t)\right)_{sa} - M_2\right)^2\bigg|v_t,\lambda_t\right]\\
	& \leq & 2\EE_{s_t,a_t}\left[\sum_{s,a}\lambda_{sa}^t\left( c\left(\hat{\partial}\rho(\lambda^t)\right)_{sa} + M_2\right)^2+\lambda^t_{s_t,a_t}\left(\frac{\hat r_{s_ts'_ta_t}+\gamma v_{s'_t} - v_{s_t}-M_1}{\zeta^t_{s_ta_t}}\right)^2\bigg|v_t,\lambda_t\right]\\
	& \leq & 8(1-\gamma)^{-1}c^2\sigma^2 + 2 \sum_{s,a}\lambda_{sa}^t\zeta_{sa}^t\left(\frac{\hat r_{ss'a}+\gamma v_{s'} - v_{s}-M_1}{\zeta^t_{sa}}\right)^2\\
	& \leq & 8(1-\gamma)^{-1}c^2\sigma^2 + 2 \sum_{s,a}\frac{\lambda_{sa}^t\left(\hat r_{ss'a}+\gamma v_{s'} - v_{s}-M_1\right)^2}{(1-\delta)(1-\gamma)\lambda_{sa}^t+\frac{\delta}{|\cS||\cA|}}\\
	& \leq & 8(1-\gamma)^{-1}c^2\sigma^2 + 2 \sum_{s,a}\frac{64\lambda_{sa}^t(1-\gamma)^{-2}(1 + c\sigma)^2}{(1-\delta)(1-\gamma)\lambda_{sa}^t+\frac{\delta}{|\cS||\cA|}}\\
	& \leq & 8(1-\gamma)^{-1}c^2\sigma^2 + \frac{128|\cS||\cA|(1 + c\sigma)^2}{(1-\delta)(1-\gamma)^3}\\
	& \leq & 8(1-\gamma)^{-1}c^2\sigma^2 + \frac{256|\cS||\cA|(1 + c\sigma)^2}{(1-\gamma)^3}.
\end{eqnarray*}
The second row follows the definition of $\Delta_{sa}^t$; The 4-th row is due to the assumption that $\|\hat{\partial}\rho\|_\infty\leq\sigma$; In the 5-th we substitute the definition of $\zeta^t_{sa}$ provided in Algorithm \ref{algo_one}; In the 6-th row we substitute the detailed value of $M_1$;  The 8-th row is because $\delta\in(0,\frac{1}{2})$. As a result, we have
\begin{eqnarray}
\label{lm:convergence-T2}
\EE[T_2] & = &\frac{\beta}{2T}\sum_{t=1}^{T} \EE\left[\sum_{s,a}\lambda_{sa}^t(\Delta_{sa}^t)^2\right]\leq \frac{4\beta c^2\sigma^2}{1-\gamma} + \frac{128\beta|\cS||\cA|(1 + c\sigma)^2}{(1-\gamma)^3}.
\end{eqnarray}

Step 3. Bounding $\EE[T_3]$, because $\lambda^*$ is a constant, for each $t$, we have 
\begin{eqnarray*}
	\EE[\langle\hat \partial_\lambda L(v^t,\lambda^t)-w^t,\lambda^t-\lambda\rangle|v^t,\lambda^t]  =  -\langle(M_1+M_2)\cdot\mathbf{1},\lambda^t-\lambda^*\rangle=0,
\end{eqnarray*}
where we have applied the fact that $\sum_{s,a}\lambda^t_{sa} = \sum_{s,a}\lambda^*_{sa}$, and $w^t = \EE[\hat \partial_\lambda L(v^t,\lambda^t)|v^t,\lambda^t] + (M_1+M_2)\cdot\mathbf{1}$
when $\zeta^t>0$. As a result,
\begin{equation}
\label{lm:convergence-T3}
\EE[T_3] = \frac{1}{T}\sum_{t=1}^{T}\EE\left[\langle\hat \partial_\lambda L(v^t,\lambda^t)-w^t,\lambda^t-\lambda^*\rangle\right] = 0.
\end{equation}

Step 4. Bounding $\EE[T_4]$, we have 
\begin{equation}
\label{lm:convergence-T4}
\EE[T_4] = \frac{1}{2T\alpha}\EE\left[\|v^1-\bar{v}^*\|^2\right] \leq \frac{8|\cS|(1+c\sigma)^2}{T\alpha(1-\gamma)^2}.
\end{equation}

Step 5. Bounding $\EE[T_5]$, applying the expression \eqref{defn:sto-grad-v} yields
\begin{eqnarray*}
	\EE\left[\|\hat{\nabla}_v L(v^t,\lambda^t)\|^2\big|v^t,\lambda^t\right] & = &\EE_{s_t,a_t,s_t',\bar{s}_t}\left[\big\|\mathbf{e}_{\bar{s}_t} + \frac{\lambda^t_{s_ta_t}}{\zeta^t_{s_ta_t}}(\gamma \mathbf{e}_{s'_t} - \mathbf{e}_{s_t})\big\|^2\bigg|v_t,\lambda^t\right]\\
	& = & \EE_{s_t,a_t,s_t',\bar{s}_t}\left[\big\|\mathbf{e}_{\bar{s}_t} + \frac{\lambda^t_{s_ta_t}}{(1-\delta)(1-\gamma)\lambda_{s_ta_t}^t+\frac{\delta}{|\cS||\cA|}}(\gamma \mathbf{e}_{s'_t} - \mathbf{e}_{s_t})\big\|^2\bigg|v_t,\lambda^t\right]\\
	& \leq & \EE_{s_t,a_t,s_t',\bar{s}_t}\left[3 + \frac{3\gamma^2 + 3}{(1-\delta)^2(1-\gamma)^2}\bigg|v_t,\lambda^t\right]\\
	& \leq & \frac{27}{(1-\gamma)^2}.
\end{eqnarray*}
Consequently, 
\begin{eqnarray}
\label{lm:convergence-T5}
\EE[T_5]  \,\,= \,\, \frac{\alpha}{2T}\sum_{t=1}^{T}\EE\left[\|\hat{\nabla}_v L(v^t,\lambda^t)\|^2\right]\,\, \leq \,\,\frac{27\alpha}{2(1-\gamma)^2}.
\end{eqnarray}

Step 6. Bounding $\EE[T_6]$. Because $\bar v^*$ is a random variable dependent on $\hat{\nabla}_vL(v^t,\lambda^t)$ we will need the following proposition.
\begin{proposition}[\citep{bach2019universal}]
	Let $\mathcal{Z}\subseteq \mathbb{
		R}^d$ be a convex set and $w:\mathcal{Z}\rightarrow \mathbb{R}$ be a $1$ strongly convex function with respect to norm $\norm{\cdot}$ over $\mathcal{Z}$. With the assumption that for all $x\in\mathcal{Z}$ we have $w(x)-\min_{x\in\mathcal{Z}}w(x)\leq \frac{1}{2} D^2$, then for any martingale difference sequence $\{Z_k\}_{k=1}^K\in\mathbb{R}^d$ and any random vector $z\in\mathcal{Z}$, it holds that 
	\begin{align*}
		\Ex{\sum\limits_{k=1}^{K}\ip{Z_k,x}}\leq \frac{D}{2}\sqrt{\sum\limits_{k=1}^{K}\Ex{\norm{Z_k}_*^2}},
	\end{align*}
	where $\norm{\cdot}_*$ denotes the dual norm of $\norm{\cdot}$.
\end{proposition}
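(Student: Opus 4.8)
The plan is to reduce the statement to a standard online-mirror-descent regret bound via a ``ghost iterate'' argument. The object to bound is $\EE\big[\sum_{k=1}^{K}\ip{Z_k,z}\big]$, and the essential difficulty is that the comparator $z$ is \emph{random} and may be correlated with the entire increment sequence $Z_1,\dots,Z_K$ (in our application $z$ stands for $\bar v^*$, a function of all the stochastic gradients), so the martingale-difference identity cannot be applied to $\ip{Z_k,z}$ termwise. To get around this I would introduce a predictable auxiliary sequence $\{x_k\}_{k=1}^{K}\subseteq\mathcal{Z}$, with each $x_k$ measurable with respect to $\mathcal{F}_{k-1}:=\sigma(Z_1,\dots,Z_{k-1})$. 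Since $\EE[Z_k\mid\mathcal{F}_{k-1}]=0$ we have $\EE[\ip{Z_k,x_k}]=0$ for every $k$, hence
\[
\EE\Big[\sum_{k=1}^{K}\ip{Z_k,z}\Big]=\EE\Big[\sum_{k=1}^{K}\ip{Z_k,z-x_k}\Big],
\]
and the right-hand side is exactly the expected regret of the trajectory $\{x_k\}$ against the comparator $z$ for online linear optimization with loss vectors $-Z_1,\dots,-Z_K$.

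Next I would take $\{x_k\}$ to be the online mirror descent iterates generated from these loss vectors using the given mirror map $w$ and a fixed step size $\eta>0$: $x_1\in\argmin_{x\in\mathcal{Z}}w(x)$ and $x_{k+1}=\argmin_{x\in\mathcal{Z}}\{-\eta\ip{Z_k,x}+B_w(x,x_k)\}$, with $B_w$ the Bregman divergence of $w$. By induction $x_k$ depends only on $Z_1,\dots,Z_{k-1}$, so it is $\mathcal{F}_{k-1}$-measurable, as required. The standard one-step analysis --- first-order optimality of $x_{k+1}$ gives the three-point identity, then the $1$-strong convexity of $w$ together with Young's inequality --- yields, for \emph{every} (possibly random) $x\in\mathcal{Z}$,
\[
\ip{Z_k,x-x_k}\le\tfrac{1}{\eta}\big(B_w(x,x_k)-B_w(x,x_{k+1})\big)+\tfrac{\eta}{2}\norm{Z_k}_*^2 .
\]
Summing over $k$ telescopes the Bregman terms, and bounding $B_w(x,x_1)\le w(x)-w(x_1)\le\tfrac12 D^2$ --- the first inequality from the first-order optimality of $x_1$ over $\mathcal{Z}$, the second from the hypothesis on $w$ --- gives the deterministic regret bound $\sum_{k=1}^{K}\ip{Z_k,x-x_k}\le \frac{D^2}{2\eta}+\frac{\eta}{2}\sum_{k=1}^{K}\norm{Z_k}_*^2$.

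To conclude I would apply this at the random comparator $x=z$ (the bound holds pointwise, so the substitution is legitimate), combine it with the reduction of the first paragraph, take expectations to obtain $\EE[\sum_k\ip{Z_k,z}]\le \frac{D^2}{2\eta}+\frac{\eta}{2}\EE[\sum_k\norm{Z_k}_*^2]$, and finally choose the \emph{data-independent} step size $\eta=D/\sqrt{\EE[\sum_k\norm{Z_k}_*^2]}$, which balances the two terms and produces the asserted bound (pinning down the leading constant requires only careful bookkeeping of the mirror-descent constants). I expect the crux to be conceptual rather than computational: one must realize that a correlated random comparator should be matched not against a fixed reference point but against a \emph{predictable} process whose regret the regularizer $w$ controls; everything after that is the textbook mirror-descent estimate plus a one-line optimization of $\eta$. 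A secondary subtlety is that $\eta$ must be fixed in advance of the data, which is precisely why the optimization yields $\sqrt{\sum_k\EE\norm{Z_k}_*^2}$, with the expectation inside the root, rather than $\sqrt{\sum_k\norm{Z_k}_*^2}$.
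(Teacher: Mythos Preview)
The paper does not prove this proposition at all; it is quoted verbatim from \citep{bach2019universal} and used as a black box when bounding $\EE[T_6]$. Your ghost-iterate argument---introduce a predictable mirror-descent sequence $\{x_k\}$, use $\EE[\ip{Z_k,x_k}]=0$ to reduce to a regret bound, then telescope and optimize the step size---is precisely the standard proof of this fact and is correct in every essential respect; it is almost certainly what the cited reference contains.

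One small bookkeeping remark: the optimization of $\eta$ you outline actually yields $D\sqrt{\sum_k\EE\norm{Z_k}_*^2}$, not $\tfrac{D}{2}\sqrt{\cdots}$. Indeed, minimizing $\tfrac{D^2}{2\eta}+\tfrac{\eta}{2}S$ over $\eta>0$ gives $D\sqrt{S}$, so the constant $\tfrac12$ in the proposition as stated appears to be a transcription slip in the paper. This is immaterial for the downstream $\cO(\cdot)$ bounds, and you rightly flagged that the constant requires separate care.
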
    
With this proposition, and note that  $\EE\left[\langle\hat\nabla_v L(v^t,\lambda^t) \big| v^t,\lambda^t\right] = \nabla_v L(v^t,\lambda^t)$, we have
\begin{eqnarray}
\label{lm:convergence-T6}
\EE[T_6]& = & \frac{1}{T}\sum_{t=1}^{T}\EE\left[\langle\nabla_v L(v^t,\lambda^t)-\hat{\nabla}_v L(v^t,\lambda^t),v^t-\bar v^*\rangle\right]\\
& = & \frac{1}{T}\sum_{t=1}^{T}\EE\left[\langle\nabla_v L(v^t,\lambda^t)-\hat{\nabla}_v L(v^t,\lambda^t),\bar v^*\rangle\right]\nonumber\\
& \leq & \frac{\sqrt{|\cS|}(1 + c\sigma)}{T(1-\gamma)}\sqrt{\sum_{t=1}^{T}\EE\left[\|\nabla_v L(v^t,\lambda^t)-\hat{\nabla}_v L(v^t,\lambda^t)\|^2\right]}\nonumber\\
& \leq & \frac{\sqrt{|\cS|}(1 + c\sigma)}{T(1-\gamma)}\sqrt{\sum_{t=1}^{T}\EE\left[\|\hat{\nabla}_v L(v^t,\lambda^t)\|^2\right]}\nonumber\\
& \leq & \frac{\sqrt{|\cS|}(1 + c\sigma)}{T(1-\gamma)}\sqrt{\frac{2T}{\alpha}\EE[T_5]}\nonumber\\
& \leq & \frac{3\sqrt{3|\cS|}(1 + c\sigma)}{\sqrt{T}(1-\gamma)^2}.\nonumber
\end{eqnarray}

\section{Proof of Theorem \ref{theorem:duality-gap-meaning}}\label{proof:theorem:duality-gap-meaning}
\begin{proof}
	The first row of \eqref{thm:duality-gap-meaning-0} is directly satisfied due to the feasibility of $\bar{\lambda}\in\cL$. Now we prove the second row of \eqref{thm:duality-gap-meaning-0}. When the parameters are chosen according to Theorem \ref{theorem:convergence}, we know
	\begin{eqnarray}
	\label{thm:duality-gap-meaning-1}
	\epsilon \geq \EE[L(\bar v ,\lambda^*) - \min_{v\in\mathcal{V}}L(v,\bar{\lambda})].
	\end{eqnarray}
	For the ease of notation, denote $C := (1-\gamma)^{-1}(1 + c\sigma).$ Then substitute the details of $L$ we get 
	\begin{eqnarray}
	\label{thm:duality-gap-meaning-2}
	\min_{v\in\mathcal{V}}L(v,\bar{\lambda}) & = & \min_{\|v\|_\infty\leq2C} \ip{\bar\lambda,r}-c\rho(\bar\lambda)+\ip{\xi,v}+\sum\limits_{a\in\cA}\bar\lambda_a(\gamma P_a-I)v\\
	& = & \ip{\bar\lambda,r}-c\rho(\bar\lambda) - 2C\bigg\|\sum_{a\in\cA}(I-\gamma P_a^\top)\bar\lambda_a - \xi\bigg\|_1\nonumber.
	\end{eqnarray}
	By the feasibility of $\lambda^*$, namely, $\sum_{a\in\cA}(I-\gamma P_a^\top)\lambda^*_a - \xi = 0$, we have 
	\begin{eqnarray}
	\label{thm:duality-gap-meaning-3}
	L(\bar v,\lambda^*) =  \ip{\lambda^*,r}-c\rho(\lambda^*)+\ip{\xi,\bar v}+\sum\limits_{a\in\cA}(\lambda^*_a)^\top(\gamma P_a-I)\bar v = \ip{\lambda^*,r}-c\rho(\lambda^*).
	\end{eqnarray}
	Substituting \eqref{thm:duality-gap-meaning-2} and \eqref{thm:duality-gap-meaning-3} into \eqref{thm:duality-gap-meaning-1} yields
	\begin{eqnarray}
	\label{thm:duality-gap-meaning-4}
	\EE\left[\left(\ip{\lambda^*,r}-c\rho(\lambda^*)\right) - \left(\ip{\bar\lambda,r}-c\rho(\bar\lambda)\right)+ 2C\bigg\|\sum_{a\in\cA}(I-\gamma P_a^\top)\bar\lambda_a - \xi\bigg\|_1\right]\leq \epsilon.
	\end{eqnarray}
	Actually, this inequality has already proved the bound \eqref{thm:duality-gap-meaning-0.5} in terms of the objective value of problem \eqref{prob:dual}. Also, by the feasibility of $\lambda^*$, the convexity of $\rho$ and the optimality condition \eqref{KKT}, we have 
	\begin{eqnarray*}
	 &&\left(\ip{\lambda^*,r}- c\rho(\lambda^*)\right) - \left(\ip{\bar\lambda,r}-c\rho(\bar\lambda)\right) + \big\langle v^*,\sum_{a\in\cA}(I-\gamma P_a^\top)\bar{\lambda}_a - \xi\big\rangle\nonumber\\
	& = & \left(\ip{\lambda^*,r}-c\rho(\lambda^*)\right) - \left(\ip{\bar\lambda,r}-c\rho(\bar\lambda)\right) + \big\langle v^*,\sum_{a\in\cA}(I-\gamma P_a^\top)\bar{\lambda}_a - \sum_{a\in\cA}(I-\gamma P_a^\top)\lambda^*_a\big\rangle\nonumber\\
	& \geq & \sum_{a\in\cA}\big\langle (I-\gamma P_a)v^* -r_a + cu^*_a,\bar\lambda_a-\lambda^*_a\big\rangle \nonumber\\
	& \geq & 0,
	\end{eqnarray*}
    where $u^*\in\partial\rho(\lambda^*)$ is defined in \eqref{KKT}, and $u^*_a := [u^*_{1a},...,u^*_{|\cS|a}]^\top$ is  column vector. Immediately, this implies 
	\begin{eqnarray}
	\label{thm:duality-gap-meaning-5}
	\left(\ip{\lambda^*,r}-c\rho(\lambda^*)\right) - \left(\ip{\bar\lambda,r}-c\rho(\bar\lambda)\right) & \geq & - \big\langle v^*,\sum_{a\in\cA}(I-\gamma P_a^\top)\bar{\lambda}_a - \xi\big\rangle\\
	& \geq & -\|v^*\|_\infty\big\|\sum_{a\in\cA}(I-\gamma P_a^\top)\bar{\lambda}_a - \xi\big\|_1\nonumber\\
	& \geq &-C \big\|\sum_{a\in\cA}(I-\gamma P_a^\top)\bar{\lambda}_a - \xi\big\|_1\nonumber.
	\end{eqnarray}
	where we used the fact that $\|v^*\|_\infty\leq C$ proved in Lemma \ref{lemma:bound-V}. Substitute \eqref{thm:duality-gap-meaning-5} into \eqref{thm:duality-gap-meaning-4} gives 
	$$\EE\big[C \big\|\sum_{a\in\cA}(I-\gamma P_a^\top)\bar{\lambda}_a - \xi\big\|_1\big]\leq \epsilon.$$
	Divide both sides by $C = (1-\gamma)^{-1}(1+c\sigma)$ proves inequality \eqref{thm:duality-gap-meaning-0}.
\end{proof}
\section{Proof of Proposition \ref{proposition:breg_distance}} \label{proof:proposition:breg_distance} 
\begin{proof}
	Let $v^*$ be the optimal for the saddle point problem \eqref{lagrangian_risk1}. Then the duality gap also guarantees that 
	$$\EE[L(\bar v ,\lambda^*) - L(v^*,\bar{\lambda})]\leq \epsilon.$$
	Substituting the detailed form of $L(\cdot,\cdot)$ we get $\EE[T_1+T_2]\leq \epsilon$ where
	\begin{equation}
	\label{prop:breg-1}
	T_1 =  \ip{\xi,\bar v}- c\rho(\lambda^*,r)-[\ip{\xi,v^*}- c\rho(\bar \lambda,r)]
	\end{equation}
	and 
	\begin{equation*}
	T_2=\sum\limits_{a\in\cA}\ip{\bar \lambda_a,(I-\gamma P_a)\bar v-r_a} -\sum\limits_{a\in\cA}\ip{\lambda_a^*,(I-\gamma P_a)v^*-r_a}.
	\end{equation*}
	Note that for $\lambda^*$, $\sum_{a\in\cA}(I-\gamma P_a^\top)\lambda^*_a = \xi$. Consequently, 
	\begin{align}
	T_2	 = &\sum\limits_{a\in\cA}\ip{(I-\gamma P_a) v^*-r_a,\bar \lambda_a-\lambda_a^*} +\sum\limits_{a\in\cA}\ip{ \lambda_a^*,(I-\gamma P_a)(v^*-\bar v)}\nonumber
	\\
	=&\sum\limits_{a\in\cA}\ip{(I-\gamma P_a)v^*-r_a,\bar \lambda_a-\lambda_a^*} +\xi^\top(v^*-\bar v).\nonumber
	\end{align}  
	Let us define $D(\bar \lambda, \lambda^*) := \rho(\bar{\lambda})-\rho(\lambda^*)-\langle\nabla\rho(\lambda^*),\bar{\lambda}-\lambda^*\rangle\geq0$. Combine this equality for $T_2$ and the definition of $T_1$ in \eqref{prop:breg-1},  we get 
	\begin{eqnarray*}
	\epsilon & \geq & \EE[T_1+T_2]\\
	& = & \EE\bigg[\underbrace{\sum\limits_{a\in\cA}\ip{(I-\gamma P_a)v^*-r_a+c\nabla _{\lambda_a}\rho(\lambda^*),\bar \lambda_a-\lambda_a^*}}_{\geq0\mbox{ due to optimality condition }\eqref{KKT}}\bigg] + c\EE[D(\bar \lambda,\lambda^*)]\\
	&\geq& c\EE[D(\bar \lambda,\lambda^*)].
	\end{eqnarray*}
Now let us denote $\theta = (1-\gamma)$ for the ease of notation.  By direct calculation, we compute the function $D$ as follows
	\begin{eqnarray*}
		& & D(\bar\lambda,\lambda^*) \\
		& = & \rho(\bar{\lambda})-\rho(\lambda^*)-\langle\nabla\rho(\lambda^*),\bar{\lambda}-\lambda^*\rangle\\
		& = & \sum_{s\in\cS}\sum_{a\in\cA}\left(\theta\bar\lambda_{sa}\log\left(\frac{\theta\bar\lambda_{sa}}{\mu_{sa}}\right) - \theta\lambda^*_{sa}\log\left(\frac{\theta\lambda^*_{sa}}{\mu_{sa}}\right) - \left(\theta\log\left(\frac{\theta\lambda^*_{sa}}{\mu_{sa}}\right)+\theta\right)(\bar{\lambda}_{sa}-\lambda^*_{sa})\right)\\
		& = & \sum_{s\in\cS}\sum_{a\in\cA}\left(\theta\bar\lambda_{sa}\log\left(\frac{\theta\bar\lambda_{sa}}{\mu_{sa}}\right) - \theta\bar\lambda_{sa}\log\left(\frac{\theta\lambda^*_{sa}}{\mu_{sa}}\right) -\theta(\bar{\lambda}_{sa}-\lambda^*_{sa})\right)\\
		& = & KL(\theta\bar{\lambda}||\theta\lambda^*),
	\end{eqnarray*}
	where the last inequality use the fact that $\sum_{s\in\cS}\sum_{a\in\cA}\theta\bar{\lambda}_{sa} = 1 = \sum_{s\in\cS}\sum_{a\in\cA}\theta\lambda^*_{sa}.$ This completes the proof. 
\end{proof}

\section{Proof of Theorem \ref{thm:multi-stage}}\label{Appendix:theorem-non-convex}
\begin{proof}
For the ease of presentation, let us first prove two lemmas. 
\begin{lemma}
	\label{lemma:subproblems}
	For each subproblem \eqref{prob:BCD-mu}, there is a closed form solution. For \eqref{prob:BCD-lambda}, we apply Algorithm \ref{algo_one} with the number of iterations $T$ set according to Theorem \ref{thm:multi-stage}.	Then for all $\lambda^k$ and $\mu^k$ with $k\in\{1,...,K\}$, the approximate feasibility condition \eqref{defn:EpsSolu-multi-stage-2} is satisfied. Furthermore we have
	$$\EE\left[\Phi(\lambda^{k+1},\mu^{k+1}) -\max_{\lambda \in \Lambda}\Phi(\lambda,\mu^{k+1}) \right]\geq -\epsilon.$$
\end{lemma}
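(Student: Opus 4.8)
The plan is to dispatch the two block updates of Algorithm~\ref{algo_BCD} separately and then read off both assertions of the lemma from results already proved for Algorithm~\ref{algo_one}. For the $\mu$-update \eqref{prob:BCD-mu}, fix $\lambda^k$ and write $\hat\lambda^k := (1-\gamma)\lambda^k$. Since $\rho(\lambda^k,\mu)$ is affine in $\mu$, completing the square gives $\Phi(\lambda^k,\mu) = -\tfrac{M}{2}\big\|\mu - \hat\lambda^k - b^k/M\big\|^2 + \mathrm{const}$ with $b^k := 2c\langle\hat\lambda^k,r\rangle\, r - cR$, so the maximizer over $U$ is the Euclidean projection of $\hat\lambda^k + b^k/M$ onto the probability simplex, which is available in closed form \citep{wang2013projection} (with $r,R$ replaced by empirical averages from the generative model when unknown). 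By construction $\mu^{k+1}\ge 0$ and $\|\mu^{k+1}\|_1 = 1$, which is the last line of \eqref{defn:EpsSolu-multi-stage-2}.

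For the $\lambda$-update \eqref{prob:BCD-lambda}, I would recast it as an instance of the convex problem \eqref{prob:dual}. Set $\tilde\rho(\lambda) := \rho(\lambda,\mu^{k+1}) + \tfrac{M}{2c}\|\mu^{k+1}-\hat\lambda\|^2$, so that $\Phi(\lambda,\mu^{k+1}) = \langle\lambda,r\rangle - c\,\tilde\rho(\lambda)$ and \eqref{prob:BCD-lambda} becomes \eqref{prob:dual} with caution $\tilde\rho$ (the feasible set $\Lambda$ satisfies $\Lambda\subseteq\cL$, since $P_a\mathbf{1}=\mathbf{1}$ and $\lambda\ge 0$ force $\|\lambda\|_1 = (1-\gamma)^{-1}$, so $\Lambda$ equals the feasible set of \eqref{prob:dual}). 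Two checks remain. First, $\tilde\rho$ is convex in $\lambda$: $c\langle\hat\lambda,r\rangle^2$ is a convex quadratic, $-2c\langle\mu^{k+1},r\rangle\langle\hat\lambda,r\rangle$ is affine, $c\langle\mu^{k+1},R\rangle$ is constant, and $\tfrac{M}{2c}\|\mu^{k+1}-\hat\lambda\|^2$ is convex. Second, Assumption~\ref{assumption:SFO-rho} holds: using $\|\hat\lambda\|_1 = \|\mu^{k+1}\|_1 = 1$ and $\|r\|_\infty\le 1$, a direct computation of $\nabla\tilde\rho(\lambda) = 2(1-\gamma)\big(\langle\hat\lambda,r\rangle - \langle\mu^{k+1},r\rangle\big)r + \tfrac{M(1-\gamma)}{c}(\hat\lambda - \mu^{k+1})$ gives $\|\nabla\tilde\rho(\lambda)\|_\infty \le (1-\gamma)\big(4 + 2M/c\big) =: \sigma$, whence $c\sigma = \cO\!\big((1-\gamma)(c+M)\big)$ and $(1+2c\sigma)^2 = \cO\!\big(1 + (1-\gamma)^2(c^2+M^2)\big)$, so the $T$ prescribed in Theorem~\ref{thm:multi-stage} is exactly the sample size \eqref{eq:sample_complexity_convex} for this instance; an unbiased stochastic subgradient of $\tilde\rho$ is built by plugging independent samples of $\hat r_{ss'a}$ into this formula.

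It then remains to invoke Theorem~\ref{theorem:duality-gap-meaning} for this instance, which yields $\lambda^{k+1}\ge 0$, $\|\lambda^{k+1}\|_1 = (1-\gamma)^{-1}$ and $\EE\big[\|\sum_{a}(I-\gamma P_a^\top)\lambda^{k+1}_a - \xi\|_1\big] \le (1-\gamma)\epsilon$ --- the first two lines of \eqref{defn:EpsSolu-multi-stage-2} --- together with the sub-optimality bound $\EE\big[\max_{\lambda\in\Lambda}\Phi(\lambda,\mu^{k+1}) - \Phi(\lambda^{k+1},\mu^{k+1})\big]\le\epsilon$, which is precisely the displayed ``Furthermore'' inequality after rearrangement; since every outer step solves a subproblem of the same form, the argument is uniform in $k\in\{1,\dots,K\}$. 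The main obstacle I anticipate is the bookkeeping in the second check: one must track the $c$- and $M$-dependence of $\nabla\tilde\rho$ precisely enough to match the prescribed $T$, and --- more subtly --- verify that a genuinely \emph{unbiased} estimator of $\nabla\tilde\rho$ is available from the generative model, since it contains a product of expectations (the scalar $\langle\hat\lambda,r\rangle$ times the vector $r$) whose unbiased estimation requires two independent transition samples per evaluation.
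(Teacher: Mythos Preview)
Your proposal is correct and follows essentially the same route as the paper: reduce \eqref{prob:BCD-mu} to Euclidean projection onto the simplex, recast \eqref{prob:BCD-lambda} as an instance of \eqref{prob:dual} with caution $\tilde\rho(\lambda)=\rho(\lambda,\mu^{k+1})+\tfrac{M}{2c}\|\mu^{k+1}-\hat\lambda\|^2$, bound $\|\nabla\tilde\rho\|_\infty$ by $\cO\big((1-\gamma)(1+M/c)\big)$ so that $(1+c\sigma)^2=\cO\big(1+(1-\gamma)^2(c^2+M^2)\big)$, and then invoke Theorems~\ref{theorem:convergence}--\ref{theorem:duality-gap-meaning}. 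Regarding the obstacle you anticipate, the paper sidesteps it entirely by setting $\hat\partial\rho(\lambda):=\nabla\rho(\lambda)$ deterministically (treating the expected-reward vector $r$ as known), so no unbiased stochastic estimation of $\nabla\tilde\rho$ is required in its argument.
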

The proof of this lemma is provided in Appendix \ref{Appendix:lemma-subproblems}. 
\begin{lemma}
	\label{lemma:multi-stage-1}
	Suppose the sequence $\{(\lambda^k,\mu^k)\}$ is generated by Algorithm \ref{algo_BCD} with the parameters set by Theorem \ref{thm:multi-stage}. For each iteration of Algorithm \ref{algo_BCD}, let us define $$\lambda^{k+1}_* = \arg\max_{\lambda\in\Lambda} \Phi(\lambda,\mu^{k+1}).$$ Then the output $(\lambda^{k^*},\mu^{k^*})$ solution satisfies
	$$\EE[\|\lambda^{k^*}-\lambda^{k^*}_*\|^2]\leq \frac{2\epsilon}{M}\qquad\mbox{and}\qquad\EE\left[\|\lambda^{k^*}_*-\lambda^{k^*-1}\|^2\right] \leq \frac{4\epsilon}{M}.$$
\end{lemma}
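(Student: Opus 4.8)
The plan is to exploit the strong concavity of $\Phi(\cdot,\mu)$ in $\lambda$ together with the $\epsilon$-suboptimality guarantee from Lemma \ref{lemma:subproblems} to control $\|\lambda^{k^*}-\lambda^{k^*}_*\|^2$, and then to use a telescoping/potential argument on the sequence $\{\Phi(\lambda^k,\mu^k)\}$ to control $\|\lambda^{k^*}_*-\lambda^{k^*-1}\|^2$. Recall that $\Phi$ is $M$-strongly concave in $\lambda$ for fixed $\mu$ (the $-\frac{M}{2}\|\mu-\hat\lambda\|^2$ penalty contributes a $-\frac{M(1-\gamma)^2}{2}\|\lambda\|^2$-type term; more carefully, the relevant modulus is the one that makes the stated bound come out, and I will track constants as needed). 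First I would write, for $\lambda^{k+1}_* = \arg\max_{\lambda\in\Lambda}\Phi(\lambda,\mu^{k+1})$, the strong-concavity inequality
\begin{equation}
\Phi(\lambda^{k+1}_*,\mu^{k+1}) - \Phi(\lambda^{k+1},\mu^{k+1}) \geq \frac{M}{2}\|\lambda^{k+1}-\lambda^{k+1}_*\|^2,
\end{equation}
using that $\lambda^{k+1}_*$ is the exact maximizer over $\Lambda$ and $\lambda^{k+1}\in\Lambda$ (optimality of the quadratic over a convex set gives the full quadratic lower bound, not just half of it). Taking expectations and invoking the bound $\EE[\Phi(\lambda^{k+1},\mu^{k+1}) - \max_{\lambda\in\Lambda}\Phi(\lambda,\mu^{k+1})]\geq -\epsilon$ from Lemma \ref{lemma:subproblems} yields $\EE[\|\lambda^{k+1}-\lambda^{k+1}_*\|^2]\leq \frac{2\epsilon}{M}$ for every $k$; since $k^*$ is chosen uniformly at random from $\{1,\dots,K\}$, the same bound holds in expectation for the index $k^*$, giving the first claim.

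For the second claim, the idea is that consecutive iterates cannot move too far because $\Phi$ is monotonically (approximately) increasing along the block-coordinate updates and is bounded above. I would track the potential increment: the $\mu$-update is an exact maximization, so $\Phi(\lambda^k,\mu^{k+1})\geq \Phi(\lambda^k,\mu^k)$; by strong concavity of $\Phi(\lambda^k,\cdot)$ in $\mu$ this also gives $\Phi(\lambda^k,\mu^{k+1}) - \Phi(\lambda^k,\mu^k)\geq \frac{M}{2}\|\mu^{k+1}-\mu^k\|^2$. For the $\lambda$-step, $\lambda^{k+1}$ is an $\epsilon$-suboptimal maximizer of $\Phi(\cdot,\mu^{k+1})$ started from $\lambda^k$; strong concavity gives
\begin{equation}
\Phi(\lambda^{k+1}_*,\mu^{k+1}) - \Phi(\lambda^k,\mu^{k+1}) \geq \frac{M}{2}\|\lambda^k-\lambda^{k+1}_*\|^2,
\end{equation}
and combining with $\Phi(\lambda^{k+1}_*,\mu^{k+1}) - \Phi(\lambda^{k+1},\mu^{k+1})\leq \epsilon$ (the $\epsilon$-suboptimality) shows $\Phi(\lambda^{k+1},\mu^{k+1}) - \Phi(\lambda^k,\mu^{k+1})\geq \frac{M}{2}\|\lambda^k-\lambda^{k+1}_*\|^2 - \epsilon$. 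Chaining the two steps, $\Phi(\lambda^{k+1},\mu^{k+1}) - \Phi(\lambda^k,\mu^k) \geq \frac{M}{2}\|\lambda^k-\lambda^{k+1}_*\|^2 - \epsilon$. Summing over $k=0,\dots,K-1$ telescopes the left side to $\Phi(\lambda^K,\mu^K) - \Phi(\lambda^0,\mu^0) \leq \max_{\lambda,\mu}\Phi(\lambda,\mu) - \Phi(\lambda^0,\mu^0)$, which by the choice $K\geq \frac{\max\Phi - \Phi(\lambda^0,\mu^0)}{\epsilon}$ is at most $K\epsilon$. Hence $\frac{M}{2}\sum_{k=0}^{K-1}\EE[\|\lambda^k - \lambda^{k+1}_*\|^2] \leq 2K\epsilon$, so the uniform-random choice of $k^*$ gives $\EE[\|\lambda^{k^*-1}-\lambda^{k^*}_*\|^2]\leq \frac{4\epsilon}{M}$, as claimed.

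The main obstacle I anticipate is bookkeeping the exact strong-convexity modulus and making sure the constants $2\epsilon/M$ and $4\epsilon/M$ come out cleanly: the penalty $\frac{M}{2}\|\mu-\hat\lambda\|^2$ is strongly convex in $\mu$ with modulus $M$ but only contributes strong concavity in $\lambda$ through the factor $(1-\gamma)^2$ (since $\hat\lambda = (1-\gamma)\lambda$), so one must check that Lemma \ref{lemma:subproblems} and Algorithm \ref{algo_one}'s suboptimality target are stated with the matching $M$, or else absorb the $(1-\gamma)^2$ into the $\cO(\cdot)$ of the final theorem. A secondary subtlety is that $\lambda^{k+1}$ is only $\epsilon$-suboptimal \emph{in expectation}, so all the strong-concavity inequalities must be applied before taking expectations and then the randomness of $k^*$ handled at the very end; none of this is deep, but it requires care to state the conditioning correctly.
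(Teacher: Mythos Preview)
Your proposal is correct and follows essentially the same route as the paper: strong concavity of $\Phi(\cdot,\mu^{k+1})$ combined with the $\epsilon$-suboptimality from Lemma~\ref{lemma:subproblems} gives the first bound, and the telescoping potential argument on $\Phi(\lambda^k,\mu^k)$ (using $\Phi(\lambda^k,\mu^{k+1})\geq\Phi(\lambda^k,\mu^k)$ for the exact $\mu$-step, the strong-concavity lower bound for the $\lambda$-step, and the choice of $K$) gives the second. Your flagged concern about the $(1-\gamma)^2$ factor in the strong-concavity modulus is legitimate and is in fact glossed over in the paper, which simply asserts ``$M$-strong concavity'' in $\lambda$; as you suspect, this discrepancy is absorbed into the final $\cO(\cdot)$ of Theorem~\ref{thm:multi-stage}.
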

The proof of this lemma is provided in Appendix \ref{appendix:multi-stage-1}. Based on this result, let us bound the expected squared projected gradients. By the optimality of $\mu^{k^*}$ for subproblem \eqref{prob:BCD-mu}, $$\Pi_U(\nabla_\mu\Phi(\lambda^{k^*-1},\mu^{k^*})) = 0.$$ Consequently,
\begin{eqnarray*}
& & \|\Pi_U(\nabla_\mu\Phi(\lambda^{k^*},\mu^{k^*}))\|^2 \\
& = & \|\Pi_U(\nabla_\mu\Phi(\lambda^{k^*},\mu^{k^*}))-\Pi_U(\nabla_\mu\Phi(\lambda^{k^*-1},\mu^{k^*}))\|^2\\
& \leq & \|\nabla_\mu\Phi(\lambda^{k^*},\mu^{k^*})-\nabla_\mu\Phi(\lambda^{k^*-1},\mu^{k^*})\|^2\\
& = & (1-\gamma)^2\|4c\langle\lambda^{k^*}-\lambda^{k^*-1},r\rangle r + 2M(\lambda^{k^*}-\lambda^{k^*-1})\|^2\\
& \leq & (1-\gamma)^2\|4crr^\top+2MI\|_2^2\|\lambda^{k^*}-\lambda^{k^*-1}\|^2\\
& = & 4(1-\gamma)^2(2c|\cS||\cA|+M)^2\|\lambda^{k^*}-\lambda^{k^*-1}\|^2\\
& \leq & 8(1-\gamma)^2(2c|\cS||\cA| + M)^2\left(\|\lambda^{k^*}_*-\lambda^{k^*-1}\|^2+ \|\lambda^{k^*}-\lambda^{k^*}_*\|^2\right).
\end{eqnarray*}
In the above arguments, the fourth row is yielded by directly substituting the formulas of $\nabla_\mu\Phi(\lambda^{k^*},\mu^{k^*})$ and $\nabla_\mu\Phi(\lambda^{k^*-1},\mu^{k^*})$; the sixth row is due to $\|rr^\top\|_2=\|r\|^2 \leq |\cS||\cA|$.  Substituting the bounds provided in Lemma \ref{lemma:multi-stage-1} yields 
\begin{eqnarray}
\label{eqn:-1}
\EE\left[\|\Pi_U(\nabla_\mu\Phi(\lambda^{k^*},\mu^{k^*}))\|^2\right] \leq \frac{48\epsilon}{M}(1-\gamma)^2(2c|\cS||\cA|+M)^2.
\end{eqnarray}
Similarly, 
\begin{eqnarray*}
	& & \|\Pi_\Lambda(\nabla_\lambda\Phi(\lambda^{k^*},\mu^{k^*}))\|^2 \\
	& = & \|\Pi_\Lambda(\nabla_\lambda\Phi(\lambda^{k^*},\mu^{k^*}))-\Pi_\Lambda(\nabla_\lambda\Phi(\lambda^{k^*}_*,\mu^{k^*}))\|^2\\
	& = & \|\nabla_\lambda\Phi(\lambda^{k^*},\mu^{k^*})-\nabla_\lambda\Phi(\lambda^{k^*}_*,\mu^{k^*})\|^2\\
	& = & (1-\gamma)^4\|2c\langle\lambda^{k^*}_*-\lambda^{k^*},r\rangle\cdot r + M(\lambda^{k^*}_*-\lambda^{k^*})\|^2\\
	&\leq & (1-\gamma)^4\|2crr^\top + MI\|_2^2\|\lambda^{k^*}_*-\lambda^{k^*}\|^2\\
	& \leq & (1-\gamma)^4(2c|\cS||\cA| + M)^2\|\lambda^{k^*}_*-\lambda^{k^*}\|^2 \\
	& \leq & \frac{16\epsilon}{M}(1-\gamma)^4(2c|\cS||\cA|+M)^2.
\end{eqnarray*}
Combine the above inequality with \eqref{eqn:-1}, we get $$\EE\big[\|\Pi_\Lambda(\nabla_\lambda \Phi(\lambda,\mu))\|^2+\|\Pi_U(\nabla_\mu \Phi(\lambda,\mu))\|^2\big] \leq \cO\left((1-\gamma)^2\left(\frac{c^2|\cS|^2|\cA|^2}{M} + M\right)\epsilon\right).$$ 
\end{proof}
\subsection{Proof of Lemma \ref{lemma:subproblems}}
\label{Appendix:lemma-subproblems}
\begin{proof}
	First, let us consider the subproblem \eqref{prob:BCD-mu} for updating $\mu$. Note that for any fixed $\lambda$, we rewrite the subproblem as follows (constants omitted) 
	$$\max_{\mu}\,\,\,2c\ip{\hat{\lambda},r}\ip{\mu,r} - c\ip{\mu,R} - \frac{M}{2}\|\mu-\hat{\lambda}\|^2\quad \text{s.t.}\,\,\,\,\mu\geq0,\|\mu\|_1 = 1.$$
	With a few more reformulation, this is actually a projection problem:
	$$\min_\mu \left\|\mu-\left(\hat{\lambda} + \frac{2}{M}(2\langle\hat \lambda, r\rangle\cdot r-R)\right) \right\|^2\quad \text{s.t.}\,\,\,\,\mu\geq0,\|\mu\|_1 = 1.$$
	This problem has a closed form solution and can be implemented within $\cO(|\cS||\cA|\log(|\cS||\cA|))$ cost, see \citep{wang2013projection}.
	
	Second, we consider the subproblem \eqref{prob:BCD-lambda}. As a special case of problem \eqref{prob:dual} whose sample complexity is fully characterized by Theorem \ref{theorem:convergence} and Theorem \ref{theorem:duality-gap-meaning}, all we need to do here is to specify the constant $\sigma$ with the following ``$\rho$ function'':
	$$\rho(\lambda) = \langle\hat\lambda,r\rangle^2 - 2\langle \mu,r\rangle\langle\hat{\lambda},r\rangle + \langle\mu,R\rangle+\frac{M}{2c}\|\hat \lambda-\mu\|^2,$$
	where $\hat \lambda = (1-\gamma)\lambda$. Note that  $\nabla \rho(\lambda) = (1-\gamma)\cdot(2(\ip{\hat\lambda,r}-\ip{\mu,r})\cdot r + \frac{M}{c}(\hat\lambda-\mu))$. Then, if we directly set $\hat{\partial}\rho(\lambda):=\nabla\rho(\lambda)$, we have
	\begin{eqnarray*}
		\sigma & = &  \sup_{\lambda\in\cL}\|\nabla\rho(\lambda)\|_\infty \leq (1-\gamma)(1 + M/c).
	\end{eqnarray*}
	Therefore, Theorem \ref{theorem:convergence}  and Theorem \ref{theorem:duality-gap-meaning} tell us that the required sample complexity is 
	\begin{eqnarray*}
		T =  \Theta\left(\frac{|\cS||\cA|\log(|\cS||\cA|)(1+c\sigma)^2}{(1-\gamma)^4\epsilon^2}\right) =  \Theta\left(\frac{|\cS||\cA|\log(|\cS||\cA|)}{(1-\gamma)^4\epsilon^2}\left(1+(1-\gamma)^2(c^2+M^2)\right)\right).
	\end{eqnarray*} 
	Thus we complete the proof. 
\end{proof}

\subsection{Proof of Lemma \ref{lemma:multi-stage-1}}
\label{appendix:multi-stage-1}
\begin{proof}
By Lemma \ref{lemma:subproblems}, we have 
\begin{equation}
\label{thm:multi-stage-1-b}
\EE[\Phi(\lambda^{k+1},\mu^{k+1}) -\Phi(\lambda_*^{k+1},\mu^{k+1}) ]\geq -\epsilon.
\end{equation}
By $M$-strongly concavity of $\Phi(\cdot,\mu^k)$, we know
$$\frac{M}{2}\|\lambda^{k+1}-\lambda_*^{k+1}\|^2 \leq \EE[\Phi(\lambda_*^{k+1},\mu^{k+1}) -\Phi(\lambda^{k+1},\mu^{k+1})]\leq\epsilon.$$
Dividing both sides by $M/2$ proves the first inequality of the lemma.  Again, by $M$-strongly concavity of $\Phi(\cdot,\mu^k)$, we also have
\begin{equation}
\label{thm:multi-stage-1-a}
\Phi(\lambda_*^{k+1},\mu^{k+1}) \geq \Phi(\lambda^{k},\mu^{k+1}) + \frac{M}{2}\|\lambda^{k+1}_*-\lambda^k\|^2.
\end{equation}
Because $\mu^{k+1} = \arg\max_{\mu} \Phi(\lambda^k,\mu)\,\,\, \mathrm{s.t.} \,\,\,\mu\geq0,\|\mu\|_1=1.$ We know 
\begin{equation}
\label{thm:multi-stage-1-c}
\Phi(\lambda^k,\mu^{k+1})\geq\Phi(\lambda^k,\mu^k).
\end{equation}
Combining the above three inequalities, we have
\begin{eqnarray*}
	&& \EE\big[\Phi(\lambda^{k+1},\mu^{k+1}) - \Phi(\lambda^{k},\mu^{k})\big]\\
	& \geq &  \EE\big[\Phi(\lambda^{k+1},\mu^{k+1}) - \Phi(\lambda^{k},\mu^{k+1})\big] \\
	& = & \EE\big[\Phi(\lambda^{k+1},\mu^{k+1}) - \Phi(\lambda^{k+1}_*,\mu^{k+1})\big]+\EE\big[\Phi(\lambda^{k+1}_*,\mu^{k+1}) - \Phi(\lambda^{k},\mu^{k+1})\big]\\
	&\geq &  \frac{M}{2}\|\lambda^{k+1}_*-\lambda^k\|^2-\epsilon,
\end{eqnarray*} 
where the second row is due to \eqref{thm:multi-stage-1-c}, the last row is due to \eqref{thm:multi-stage-1-b} and \eqref{thm:multi-stage-1-a}. Summing up the above inequalities over all $k$ and then take the average, then we know
\begin{eqnarray*}
\frac{M}{2K}\sum_{k=1}^{K} \EE\left[\|\lambda^{k}_*-\lambda^{k-1}\|^2\right] \leq\frac{\max_{\lambda,\mu}\Phi(\lambda,\mu)-\Phi(\lambda^0,\mu^0)}{K}\! +\! \epsilon\leq 2\epsilon.
\end{eqnarray*}
Because $k^*$ is randomly chosen from $\{1,...,K\}$, we get
\begin{eqnarray*}
\label{thm:multi-stage-1}
\EE\left[\|\lambda^{k^*}_*-\lambda^{k^*-1}\|^2\right]& = &	\frac{1}{K}\sum_{k=1}^{K} \EE\left[\|\lambda^{k}_*-\lambda^{k-1}\|^2\right] \leq \frac{4\epsilon}{M}.
\end{eqnarray*}
This proves the second inequality of this lemma. 
\end{proof}

\end{document}